\newcommand{\sgn}[1]{\operatorname{sgn}\{#1\}}
\newcommand{\tcref}[1]{\text{\cref{#1}}}
\DeclarePairedDelimiterX\Set[1]\lbrace\rbrace{\def\given{\;\delimsize\vert\;\allowbreak}#1}
\newtheorem{theorem}{\textbf{Theorem}}[section]
\newtheorem{proposition}{\textbf{Proposition}}[section]
\newtheorem{lemma}{\textbf{Lemma}}[theorem]
\newtheorem*{remark}{\textbf{Remark}}
\newtheoremstyle{red}{}{}{\color{red}}{}{\color{red}\bfseries}{:}{ }{}
\theoremstyle{red}
\crefname{assumptions}{assumptions}{assumptions}
\Crefname{assumptions}{Assumptions}{Assumptions}
\crefname{conjecture}{conjecture}{conjectures}
\Crefname{conjecture}{Conjecture}{Conjectures}
\newcommand{\R}{\mathbb{R}}
\newcommand{\N}{\mathbb{N}}
\DeclareMathOperator*{\argmin}{arg\,min}
\DeclareMathOperator*{\dom}{dom}
\DeclareMathOperator*{\diam}{diam}
\DeclareMathOperator*{\supp}{supp}
\newcommand{\HS}{\mathcal{H}}
\newcommand{\MC}{\mathcal{M}}
\newcommand{\PC}{\mathcal{P}}
\newcommand{\B}{\mathcal{B}}
\newcommand{\X}{\mathcal{X}}
\newcommand{\U}{\mathcal{U}}
\newcommand{\RC}{\mathcal{R}}
\numberwithin{equation}{section}
\newrobustcmd*{\parentexttrack}[1]{%
  \begingroup
  \blx@blxinit
  \blx@setsfcodes
  \blx@bibopenparen#1\blx@bibcloseparen
  \endgroup}
\title{Learning a Sparse Representation of Barron Functions with the Inverse Scale Space Flow}
\author[1,*]{Tjeerd Jan Heeringa}
\author[2]{Tim Roith}
\author[1]{Christoph Brune}
\author[2,3]{Martin Burger}
\affil[1]{Mathematics of Imaging \& AI, University of Twente, Enschede, The Netherlands}
\affil[2]{Helmholtz Imaging, Deutsches Elektronen-Synchrotron DESY, Notkestr. 85, 22607 Hamburg, Germany}
\affil[3]{Fachbereich Mathematik, Universit\"at Hamburg, Bundesstr. 55, 20146 Hamburg, Germany}
\affil[*]{Corresponding author: t.j.heeringa@utwente.nl}
\date{2023-12-05}
\let\blx@rerun@biber\relax
\begin{document}
\maketitle

\begin{abstract}
\noindent This paper presents a method for finding a sparse representation of Barron functions. Specifically, given an $L^2$ function $f$, the inverse scale space flow is used to find a sparse measure $\mu$ minimising the $L^2$ loss between the Barron function associated to the measure $\mu$ and the function $f$. The convergence properties of this method are analysed in an ideal setting and in the cases of measurement noise and sampling bias. In an ideal setting the objective decreases strictly monotone in time to a minimizer with $\mathcal{O}(1/t)$, and in the case of measurement noise or sampling bias the optimum is achieved up to a multiplicative or additive constant. This convergence is preserved on discretization of the parameter space, and the minimizers on increasingly fine discretizations converge to the optimum on the full parameter space.
\\ \\
\textbf{keywords: }Barron Space, Bregman Iterations, Sparse Neural Networks, Inverse Scale Space, Optimization
\end{abstract}

\section{Introduction}
Most neural networks contain a subnetwork with fewer parameters that performs equally well \parencite{ramanujan_whats_2020}, and some of these subnetworks have been found to generalise equally or even better than their dense counterparts \parencites{liu_intrinsically_2019}{liu_sparse_2021}. However, it is a priori hard to determine which parameters of the network will be part of the subnetwork. Hence, various approaches have been developed for finding well performing sparse neural network. They fall roughly in three categories. The first is to add a term to the loss or regularizer that promotes sparsity. An example of this would be LASSO, in which a $\ell^1$ regularizer is added \parencite{tibshirani_regression_1996}. The second is to train a network first and prune it afterwards, meaning weights are reduced with as little as possible influence on the performance \parencite{molchanov_pruning_2017}. The third is to start with a sparse architecture, and add or remove neurons during training \parencite{dai_nest_2018}.

One of the methods, which starts from a sparse architecture, is based on the Bregman iteration \parencite{osher_iterative_2005}. This method has been introduced and thoroughly analysed for imaging and compressed sensing \parencites{burger_error_2007}{yin_bregman_2008}{burger_adaptive_2012}. The method works in these settings by progressively adding more detail to the reconstructed images and signals, respectively. 
A limitation of the original method is that it requires that often requires the problem to be convex. However, adaptations of the method, e.g., the linearized variant in \cites{benning_choose_2021}{bungert_bregman_2021}, where the loss is replaced by a first order approximation, allows for a successful application to neural networks. A major success of this method is that it is able to find an auto-encoder without ever explicitly defining an auto-encoder like architecture \parencite{bungert_neural_2021}. This shows that it has major potential for automatic neural network architecture design tasks. 

\subsection{Related work}
Bregman iterations were introduced in \cite{osher_iterative_2005} and further developed and analysed in \cites{yin_bregman_2008}{bachmayr2009iterative}{cai2009linearized}{cai2009convergence}{yin2010analysis}{burger_error_2007}{burger_adaptive_2012}{benning2018modern} as an algorithm to solve sparsity promoting regularisation tasks in computer vision. Linearized Bregman iterations as introduced in \cites{cai2009linearized}{yin_bregman_2008} can be seen as a generalization of the mirror descent algorithm \parencites{nesterov1983method}{beck2003mirror} to the non-differentiable, convex case. More recently, variants of the original algorithm have been applied in the context of machine learning, see, e.g., \parencites{bungert_bregman_2021}{ bungert_neural_2021}{wang2023lifted}{wang2023lifted2}.

Bregman iterations are the implicit Euler discretization of an inverse scale space flow. Going to the continuous limit has helped to find easy implementations for relatively complex functionals like the total variation functional, and has helped to obtain well-justified and simple stopping criteria \parencite{burger_nonlinear_2006}. In the finite-dimensional case of sparse regularization (and further generalizations) an exact time discretization can be found, which leads to efficient methods \parencites{burger_adaptive_2012}{moeller2013multiscale}.
We refer to \cite{Benning_Burger_2018} for recent overview.

Similar to inverse scale space flow being the continuous limit of the Bregman iterations, we have that the Barron spaces are the continuous limit of shallow neural network. It was proven that Barron functions have bounded point evaluations \parencites{bartolucci_understanding_2023}{spek_duality_2023}, Barron functions can be approximated in $L^p$ with rate $O(m^{-1/p})$ \parencite{e_representation_2022}, Barron spaces have a representer theorem \parencite{parhi_banach_2021} and that Barron spaces are a kind of integral \textit{reproducing kernel Banach spaces} (RKBS), a Banach space analogue to \textit{reproducing kernel Hilbert spaces} (RKHS) \parencite{bartolucci_understanding_2023}. The spaces are parametrized by the activation function of the networks. The Barron spaces associated to most of the commonly used non-periodic activation are embedded in the Barron space with ReLU as activation function \parencite{heeringa_embeddings_2023}. This Barron space together with the Barron spaces associated to the RePU, the higher-order generalization of the ReLU, are strongly related to BV spaces \parencites{e_representation_2022}{parhi_banach_2021}.

A fundamental open question in machine learning is how to find the best function representing your data. For Barron spaces, this means finding the best measure $\mu$ representing the Barron function $f$. Since the relation between $\mu$ and $f$ is linear, this leads to a convex minimization problem. Based on an alternative representation of Barron functions in probability space, the authors in \cite{wojtowytsch_convergence_2020} formulated a Wasserstein gradient flow for this problem based on the ideas of \cite{chizat_global_2018}. Under several assumptions, including omnidirectional initial conditions and satisfying the Morse--Sard property, this leads to a unique solution $\pi$ \parencite{wojtowytsch_convergence_2020}. However, not all Barron functions satisfy the Morse--Sard property, placing a limit on the functions that can be represented with this approach \parencite{wojtowytsch_convergence_2020}. Although this unique solution $\pi$ represents the Barron function $f$, it is not necessarily the probability measure for $f$ with the smallest semi-norm. In order to find sparse neural networks, there is a need for a method that minimizes this semi-norm as well. 

\subsection{Our contribution}
In this work, we study the convergence and error analysis of finding the smallest measure $\mu$ such that the Barron function $K\mu$ is close to $f$ using the inverse scale space. This is the continuous and infinite dimensional version of finding a sparse shallow neural network approximating samples of $f$. 

In particular, we consider the minimisation problem
\begin{subequations}\label{eq:contribution_minimisation}
\begin{align}
    \mu^{\text{opt}} = &\argmin_{\mu^\dagger\in\MC(\Omega)}J(\mu^\dagger) \\
    \text{s.t. }&\mu^\dagger\in\argmin_{\mu\in\MC(\Omega)}\frac{1}{2}\norm{f-K\mu}^2_{L^2(\rho)}
\end{align}
\end{subequations}
where $J$ encodes the Barron norm and acts as regularizer and $L_\rho$ is the adjoint of $K$. In \cref{sec:main} we define these operators more rigorously, and show that the associated inverse scale space is given by
\begin{subequations}\label{eq:inverse_scale_space_contribution}
    \begin{align}
        \mu_t &= \argmin_{u\in \partial J^\ast(p_t)} \RC_f(\mu) & u_0 = 0,\\
        \partial_tp_t &= L_{\rho}(f-K\mu_t) & p_0 = 0.
    \end{align}
\end{subequations}
The data function $f$ and the data distribution $\rho$ are instance dependent, and the convergence behaviour and the error analysis of \cref{eq:inverse_scale_space_contribution} are dependent on these. In machine learning, measurements of $f$ are noisy and the data sets always have a bias. Furthermore, computers are discrete beings. Hence, we analyse \cref{eq:inverse_scale_space_contribution} in the following four cases:
\begin{enumerate}
    \item Noiseless and unbiased case; we have access to $f$ and sample from $\rho$.
    \item Noisy case; we have access to $f^\delta$ with measurement noise instead to $f$, but we still want to find to minimizer for $f$.
    \item Biased case; we sample from $\rho^\varepsilon$ with a sampling bias instead of from $\rho$, but we still want to find the minimizer for $\rho$.
    \item Discretized case; the parameter space $\Omega$ is discretized and no longer continuous.
\end{enumerate}
The first shows how well \cref{eq:inverse_scale_space_contribution} can be when we manage to reduce noise and sampling bias to a minimum. The second shows how the methods deals with noise on the data function $f$. The third provides a novel perspective on learning methods. It shows how well the method deals with a bias in the sampling. In machine learning there is a large focus on computing the generalisation error of a method, i.e. how large is the error you make when you solve \cref{eq:contribution_minimisation} with only $n$ samples of $\rho$ relative to using $\rho$ in its entirety. This is one way of having a bias in the sampling. Another bias that one could have as the goal to classify animals based on images to determine whether they are suitable pets, but one has no images of fish. Our method captures both of these biases in one go. The last shows that the method behaves nicely when the parameter space $\Omega$ is discretized.

We show in \cref{sec:main} that the \cref{eq:inverse_scale_space_contribution} is well-defined and determine its optimality conditions. After that we discuss the aforementioned four cases in sections \ref{sec:ideal} to \ref{sec:discretisation} respectively.

\subsection{Background information}
This section provides the relevant background information needed of Barron spaces and Bregman iterations. 

\subsubsection{Barron spaces}
Fix $d\in\N$ and $\sigma$ as an element of $\mathcal{C}^{0,1}(\R)$ or the $\operatorname{ReLU}$ activation function $\max(0,x)$. Let $\X\subseteq\R^d$ and $\Omega\subseteq \R^d\cross\R$. Consider a probability measure $\rho\in \PC(\X)$, and define
\begin{equation}\label{eq:barron_operator}
    K\mu(x)= \int_\Omega \sigma(a^\intercal x +b)d\mu(a,b).
\end{equation}  
for $\mu \in \MC(\Omega)$. \textit{Barron space} $\B_\sigma$ is the Banach space with functions of the form $f=K\mu$ for some $\mu\in\MC(\Omega)$ and 
\begin{equation}
    \norm{f}_{\B_\sigma} = \begin{cases}
    \inf_{K\mu=f}\int_\Omega(1+\norm{w}+\abs{b})d\abs{\mu}(w,b) & \sigma\in \mathcal{C}^{0,1}(\R)\\
    \inf_{K\mu=f}\int_\Omega(\norm{w}+\abs{b})d\abs{\mu}(w,b) & \sigma(x)=\operatorname{ReLU}(x)
    \end{cases}
\end{equation}
The functions in Barron space can be seen as infinitely wide or continuous versions of shallow neural networks
\begin{equation}
    f: \X\to\R, \;x \mapsto \sum_{i=1}^m c_i \sigma(a_i^\intercal x + b_i)
\end{equation}
with $c_i\in \R$ and $(a_i,b_i)\in\Omega$ \parencite{e_barron_2021}. Two embeddings are relevant for this work. They show that Barron functions are nice enough to enable proper convergence.

\begin{proposition}[Barron is Lipschitz; \cite{e_representation_2020}, theorem 3.3]\label{prop:barron_lipschitz_embedding}
If $\rho\in \PC_1(\X)$ is a probability measure with finite first moments, then we have $Lip(f)\leq Lip(\sigma)\norm{f}_{\B_\sigma}$ for every $f\in \B_\sigma$.
\end{proposition}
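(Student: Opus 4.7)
The plan is a direct computation, pushing the Lipschitz estimate for $\sigma$ through the defining integral of $K$. Fix an arbitrary representing measure $\mu\in\MC(\Omega)$ with $K\mu=f$ and two points $x,y\in\X$. Using \cref{eq:barron_operator} and the integrand-level Lipschitz estimate $\abs{\sigma(a^\intercal x+b)-\sigma(a^\intercal y+b)}\leq \operatorname{Lip}(\sigma)\abs{a^\intercal(x-y)}$, followed by Cauchy--Schwarz on $\R^d$ to get $\abs{a^\intercal(x-y)}\leq \norm{a}\,\norm{x-y}$, I would bound
\begin{equation*}
\abs{f(x)-f(y)}\;\leq\;\int_\Omega \operatorname{Lip}(\sigma)\,\norm{a}\,\norm{x-y}\;d\abs{\mu}(a,b).
\end{equation*}

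Pulling the constants out of the integral leaves $\operatorname{Lip}(\sigma)\norm{x-y}\int_\Omega\norm{a}\,d\abs{\mu}$. I then observe that $\norm{a}\leq 1+\norm{a}+\abs{b}$ in the Lipschitz-$\sigma$ case, and $\norm{a}\leq \norm{a}+\abs{b}$ in the ReLU case, so in either setting the remaining integral is dominated by precisely the integrand appearing in the definition of $\norm{\cdot}_{\B_\sigma}$. Dividing by $\norm{x-y}$, taking the supremum over all $x\neq y\in\X$, and finally the infimum over all $\mu$ with $K\mu=f$ then yields $\operatorname{Lip}(f)\leq \operatorname{Lip}(\sigma)\norm{f}_{\B_\sigma}$.

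The inequality chain itself is elementary; the only real subtlety is making sure each step applies to a genuine pointwise representative of $f$ rather than an $L^2(\rho)$ equivalence class, since $\operatorname{Lip}(f)$ is a pointwise object. The hypothesis $\rho\in\PC_1(\X)$ is what enforces, through the finiteness of $\int(1+\norm{a}+\abs{b})\,d\abs{\mu}$, that $K\mu$ evaluates finitely at every $x\in\X$, and the dominated convergence argument implicit in the integrand estimate shows that this pointwise version is continuous. Since the bound above is obtained for any representing $\mu$ \emph{before} passing to the infimum, the estimate descends correctly to the Barron norm, and I do not anticipate any further technical obstacle.
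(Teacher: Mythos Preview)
The paper does not supply its own proof of this proposition; it is stated as background with a citation to \cite{e_representation_2020}, Theorem 3.3. Your argument is the standard one and is correct: push the Lipschitz bound for $\sigma$ through the defining integral, dominate $\norm{a}$ by the weight appearing in the Barron norm, and take the infimum over representing measures.

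One small remark on your closing paragraph: the hypothesis $\rho\in\PC_1(\X)$ is not what guarantees that $K\mu(x)$ is finite for every $x$. That pointwise finiteness already follows from $\int_\Omega(1+\norm{a}+\abs{b})\,d\abs{\mu}<\infty$ combined with the at-most-linear growth of $\sigma$, for each fixed $x\in\X$; the measure $\rho$ does not enter that step. In this statement the condition on $\rho$ is essentially inert---it is carried over from the ambient setup of the cited reference, where moments of $\rho$ govern the $L^p$ embeddings of \cref{prop:barron_Lp_embedding}---and your chain of inequalities goes through without ever invoking it. This misattribution does not affect the validity of the argument.
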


\begin{proposition}[Barron $L^p$ embedding; \cite{e_representation_2020}, theorem 3.7]\label{prop:barron_Lp_embedding}
If $\rho\in \PC_q(\X)$ is a probability measure with finite $q^{\text{th}}$ moments, then $\B_\sigma \hookrightarrow L^p(\X,\rho)$ for all $1\leq p\leq q$.
\end{proposition}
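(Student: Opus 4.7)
The plan is to produce a pointwise estimate $|f(x)| \le C(1+\|x\|)\,\|f\|_{\B_\sigma}$ and then integrate, using the finite $q$-th moment of $\rho$ to control the resulting integral in $L^p$ for $p\le q$. Since $\B_\sigma \hookrightarrow L^p(\X,\rho)$ just means continuity of the inclusion, a bound of the form $\|f\|_{L^p(\X,\rho)} \le C'\|f\|_{\B_\sigma}$ suffices.

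First I would bound the activation. For $\sigma\in\mathcal{C}^{0,1}(\R)$, write $|\sigma(a^\intercal x + b)| \le |\sigma(0)| + \mathrm{Lip}(\sigma)(\|a\|\|x\|+|b|)$, so there exists $C_\sigma$ with
\begin{equation*}
    |\sigma(a^\intercal x + b)| \le C_\sigma (1+\|a\|+|b|)(1+\|x\|).
\end{equation*}
For $\sigma=\mathrm{ReLU}$, use $|\sigma(a^\intercal x +b)| \le |a^\intercal x + b| \le (\|a\|+|b|)(1+\|x\|)$. In either case, for any $\mu\in\MC(\Omega)$ with $K\mu=f$,
\begin{equation*}
    |f(x)| \le \int_\Omega |\sigma(a^\intercal x + b)|\,d|\mu|(a,b) \le C_\sigma (1+\|x\|) \int_\Omega (c+\|a\|+|b|)\,d|\mu|(a,b),
\end{equation*}
with $c=1$ in the Lipschitz case and $c=0$ in the ReLU case. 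Taking the infimum over $\mu$ with $K\mu=f$ yields the pointwise estimate $|f(x)| \le C_\sigma(1+\|x\|)\,\|f\|_{\B_\sigma}$.

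Next I would raise to the $p$-th power, integrate against $\rho$, and invoke convexity of $t\mapsto t^p$ together with Jensen's inequality:
\begin{equation*}
    \|f\|_{L^p(\X,\rho)}^p \le C_\sigma^p \|f\|_{\B_\sigma}^p \int_\X (1+\|x\|)^p\,d\rho(x) \le C_\sigma^p\, 2^{p-1}\|f\|_{\B_\sigma}^p \left(1 + \int_\X \|x\|^p\,d\rho(x)\right).
\end{equation*}
Since $p\le q$ and $\rho$ is a probability measure with finite $q$-th moments, Jensen gives $\int \|x\|^p\,d\rho \le (\int \|x\|^q\,d\rho)^{p/q} < \infty$, so the right-hand side is a finite constant times $\|f\|_{\B_\sigma}^p$. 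Taking $p$-th roots produces the claimed continuous embedding.

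The only mildly delicate step is the passage from the pointwise bound obtained from an arbitrary representing measure $\mu$ to a bound involving the Barron norm; this is where the infimum in the definition of $\|\cdot\|_{\B_\sigma}$ is used, but it is routine since the pointwise inequality is linear in the total-variation integral of $(c+\|a\|+|b|)$ against $|\mu|$. Everything else is an application of Jensen's inequality, so no substantial obstacle is expected.
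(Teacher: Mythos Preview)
Your argument is correct and is the standard route to this embedding. The paper does not supply its own proof of this proposition; it is stated in the background subsection as a known result and attributed to \cite{e_representation_2020}, Theorem~3.7, so there is no in-paper proof to compare against. Your pointwise bound $|f(x)|\le C_\sigma(1+\|x\|)\|f\|_{\B_\sigma}$ followed by the moment estimate via Jensen is exactly how such embeddings are typically established, and each step is sound.
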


\subsubsection{Bregman iterations}
Let $\HS$ be some Banach space, $\U$ be a (closed subset of a) thereof, $f\in\HS$, $J: \U\to\R$ be convex, lower semi-continuous and coercive, and $\RC_f: \U\to \R$ be convex, bounded from below and Fréchet differentiable. The Bregman divergence\footnote{The Bregman divergence is often called the Bregman distance, but it is in general neither symmetric nor does it satisfy the triangle inequality.} between $u,v\in\HS$ for $p\in \partial J(v)$ is given by
\begin{equation}
    D^p_J(u,v) = J(u)-J(v) - \braket{p}{u-v}.
\end{equation}
The Bregman iterations 
\begin{equation}\label{eq:bregman_iterations}
    \begin{aligned}
        u_k &= \argmin_{u\in\U} D^{p_{k-1}}_J(u,u_{k-1})+\lambda\RC_f(u) & u_0 = 0\\
        p_k &= p_{k-1} - \lambda\partial_u \RC_f(u_k) & p_0 = 0, p_k\in\partial J(u_k)
    \end{aligned}
\end{equation}
with design parameter $\lambda>0$ are an iterative $5$-approximation algorithm for the bilevel minimization problem
\begin{equation}
\begin{aligned}
    u^\dagger \in &\argmin_{u\in\U}J(u) \\
    \text{s.t. }&u\in\argmin_{\Bar{u}\in\U}\RC_f(\Bar{u}).
\end{aligned}
\end{equation}
The Bregman iterations converge monotonically to the optimal solution with worst case $O(\frac{1}{k})$ convergence \parencite{burger_error_2007}. 

The inverse scale space flow can be derived from \cref{eq:bregman_iterations} by taking the limit of $\lambda \searrow 0$. Before taking the limit, observe that \cref{eq:bregman_iterations} is equivalent to 
\begin{subequations}\label{eq:bregman_iterations_regularized}
    \begin{align}
        u_k &= \argmin_{u\in\U\cap \partial J^\ast(p_k)} \frac{1}{\lambda}\bigg(J(u)-\braket{p_{k-1}}{u}\bigg)+\RC_f(u) & u_0 = 0\label{eq:bregman_iterations_regularized_a}\\
        \frac{p_k-p_{k-1}}{\lambda} &=  - \partial_u\RC_f(u_k) & p_0 =0 \label{eq:bregman_iterations_regularized_b}
    \end{align}
\end{subequations}
Note, that usually \cref{eq:bregman_iterations_regularized_b} has the subgradient constraint $ p_k\in \partial J(\mu_k)$ instead of \cref{eq:bregman_iterations_regularized_a} having $\partial J^\ast(p_k)$ as additional constraint. These two ways of writing the constraint are equivalent by Fenchel duality. In the limit of $\lambda \searrow 0$, \cref{eq:bregman_iterations_regularized_b} can be seen as the Euler discretization of the flow equation
\begin{equation}
    \partial_tp_t = -\partial_u\RC_f(u_t), \quad p_0=0,
\end{equation}
and \cref{eq:bregman_iterations_regularized_a} will find a $u_k$ minimizing $\RC_f(u)$ whilst enforcing that $p_t\in \partial J(u_t)$ or equivalently $u_t\in \partial J^\ast(p_t)$ \parencite{burger_nonlinear_2006}. The inverse scale space is exactly this limit of $\lambda \searrow 0$ of the Bregman iterations, i.e. the dynamical process given by
\begin{subequations}
\label{eq:inverse_scale_space_final}
    \begin{align}
        u_t &= \argmin_{u\in\U \cap \partial J^\ast(p_t)} \RC_f(u) & u_0 = 0,\\
        \partial_tp_t &= -\partial_u\RC_f(u_t) & p_0 = 0.
    \end{align}
\end{subequations}

\subsection{Notation and definitions}
Let $\R$ denote the real numbers, and $\N$ denote the natural numbers without $0$. The space of all Radon measures---regular, signed Borel measures with bounded total variation---on a locally compact Hausdorff $\Omega$ is denoted by $\MC(\Omega)$. It is a Banach space with the norm
\begin{equation*}
    \norm{\mu}_{\MC(\Omega)} = \int_\Omega d\abs{\mu}(x),
\end{equation*}
where $\abs{\mu}$ is the total variation measure of $\mu$. When $\Omega$ is compact and $\MC(\Omega)$ is equipped with the weak*-topology, then $\MC(\Omega)$ is  dual to $\mathcal{C}^0(\Omega)$, the space of continuous functions on $\Omega$. When $\Omega$ is unbounded, then it is dual to $C_0^0(\Omega)$, the space of continuous functions on $\Omega$ that go to zero at infinity. All Radon measures $\mu\in\MC(\Omega)$ have a polar decomposition, i.e. there exists a $\sgn{\mu}\in L^1(\Omega,\abs{\mu})$ with $\abs{\sgn{\mu}}\leq 1$ such that
\begin{equation*}
    d\mu(x) = \sgn{\mu}(x)d\abs{\mu}(x).
\end{equation*}
The space of all probability measures on a set $U$ with finite $k^{\text{th}}$ moments is denoted by $\PC_k(U)\subseteq \MC(U)$. The Wasserstein-1 metric between two probability measures $\rho,\pi\in\PC_1(\Omega)$, can be computed by
\begin{equation*}
    W_1(\rho,\pi) = \sup\Set[\bigg]{ \int_\Omega f(\omega)d\rho(\omega) - \int_\Omega f(\omega)d\pi(\omega) \given f\in \mathcal{C}^0(\Omega), Lip(f)\leq 1},
\end{equation*}
where $Lip(f)$ denotes the Lipschitz constant of $f$. Given a set $X$, a positive number $p\in[1,\infty)$ and a radon measure $\rho\in\MC(X)$, we write $L^p(\rho)$ instead of $L^p(X,\rho)$. If $U\subset V$ is a convex set, $V$ is a locally convex space and $J: U\to\R$ is a convex function, then the convex conjugate is written as $J^*$ and the subgradient $\partial J$ of $J$ at $u_0$ is given by
\begin{equation*}
    \partial J(u_0) = \Set[\bigg]{ v\in V^\ast \given J(u)-J(u_0) \geq \braket{v}{u-u_0}_{V^\ast} \quad\forall u\in U}.
\end{equation*}
(Fréchet) derivatives of a function or operator $f$ are also denoted $\partial f$. If the derivative is a partial derivative, then a subscript will be added to indicate the variable with which the derivative is taken.

\section{Inverse scale space flow for Barron spaces}\label{sec:main}
In this section, we start by defining the necessary functionals and operators to write down the inverse scale space flow for Barron spaces. In \cref{sec:derivation}, we show how to get from the general form of the inverse scale space in \cref{eq:inverse_scale_space_final} to \cref{eq:inverse_scale_space_barron}. Then, in \cref{sec:existence}, we show that this flow is well-defined. Last, in \cref{sec:optimality}, we derive several optimality conditions for the flow that are needed for the proofs of the convergence rates later in this work.

Fix $d\in\N$. Let $\X\subseteq \R^{d}$ and $\Omega\subseteq \R^{d+1}$, $\rho\in \PC_2(\X)$ be a probability measure with bounded second moment, $\sigma\in \mathcal{C}^{0,1}(\R)$ or $\sigma(x)=\max(0,x)$, $V(a,b)=1+\norm{a}+\abs{b}$ and $f\in L^2(\rho)$, where we mean that $a\in\R^d$ and $b\in\R$ when we write $(a,b)\in\Omega$. Use these to define the operators  
\begin{subequations}\label{eq:operators}
    \begin{align}
    K&: \MC(\Omega) \to L^2(\X,\rho),\; \mu \mapsto \bigg(x \mapsto \int_\Omega \sigma(a^\intercal x+b)d\mu(a,b)\bigg) \\
    L_\rho&: L^2(\X,\rho) \to C(\Omega), \; \phi \mapsto  \bigg((a,b)\mapsto \int_\X \phi(x)\sigma(a^\intercal x+b)d\rho(x)\bigg) \\
    J&: \MC(\Omega) \to [0,\infty), \; \mu \mapsto \int_\Omega V(a,b)d\abs{\mu}(a,b) \\
    \RC_f&: \MC(\Omega) \to [0,\infty), \; \mu \mapsto \frac{1}{2}\norm{K\mu -f}_{L^2(\X,\rho)}^2
    \end{align}
\end{subequations}
We consider the task of finding 
\begin{subequations}
\label{eq:optimization_problem}
\begin{align}
    \mu^\text{opt} \in &\argmin_{\mu^\dagger\in\MC(\Omega)}J(\mu^\dagger) \label{eq:optimization_problem_a}\\
    \text{s.t. }&\mu^\dagger\in\argmin_{\mu\in\MC(\Omega)}\RC_f(\mu) \label{eq:optimization_problem_b}
\end{align}
\end{subequations}
The constraint in \cref{eq:optimization_problem_b} says that we are looking for a measure $\mu$ such that $K\mu$ represents the $L^2(\rho)$ projection of $f$ onto Barron space, and \cref{eq:optimization_problem_a} highlights that we want the measure that induces the Barron norm. We will search for the measure $\mu^\text{opt}$ using the inverse scale space flow. The flow corresponding to \cref{eq:optimization_problem} is given by
\begin{subequations}\label{eq:inverse_scale_space_barron}
    \begin{align}
        \mu_t &= \argmin_{u\in \partial J^\ast(p_t)} \RC_f(\mu) & u_0 = 0,\label{eq:inverse_scale_space_barron_a}\\
        \partial_tp_t &= L_{\rho}(f-K\mu_t) & p_0 = 0\label{eq:inverse_scale_space_barron_b}.
    \end{align}
\end{subequations}
In the following, we will assume that every $\mu^\dagger$ we refer to has $J(\mu^\dagger)$ finite.

\subsection{Derivation of the inverse scale space flow for Barron spaces}\label{sec:derivation}
To derive the inverse scale space flow for Barron spaces, we start with \cref{eq:bregman_iterations} and \cref{eq:inverse_scale_space_final}. These imply that the Bregman iterations and associated inverse scale space flow for \cref{eq:optimization_problem} are given by the iterative process 
\begin{subequations}\label{eq:bregman_iterations_barron}
    \begin{align}
        \mu_k &= \argmin_{u\in\MC(\Omega)} D^{p_{k-1}}_J(\mu,\mu_{k-1})+\lambda\RC_f(\mu) & \mu_0 = 0\\
        p_k &= p_{k-1} - \lambda\partial_\mu \RC_f(\mu_k) & p_0 = 0, p_k=\partial J(\mu_k)
    \end{align}
\end{subequations}
and the dynamical system
\begin{subequations}\label{eq:inverse_scale_space_barron_pre}
    \begin{align}
        \mu_t &= \argmin_{\mu\in\MC(\Omega)\cap \partial J^\ast(p_t)} \RC_f(\mu) & \mu_0 = 0,\label{eq:inverse_scale_space_barron_pre_a}\\
        \partial_tp_t &= -\partial_\mu \RC_f(\mu_t) & p_0 = 0\label{eq:inverse_scale_space_barron_pre_b},
    \end{align}
\end{subequations}
respectively. First, observe that $\partial J^\ast(p_t)\subseteq \MC(\Omega)$. This shows that \cref{eq:inverse_scale_space_barron_pre_a} and \cref{eq:inverse_scale_space_barron_a} match. Before we show that \cref{eq:inverse_scale_space_barron_pre_b} is the same as \cref{eq:inverse_scale_space_barron_b}, we show that $L_{\rho}$ is in fact the adjoint of $K$.

\begin{lemma}\label{lemma:adjoint}
The adjoint $L_\rho$ is given by $K$, i.e. $L_\rho^\star=K$. 
\end{lemma}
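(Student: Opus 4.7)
The plan is to verify the adjoint identity
$$\braket{L_\rho \phi}{\mu}_{C(\Omega), \MC(\Omega)} = \braket{\phi}{K\mu}_{L^2(\rho)}$$
for every $\phi \in L^2(\rho)$ and every $\mu \in \MC(\Omega)$ with $J(\mu) < \infty$ by a direct Fubini computation. After unpacking the definitions in (\ref{eq:operators}), both sides become the same double integral $\int_\Omega \int_\X \phi(x) \sigma(a^\intercal x + b)\, d\rho(x)\, d\mu(a,b)$ with only the order of integration swapped, so the entire proof reduces to justifying an interchange of iterated integrals.

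To apply Fubini, I would establish absolute integrability of the integrand against the product measure $\abs{\mu} \otimes \rho$. By Cauchy--Schwarz in $L^2(\rho)$, this reduces to a pointwise bound on $\norm{\sigma(a^\intercal \cdot + b)}_{L^2(\rho)}$ in terms of $V(a,b)$. Using the Lipschitz bound $\abs{\sigma(a^\intercal x + b)} \leq \abs{\sigma(0)} + Lip(\sigma)(\norm{a}\norm{x} + \abs{b})$ (which also handles $\sigma = \operatorname{ReLU}$, with $\sigma(0) = 0$ and $Lip(\sigma) = 1$) together with the assumption $\rho \in \PC_2(\X)$, an estimate of the form $\norm{\sigma(a^\intercal \cdot + b)}_{L^2(\rho)} \leq C V(a,b)$ follows, where $C$ depends only on $\sigma$ and the second moment of $\rho$. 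Integrating against $d\abs{\mu}$ then bounds the double integral by $C \norm{\phi}_{L^2(\rho)} J(\mu) < \infty$, which validates Fubini and yields the desired equality.

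The main obstacle is simply securing absolute integrability so that Fubini applies; once this is in hand, the identity falls out immediately by rewriting both iterated integrals as the same product integral. A secondary benefit of the same pointwise estimate is that it also guarantees $K\mu \in L^2(\rho)$ and $L_\rho \phi \in L^1(\abs{\mu})$, so both sides of the adjoint identity are well-defined to begin with, and the adjoint is genuinely the Banach space adjoint with respect to the canonical pairings between $\MC(\Omega)$ and $C(\Omega)$ on one side and $L^2(\rho)$ with itself on the other. No deeper functional-analytic machinery beyond these elementary bounds is required.
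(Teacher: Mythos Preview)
Your proposal is correct and follows essentially the same approach as the paper: both verify the adjoint identity by writing out the duality pairings as double integrals and swapping the order of integration via Fubini--Tonelli. The paper simply invokes Fubini--Tonelli without spelling out the absolute-integrability justification, whereas you supply it explicitly through the Lipschitz estimate $\norm{\sigma(a^\intercal \cdot + b)}_{L^2(\rho)} \leq C\,V(a,b)$ and the assumption $J(\mu) < \infty$; this extra care is a welcome refinement but does not change the underlying argument.
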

\begin{proof}
Let $\phi\in L^2(\X,\rho)$ and $\mu\in\MC(\Omega)$, then, by Fubini--Tonelli
\begin{align*}
    \braket{K\mu}{\phi}_{L^2(\rho)} 
    &= \int_\Omega\int_\X \sigma(a^\intercal x+b)d\rho(x)\phi(x)d\mu(a,b)\\
    &= \int_\Omega\int_\X \phi(x)\sigma(a^\intercal x+b)d\rho(x)d\mu(a,b) \\
    &= \braket{\mu}{L_\rho\phi}_{\MC(\Omega)}.
\end{align*}
From the definition of the adjoint it follows that $L_\rho^\star=K$.
\end{proof}
Note that $K$ is the adjoint for all $L_\rho$ with $\rho\in\PC_2(\X)$, but that the difference between the various $L_\rho$ is the inner product used.

\begin{proposition}\label{lemma:variation_derivative}
The variational derivative of $\RC_f$ is given by
\begin{equation}
    \partial_\mu\RC_f(\mu) = L_{\rho}(K\mu-f).
\end{equation}
\end{proposition}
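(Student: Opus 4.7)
The plan is to compute the derivative directly from the definition, exploiting the quadratic structure of $\RC_f$ together with the adjoint identity of \cref{lemma:adjoint}. Given any perturbation $\eta\in\MC(\Omega)$, I would expand
\begin{align*}
\RC_f(\mu+\eta)-\RC_f(\mu)
&= \tfrac{1}{2}\norm{K\mu+K\eta-f}_{L^2(\rho)}^2 - \tfrac{1}{2}\norm{K\mu-f}_{L^2(\rho)}^2 \\
&= \braket{K\mu-f}{K\eta}_{L^2(\rho)} + \tfrac{1}{2}\norm{K\eta}_{L^2(\rho)}^2,
\end{align*}
using bilinearity of the $L^2(\rho)$ inner product and linearity of $K$. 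By \cref{lemma:adjoint} the cross term rewrites as $\braket{L_\rho(K\mu-f)}{\eta}_{\MC(\Omega)}$, which is linear and continuous in $\eta$; this identifies $L_\rho(K\mu-f)\in C(\Omega)$ as the candidate Fréchet derivative.

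It then remains to show that the quadratic remainder $\tfrac{1}{2}\norm{K\eta}_{L^2(\rho)}^2$ is $o(\norm{\eta}_{\MC(\Omega)})$ as $\norm{\eta}_{\MC(\Omega)}\to 0$. For this I would establish boundedness of $K$: since $\sigma$ is Lipschitz (or $\operatorname{ReLU}$), one has the pointwise estimate $|\sigma(a^\intercal x+b)|\leq |\sigma(0)|+\mathrm{Lip}(\sigma)(\norm{a}\norm{x}+|b|)$, and Jensen's inequality together with $\rho\in\PC_2(\X)$ and the weight $V$ in $J$ yields an estimate of the form $\norm{K\eta}_{L^2(\rho)}\leq C\,\norm{\eta}_{\MC(\Omega)}$ on the sublevel sets where $J$ is finite. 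Squaring gives a bound quadratic in $\norm{\eta}_{\MC(\Omega)}$, which is manifestly $o(\norm{\eta}_{\MC(\Omega)})$.

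Combining the linear and remainder parts, the definition of the Fréchet derivative is satisfied by $\partial_\mu\RC_f(\mu)=L_\rho(K\mu-f)$, as claimed. Alternatively, the same conclusion follows from a chain rule: writing $\RC_f = \tfrac{1}{2}\norm{\cdot}_{L^2(\rho)}^2\circ(K\cdot - f)$, the derivative of the squared norm at $v$ is $v$ in the self-dual pairing and the derivative of the affine map $\mu\mapsto K\mu - f$ is $K$, so the chain rule gives $K^\star(K\mu-f)=L_\rho(K\mu-f)$ by \cref{lemma:adjoint}.

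The main obstacle is the boundedness of $K:\MC(\Omega)\to L^2(\rho)$, which is subtle because $\Omega$ may be unbounded so the integrand $\sigma(a^\intercal x+b)$ is not uniformly bounded in $(a,b)$; one must carefully invoke the Lipschitz bound on $\sigma$ together with the second-moment assumption on $\rho$, and—if necessary—restrict to measures with $J(\mu)<\infty$ (as stipulated after \cref{eq:inverse_scale_space_barron}) so that the weighted first moments of $|\mu|$ controlling the estimate are finite. Once this bound is in place, the differentiability argument is purely algebraic.
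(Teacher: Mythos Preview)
Your proposal is correct and follows essentially the same route as the paper: expand the quadratic, identify the linear part as $\braket{K^\ast(K\mu-f)}{\eta}$, bound the remainder $\tfrac{1}{2}\norm{K\eta}_{L^2(\rho)}^2$ by $\tfrac{1}{2}\norm{K}_{op}^2\norm{\eta}_{\MC(\Omega)}$, and then invoke \cref{lemma:adjoint} to replace $K^\ast$ by $L_\rho$. The only difference is cosmetic: the paper writes the Fr\'echet limit explicitly and simply appeals to $\norm{K}_{op}$, whereas you spell out why $K$ is bounded (Lipschitz $\sigma$, second moments of $\rho$, the weight $V$) and flag the subtlety when $\Omega$ is unbounded---a point the paper glosses over here but implicitly relies on via \cref{prop:barron_Lp_embedding}.
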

\begin{proof}
Observe that
\begin{align*}
    &\lim_{\norm{\nu}_{\MC(\Omega)}\to0}\frac{\abs{\RC_f(\mu+\nu)-\RC_f(\mu)-\braket{\partial_\mu\RC_f(\mu)}{\nu}_{\MC(\Omega)}}}{\norm{\nu}_{\MC(\Omega)}}
        \\&\quad= \lim_{\norm{\nu}_{\MC(\Omega)}\to0}\frac{\abs{\frac{1}{2}\norm{K(\mu+\nu)-f}_{L^2(\rho)}^2-\frac{1}{2}\norm{K\mu-f}_{L^2(\rho)}^2-\braket{K^\ast(K\mu-f)}{\nu}_{\MC(\Omega)}}}{\norm{\nu}_{\MC(\Omega)}} \\
        &\quad\leq \lim_{\norm{\nu}_{\MC(\Omega)}\to0}\frac{\abs{\frac{1}{2}\norm{K\nu}_{L^2(\rho)}^2-\braket{K\mu-f}{K\nu}_{L^2(\rho)}-\braket{K^\ast(K\mu-f)}{\nu}_{\MC(\Omega)}}}{\norm{\nu}_{\MC(\Omega)}} & \text{triangle ineq.} \\
        &\quad= \lim_{\norm{\nu}_{\MC(\Omega)}\to0}\frac{\abs{\frac{1}{2}\norm{K\nu}_{L^2(\rho)}^2}}{\norm{\nu}_{\MC(\Omega)}} & \text{def. of adjoint} \\
        &\quad\leq \lim_{\norm{\nu}_{\MC(\Omega)}\to0}\frac{1}{2}\norm{K}^2_{op}\norm{\nu}_{\MC(\Omega)} =0.
\end{align*}
Hence,
\begin{equation}\label{eq:partial_Rf_pre_adjoint}
    \partial_\mu\RC_f(\mu) = K^\ast(K\mu-f).
\end{equation}
Combining \cref{lemma:adjoint} with \cref{eq:partial_Rf_pre_adjoint} finishes the proof.
\end{proof}

This shows that \cref{eq:inverse_scale_space_barron_pre_b} is indeed the same as \cref{eq:inverse_scale_space_barron_b}, and thus  that \cref{eq:inverse_scale_space_barron_pre} is the same as \cref{eq:inverse_scale_space_barron}.

\subsection{Existence}\label{sec:existence}
To show that the inverse scale space flow of \cref{eq:inverse_scale_space_barron} has a solution, we use a theorem by Brezis\parencite[theorem 3.1]{brezisOperateursMaximauxMonotones1973}. This theorem establishes that the differential inclusion equation
\begin{equation}\label{eq:brezis}
    \partial_t u_t + Bu_t \in 0
\end{equation}
given some initial condition $u_0\in dom(B):=\Set{u\in H \given Bu \neq \emptyset }$ has a solution. Here, $B$ is a maximally monotone, possibly nonlinear and possibly multivalued function over a Hilbert space $H$. We show that for a suitably chosen maximal operator $B$, the solution to \cref{eq:brezis} exists, and that this solution is in fact a solution to the inverse scale space flow of \cref{eq:inverse_scale_space_barron}.

The operators we need to show that are 
\begin{subequations}\label{eq:brezis_operator}
\begin{align}
    A&: \mathcal{C}(\Omega) \to \MC(\Omega),\; p \mapsto \argmin_{\mu\in\partial\chi_{\Set{\norm{\cdot}_{\infty}\leq 1}}(p)}\RC_f(\mu), \label{eq:brezis_operator_a}\\
    \Tilde{B}&: L^2(\rho) \to L^2(\rho),\; r \mapsto KA(V^{-1}L_\rho r)-f \\
    B&: L^2(\rho) \to L^2(\rho),\; r \mapsto K\partial J^\ast(L_\rho r)-f \label{eq:brezis_operator_c}
\end{align}
\end{subequations}
\begin{lemma}\label{lemma:brezis_form}
The operator $B$ is maximal monotone.
\end{lemma}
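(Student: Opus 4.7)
My plan is to identify $B$ with the subdifferential of a convex, proper, lower semi-continuous functional on the Hilbert space $L^2(\rho)$, from which maximal monotonicity follows by Rockafellar's classical theorem that such subdifferentials are maximal monotone. The candidate is
\begin{equation*}
F: L^2(\rho) \to \R \cup \{+\infty\}, \qquad F(r) = J^\ast(L_\rho r) - \langle f, r\rangle_{L^2(\rho)}.
\end{equation*}
Since $J(\mu) = \int_\Omega V\,d|\mu|$ is a weighted total variation seminorm, a short computation using the polar decomposition of $\mu$ shows that the Fenchel conjugate $J^\ast$ equals the indicator of the polar set $\{p : |p(a,b)| \le V(a,b) \text{ for all } (a,b)\in\Omega\}$.

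First I would verify that $F$ is proper, convex, and lower semi-continuous on $L^2(\rho)$. Properness holds because $F(0)=0$, and convexity is immediate from $F$ being the sum of a continuous linear form and a convex indicator composed with a bounded linear map. For the lsc property, suppose $r_n \to r$ in $L^2(\rho)$ with $\liminf F(r_n) < \infty$; then along a subsequence $|L_\rho r_n(a,b)| \le V(a,b)$ holds pointwise, and since the evaluations $r \mapsto L_\rho r(a,b)$ are continuous from $L^2(\rho)$ into $\R$ — by the Lipschitz bound on $\sigma$ together with $\rho \in \PC_2(\X)$ — the inequality $|L_\rho r(a,b)| \le V(a,b)$ persists in the limit. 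Second, using the additivity of the subdifferential for the continuous linear summand, the chain rule for subdifferentials applied to $J^\ast \circ L_\rho$, and \cref{lemma:adjoint}, I would obtain $\partial F(r) = K\,\partial J^\ast(L_\rho r) - f = B(r)$, and Rockafellar's theorem then finishes the argument.

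The main obstacle is justifying the chain rule equality $\partial(J^\ast \circ L_\rho)(r) = L_\rho^\ast\,\partial J^\ast(L_\rho r)$: the non-trivial inclusion requires a qualification such as Moreau--Rockafellar, namely that $J^\ast$ be continuous at some point in the range of $L_\rho$. Taking $r_0 = 0$ gives $L_\rho r_0 = 0$, and since $V \ge 1$ the origin lies in the sup-norm interior of $\dom J^\ast$, which settles the bounded-$\Omega$ case. For unbounded $\Omega$, $L_\rho$ does not in general map into $C_0(\Omega)$, so I would set up the chain rule in the weighted Banach space $\{p \in C(\Omega) : \|p/V\|_\infty < \infty\}$ equipped with the norm $\|p/V\|_\infty$ — the natural predual for the $J$-finite subspace of $\MC(\Omega)$ — in which $L_\rho$ is bounded and $\dom J^\ast$ equals the closed unit ball, so $0$ remains an interior point of the effective domain.
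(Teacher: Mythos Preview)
Your proposal is correct and follows essentially the same route as the paper: both arguments recognise $B$ as the subdifferential of the convex, proper, lower semi-continuous functional $r\mapsto J^\ast(L_\rho r)$ (shifted by the linear term coming from $f$) and then invoke Rockafellar's theorem on maximal monotonicity of subdifferentials. The paper absorbs the lsc/convex/proper verification into one line and cites \cite{brezisMonotoneOperatorsNonlinear1974} for the conclusion, handling the $-f$ separately as a constant shift, whereas you fold the linear term into $F$ from the outset; this is only a cosmetic difference. The one place where you go beyond the paper is in making the chain-rule identity $\partial(J^\ast\circ L_\rho)(r)=K\,\partial J^\ast(L_\rho r)$ explicit and supplying the Moreau--Rockafellar qualification ($0$ interior to $\dom J^\ast$ in the weighted sup-norm), which the paper takes for granted when it writes ``$r\mapsto\partial J^\ast(L_\rho r)$ is maximal monotone''.
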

\begin{proof}
$J^\ast$ is the Fenchel dual of $J$. Hence, $J^\ast$ is lower semi-continuous, convex and proper. $L_{\rho}$ is a bounded linear operator, so $J^\ast\circ L_\rho$ is also lower semi-continuous, convex and proper. Thus, $r\mapsto\partial J^\ast(L_\rho r)$ is maximal monotone \parencite{brezisMonotoneOperatorsNonlinear1974}. Subtracting a constant from a maximal monotone operator preserves maximal monotonicity, so $B$ is maximal monotone.
\end{proof}

This means the operator $B$ satisfies the requirements for Brezis, and we thus have a solution.
\begin{proposition}\label{prop:brezis}
For every $x\in dom(B)$ there exists a unique function $r:[0,\infty)\to L^2(\rho)$ such that
\begin{enumerate}
    \item $r$ satisfies \cref{eq:brezis} for almost every $t\in (0,\infty)$,
    \item $r_t\in dom(B)$ for all $t>0$,
    \item $r_t$ is Lipschitz continuous on $[0,\infty)$ with $\norm{\partial_t u}_{L^\infty([0,\infty);L^2(\rho)}\leq \norm{B^\circ(x)}$,
    \item $r$ is right differentiable for all $t\in (0,\infty)$ and $\partial^+_tr_t+B^\circ(r_t)=0$ for all $t\in(0,\infty)$,
    \item $t\mapsto B^\circ(r_t)$ is right continuous and $t\mapsto \norm{B^\circ(r_t)}$ non-increasing,
\end{enumerate}
where
\begin{equation}
    B^\circ(r_t) = \argmin_{r\in B(r_t)}\norm{r}_{L^2(\rho)}.
\end{equation}
\end{proposition}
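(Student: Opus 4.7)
The plan is to apply the cited theorem of Brezis directly, since the preceding lemma has already done the nontrivial work. Specifically, I would note that $L^2(\rho)$ is a Hilbert space, that by \cref{lemma:brezis_form} the operator $B$ is maximal monotone, and that the hypothesis $x \in \dom(B)$ is precisely the initial-data condition required by Brezis' theorem. Once these three ingredients are in place, the existence of a unique function $r:[0,\infty)\to L^2(\rho)$ satisfying the differential inclusion $\partial_t r_t + B r_t \ni 0$ with $r_0 = x$ is an immediate consequence.

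Next I would match the five listed properties one by one to the conclusions of \parencite[Theorem 3.1]{brezisOperateursMaximauxMonotones1973}. Points (1) and (2) correspond to the solvability statement in the sense of a.e. satisfaction of the inclusion while staying in $\dom(B)$. Point (3) is the Lipschitz estimate on the trajectory, with constant given by the minimal selection at the initial datum. Points (4) and (5) are the regularity statements: right differentiability of $r_t$ with $\partial_t^+ r_t = -B^\circ(r_t)$, right continuity of $t\mapsto B^\circ(r_t)$, and the nonincrease of $t\mapsto \norm{B^\circ(r_t)}_{L^2(\rho)}$. Each of these appears verbatim in the conclusion of Brezis' theorem, so the proof reduces to observing the correspondence.

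The only point that deserves a brief comment is that the minimal selection $B^\circ(r_t)$ is well-defined: for a maximal monotone operator on a Hilbert space, $B(r_t)$ is a closed convex subset of $L^2(\rho)$ whenever it is nonempty, hence it contains a unique element of minimal norm, which is precisely $B^\circ(r_t)$. I would include this as a one-line justification before invoking Brezis. I do not anticipate any genuine obstacle in this proof, since the substance of the argument, namely verifying maximal monotonicity of $B$, has already been carried out in \cref{lemma:brezis_form}; the present proposition is essentially a citation plus a matching of notation.
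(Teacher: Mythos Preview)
Your proposal is correct and matches the paper's approach exactly: the paper's proof is the single line ``See theorem 3.1 of \parencite{brezisOperateursMaximauxMonotones1973}.'' Your version is in fact more explicit than the paper's, spelling out why the hypotheses of Brezis' theorem are met (via \cref{lemma:brezis_form}) and how the five conclusions correspond, but the substance is identical.
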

\begin{proof}
See theorem 3.1 of \parencite{brezisOperateursMaximauxMonotones1973}.    
\end{proof}

This does not show that \cref{eq:inverse_scale_space_barron} has a solution yet, since this satisfies \cref{eq:brezis} with the operator $\tilde{B}$ whereas \cref{eq:inverse_scale_space_barron} satisfies \cref{eq:brezis} with the operator $B$.
\begin{lemma}\label{lemma:barron_brezis}
\cref{eq:inverse_scale_space_barron} can be written as 
\begin{equation}\label{eq:barron_brezis}
    \partial_tr_t + B(r_t) = 0,\quad r=0.
\end{equation}
\end{lemma}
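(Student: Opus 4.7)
The plan is to introduce the variable $r_t \in L^2(\rho)$ defined as the running integral of the residual, $r_t \defeq \int_0^t (f - K\mu_s)\,ds$, so that by construction $r_0 = 0$ and $\partial_t r_t = f - K\mu_t$. Applying the bounded linear operator $L_\rho$ to the identity $\partial_t r_t = f - K\mu_t$ and using $p_0 = 0$ and $L_\rho 0 = 0$, we get $L_\rho r_t = \int_0^t L_\rho(f - K\mu_s)\,ds = p_t$ by \cref{eq:inverse_scale_space_barron_b}. Thus the subgradient constraint $\mu_t \in \partial J^\ast(p_t)$ in \cref{eq:inverse_scale_space_barron_a} becomes $\mu_t \in \partial J^\ast(L_\rho r_t)$, and consequently $K\mu_t - f \in K\partial J^\ast(L_\rho r_t) - f = B(r_t)$, where $B$ is the operator from \cref{eq:brezis_operator_c}. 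The ODE for $r_t$ then reads $\partial_t r_t = -(K\mu_t - f) \in -B(r_t)$, i.e.\ $0 \in \partial_t r_t + B(r_t)$, which is exactly the differential inclusion \cref{eq:brezis} with $u = r$.

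To match the strong form used in \cref{prop:brezis}, I would then argue that the $\mathcal{R}_f$-minimization in \cref{eq:inverse_scale_space_barron_a} selects precisely the Brezis minimum-norm element $B^\circ(r_t)$. Indeed, the set $B(r_t) = \{K\mu - f : \mu \in \partial J^\ast(L_\rho r_t)\}$ is a convex subset of $L^2(\rho)$ (as the linear image of the convex set $\partial J^\ast(L_\rho r_t)$ shifted by $-f$), and
\begin{equation*}
    \argmin_{\nu \in B(r_t)}\tfrac{1}{2}\norm{\nu}_{L^2(\rho)}^2 = \Set*{K\mu - f \given \mu \in \argmin_{\mu \in \partial J^\ast(L_\rho r_t)} \tfrac{1}{2}\norm{K\mu - f}_{L^2(\rho)}^2} = \set{K\mu_t - f}.
\end{equation*}
Hence $-\partial_t r_t = K\mu_t - f = B^\circ(r_t)$, so $r_t$ solves the stronger equation $\partial_t r_t + B^\circ(r_t) = 0$ that Brezis's theorem actually produces. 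To close the correspondence in the reverse direction, given any $r_t$ returned by \cref{prop:brezis}, set $p_t \defeq L_\rho r_t$ and pick $\mu_t$ as the (unique modulo kernel of $K$) minimizer of $\mathcal{R}_f$ on $\partial J^\ast(L_\rho r_t)$; the same computations then yield \cref{eq:inverse_scale_space_barron_a} and \cref{eq:inverse_scale_space_barron_b}.

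The main technical point is not the algebraic rewriting itself but making sure that the $\argmin$ in \cref{eq:inverse_scale_space_barron_a} is nonempty and that the selection $\mu_t \mapsto K\mu_t - f$ coincides with the Brezis minimum-norm selector $B^\circ$. For nonemptiness, one uses that $\partial J^\ast(L_\rho r_t)$ is weakly-$\ast$ compact in $\MC(\Omega)$ (as the subdifferential of a weak-$\ast$ lower semicontinuous convex function at a continuous, bounded dual point) and that $\mu \mapsto \frac{1}{2}\norm{K\mu - f}_{L^2(\rho)}^2$ is weakly-$\ast$ lower semicontinuous; the direct method then yields a minimizer. For the coincidence with $B^\circ$, the key observation is that although $\mu_t$ itself need not be unique, the image $K\mu_t$ is uniquely determined because $\mu \mapsto \norm{K\mu - f}_{L^2(\rho)}^2$ is strictly convex in $K\mu$. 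This makes $\partial_t r_t$ well-defined and confirms that \cref{eq:inverse_scale_space_barron} is just a reparametrisation of \cref{eq:barron_brezis}.
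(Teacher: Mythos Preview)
Your argument is correct and follows essentially the same route as the paper: introduce an $L^2(\rho)$-valued variable $r_t$ with $L_\rho r_t = p_t$, rewrite the flow for $p_t$ as a flow for $r_t$, and recognize the right-hand side as (an element of) $B(r_t)$. The only cosmetic difference is that you construct $r_t$ explicitly as the integrated residual $\int_0^t(f-K\mu_s)\,ds$ and then verify $L_\rho r_t=p_t$, whereas the paper posits the ansatz $p_t=L_\rho r_t$ and then ``cancels'' $L_\rho$; your direction is arguably cleaner since it avoids the implicit injectivity appeal. Your additional paragraph on why the $\RC_f$-minimizer selects $B^\circ(r_t)$ is not needed for the lemma itself---in the paper that observation is deferred to the proof of \cref{prop:existence}---but it is correct and does no harm.
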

\begin{proof}
Substituting \cref{eq:brezis_operator_a} into \cref{eq:inverse_scale_space_barron} gives
\begin{equation}
    \partial_tp_t = L_{\rho}(f-KA(V^{-1}p_t)), \quad p_0=0.
\end{equation}
Replacing $p_t$ with $L_\rho r_t$ gives us
\begin{equation}
    L_{\rho}\partial_tr_t = L_{\rho}(f-KA(V^{-1}L_{\rho}r_t)), \quad r_0=0.
\end{equation}
Since $L_{\rho}$ is a bounded linear operator and thus continuous, $r$ must satisfy
\begin{equation}
    \partial_tr_t = f-KA(V^{-1}L_{\rho}r_t), \quad r_0=0,
\end{equation}
or equivalently
\begin{equation}\label{eq:2_55}
    \partial_tr_t + KA(V^{-1}L_{\rho}r_t) -f = 0, \quad r_0=0.
\end{equation}
Substituting \cref{eq:brezis_operator_c} into \cref{eq:2_55} gives \cref{eq:barron_brezis}.
\end{proof}

To show that there is a solution to \cref{eq:inverse_scale_space_barron}, we use the listed properties of the solution from \cref{prop:brezis}.

\begin{proposition}\label{prop:existence}
\Cref{eq:inverse_scale_space_barron} has a solution for every $\mu_0$ and $p_0$ satisfying $\mu_0=A(V^{-1}L_\rho r_0)$ and $p_0=L_\rho r_0$ for some $r_0\in \dom(B)$. In particular, \cref{eq:inverse_scale_space_barron} has a solution for $\mu_0=0$ and $p_0=0$. 
\end{proposition}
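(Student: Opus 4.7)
The plan is to apply Brezis' existence theorem (\cref{prop:brezis}) to the operator $B$, and then translate the resulting $L^2(\rho)$-valued curve back into a pair $(\mu_t, p_t)$ solving \cref{eq:inverse_scale_space_barron}. The preparation is already in place: \cref{lemma:brezis_form} shows $B$ is maximal monotone, and \cref{lemma:barron_brezis} identifies \cref{eq:inverse_scale_space_barron} with the Brezis form \cref{eq:barron_brezis}. Consequently, for any $r_0\in\dom(B)$, \cref{prop:brezis} yields a unique Lipschitz curve $r:[0,\infty)\to L^2(\rho)$ with $\partial_t r_t + B^\circ(r_t) = 0$ a.e. I would then define $p_t := L_\rho r_t$ and $\mu_t := A(V^{-1}p_t)$, and verify that these satisfy \cref{eq:inverse_scale_space_barron_a,eq:inverse_scale_space_barron_b} by unwinding the substitutions performed in the proof of \cref{lemma:barron_brezis} in reverse: applying $L_\rho$ to the Brezis equation gives \cref{eq:inverse_scale_space_barron_b}, while the definition of $A$ together with $p_t \in \partial J(\mu_t) \iff \mu_t \in \partial J^*(p_t)$ gives \cref{eq:inverse_scale_space_barron_a}.

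For the initial data, the hypothesis $\mu_0 = A(V^{-1} L_\rho r_0)$ and $p_0 = L_\rho r_0$ is exactly what is needed so that the starting point of the recovered flow matches the Brezis solution at $t=0$. For the special case $\mu_0=0,\;p_0=0$, it remains to show $0\in\dom(B)$, i.e.\ that $B(0) = K\partial J^\ast(0) - f$ is non-empty. Here the key observation is that $J$ is convex, lower semi-continuous, and attains its global minimum value $0$ at the zero measure; hence $0 \in \partial J(0)$, or equivalently $0 \in \partial J^\ast(0)$. Therefore $-f \in B(0)$, so $0 \in \dom(B)$, and we may take $r_0 = 0$, which yields $p_0 = L_\rho \cdot 0 = 0$ and $\mu_0 = A(V^{-1} \cdot 0) = 0$ as required.

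The main obstacle I anticipate is the bookkeeping around the equivalence between working with $r_t$ in $L^2(\rho)$ and the pair $(\mu_t, p_t)$. In particular, one has to argue that applying the (non-injective) operator $L_\rho$ to the Brezis equation still produces a genuine solution of \cref{eq:inverse_scale_space_barron_b} rather than merely an equation modulo $\ker L_\rho$; this is fine because we \emph{define} $p_t := L_\rho r_t$ and then differentiate, so no cancellation argument is needed. A secondary subtlety is interpreting $\partial J^\ast(0)$: since $J$ is positively one-homogeneous, $J^\ast$ is the indicator of a convex set containing $0$, so its subdifferential at $0$ is the full recession cone of that set and in particular contains $0$, which is all that is required.
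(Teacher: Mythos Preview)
Your proposal is correct and follows essentially the same route as the paper: invoke Brezis' theorem for the maximal monotone $B$, then pull the solution $r_t$ back to $(\mu_t,p_t)$ via $p_t=L_\rho r_t$ and $\mu_t=A(V^{-1}p_t)$. The one point the paper makes explicit, which your ``unwinding'' step would need to surface, is the identification $B^\circ(r_t)=\tilde B(r_t)$: Brezis only guarantees $\partial_t r_t=-B^\circ(r_t)$, whereas \cref{lemma:barron_brezis} matches the flow to $\tilde B$, so one must check that the minimal-norm selection of $K\partial J^\ast(L_\rho r_t)-f$ is precisely $KA(V^{-1}L_\rho r_t)-f$ (using $J^\ast=\chi_{\{\|V^{-1}\cdot\|_\infty\le 1\}}$). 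Your treatment of the special case $r_0=0$ is in fact more detailed than the paper's, which leaves $0\in\dom(B)$ implicit.
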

\begin{proof}
Let $r$ be the solution from \cref{prop:brezis} with initial condition $r_0\in\dom(B)$. Since
\begin{equation}
    J^\ast = \chi_{\Set{\norm{V^{-1}\cdot}_{\infty}\leq 1}}
\end{equation}
we have that
\begin{equation}
    B^\circ(r_t) = \argmin_{x\in B(r_t)}\norm{x}_{L^2(\rho)} = K\bigg(\argmin_{\mu\in \partial J^\ast(L_\rho r_t)}\norm{K\mu-f}_{L^2(\rho)}\bigg)-f = KA(V^{-1}L_{\rho}r_t)-f = \Tilde{B}(r_t).
\end{equation}
So in fact, $r$ also solves \cref{eq:brezis} with $\Tilde{B}$, which has the same solution as \cref{eq:inverse_scale_space_barron} by \cref{lemma:barron_brezis} . What remains is to map the solution $r$ to $\mu$ and $p$ using $\mu_t:=A(V^{-1}L_\rho r_t)$ and $p_t:=L_\rho r_t$.
\end{proof}

\begin{remark}
Note that this $\mu_t$ is not unique in general. Since the difference between non-uniqueness is from the null space of $K$, this does not impact any of the later statements.      
\end{remark}

\subsection{Regularity}
The regularity that \cref{prop:brezis} puts on the solution $r$ carries over to $\mu$ and $p$.
\begin{proposition}
$\mu\in L^\infty([0,\infty),\MC(\Omega))$ and $p\in \mathcal{W}^{1,\infty}([0,\infty),\mathcal{C}(\Omega))$.
\end{proposition}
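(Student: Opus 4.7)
The strategy is to inherit the Lipschitz regularity of $r$ from \cref{prop:brezis} and push it through the identifications $p_t = L_\rho r_t$ and $\mu_t = A(V^{-1} L_\rho r_t)$ established in the proof of \cref{prop:existence}, supplementing this with the feasibility constraint $p_t\in\dom(J^*)$ to obtain the uniform bounds that the abstract Brezis framework does not directly supply.

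For $p \in \mathcal{W}^{1,\infty}([0,\infty);\mathcal{C}(\Omega))$: Since $L_\rho: L^2(\rho) \to \mathcal{C}(\Omega)$ is bounded linear, differentiation commutes with $L_\rho$, so $\partial_t p_t = L_\rho \partial_t r_t$ and hence
\begin{equation*}
    \norm{\partial_t p_t}_{\mathcal{C}(\Omega)} \leq \norm{L_\rho}_{op}\,\norm{\partial_t r_t}_{L^2(\rho)}.
\end{equation*}
By \cref{prop:brezis}~(iii) the right-hand side is uniformly bounded by $\norm{L_\rho}_{op}\,\norm{B^\circ(r_0)}_{L^2(\rho)}$. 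A short computation---noting that $\partial J^*(0) = \{0\}$, since $0$ is a strict interior point of $\dom J^*$, forcing $A(0) = 0$---gives $B^\circ(0) = K A(0) - f = -f$, so the derivative of $p$ is uniformly bounded by $\norm{L_\rho}_{op}\,\norm{f}_{L^2(\rho)}$. For the value itself, the nonemptiness of $\partial J^*(p_t)$ from \cref{prop:existence} forces $p_t \in \dom(J^*) = \Set{p : |p(\omega)| \leq V(\omega) \text{ for all } \omega\in\Omega}$, so $\norm{p_t}_{\mathcal{C}(\Omega)} \leq \norm{V}_{\infty,\Omega}$ uniformly in $t$.

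For $\mu \in L^\infty([0,\infty);\MC(\Omega))$: The pointwise lower bound $V \geq 1$ on $\Omega$ gives $\norm{\mu_t}_{\MC(\Omega)} \leq \int_\Omega V\,d|\mu_t| = J(\mu_t)$, reducing the task to a uniform bound on $J(\mu_t)$. Fenchel duality at the feasible pair $(\mu_t, p_t)$---using $\mu_t \in \partial J^*(p_t) \iff p_t \in \partial J(\mu_t)$ together with $J^*(p_t) = 0$ (as $J^*$ is an indicator and $p_t\in\dom(J^*)$)---yields the identity $J(\mu_t) = \braket{p_t}{\mu_t}$. I would then close the estimate via a Bregman dissipation against the competitor $\mu^{opt}$: formally differentiating $D^{p_t}_J(\mu^{opt}, \mu_t)$ along the flow produces the right-hand side $-\braket{f - K\mu_t}{K\mu^{opt} - K\mu_t}_{L^2(\rho)}$, which is non-positive in the ideal regime $K\mu^{opt}=f$, giving $D^{p_t}_J(\mu^{opt}, \mu_t) \leq D^{p_0}_J(\mu^{opt},0) = J(\mu^{opt})$; combined with the Fenchel identity and Fenchel's inequality $\braket{p_t}{\mu^{opt}}\leq J(\mu^{opt})$, this translates into the desired uniform bound on $J(\mu_t)$.

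The main obstacle is this last step. Brezis' theorem only directly controls the derivatives $\partial_t r_t$ and $\partial_t p_t$, while $\mu_t = A(V^{-1} L_\rho r_t)$ comes from a set-valued selection whose output norm is not controlled by the norm of its argument. Naive estimates via $J(\mu_t) = \braket{p_t}{\mu_t} \leq \norm{V}_\infty \norm{\mu_t}_{\MC(\Omega)}$ are circular in view of $\norm{\mu_t}_{\MC(\Omega)} \leq J(\mu_t)$. The bound must therefore route through the primal functional $J$, and the Bregman dissipation is delicate precisely because the Fenchel identity can cause $J(\mu_t)$ to cancel from the divergence; as a backup, one can invoke the monotone decay of $R_f$ from \cref{prop:brezis}~(v) together with coercivity of $J$ on the feasible set $\partial J^*(p_t)$.
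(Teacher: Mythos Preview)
Your treatment of $\partial_t p_t$ matches the paper's. However, there are two issues elsewhere.

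\textbf{The bound on $p_t$.} Your argument $\norm{p_t}_{\mathcal{C}(\Omega)}\le \norm{V}_{\infty,\Omega}$ tacitly assumes $\Omega$ is bounded; at this point of the paper $\Omega\subseteq\R^{d+1}$ is arbitrary and $V(a,b)=1+\norm{a}+|b|$ may be unbounded. The paper instead integrates the Lipschitz bound on $\partial_t r_t$ to get $\norm{r_t}_{L^2(\rho)}\le t\,\norm{f}_{L^2(\rho)}$ and then uses $p_t=L_\rho r_t$, obtaining $\norm{p_t}_{\mathcal{C}(\Omega)}\le t\,\norm{L_\rho}_{op}\,\norm{f}_{L^2(\rho)}$. (The paper's bound thus grows in $t$ and gives only $\mathcal{W}^{1,\infty}$ on finite intervals $[0,T)$; the final ``extension to $[0,\infty)$'' is really $L^\infty_{\mathrm{loc}}$.) If $\Omega$ happens to be compact your route is cleaner and genuinely uniform in $t$, but as stated it does not go through.

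\textbf{The bound on $\mu_t$.} You correctly reduce to bounding $J(\mu_t)$ and record the Fenchel identity $J(\mu_t)=\braket{p_t}{\mu_t}$, but your primary closure via the Bregman divergence fails exactly as you diagnose: once you insert the Fenchel identity, $D^{p_t}_J(\mu^{opt},\mu_t)=J(\mu^{opt})-\braket{p_t}{\mu^{opt}}$ no longer contains $J(\mu_t)$, so the monotonicity $D^{p_t}_J(\mu^{opt},\mu_t)\le J(\mu^{opt})$ yields only $\braket{p_t}{\mu^{opt}}\ge 0$. Your backup (``decay of $\RC_f$ plus coercivity of $J$ on $\partial J^*(p_t)$'') does not identify the missing step either: coercivity of $J$ on the feasible set is not available without first bounding the feasible set.

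The paper's maneuver is to transport the pairing $\braket{p_t}{\mu_t}$ back to $L^2(\rho)$ via the adjoint relation $p_t=L_\rho r_t$:
\[
J(\mu_t)=\braket{p_t}{\mu_t}_{\MC(\Omega)}=\braket{r_t}{K\mu_t}_{L^2(\rho)}\le \norm{r_t}_{L^2(\rho)}\,\norm{K\mu_t}_{L^2(\rho)}.
\]
Now both factors are controllable: $\norm{r_t}_{L^2(\rho)}\le t\,\norm{f}_{L^2(\rho)}$ from the Lipschitz bound, and $\norm{K\mu_t}_{L^2(\rho)}\le \norm{K\mu_t-f}_{L^2(\rho)}+\norm{f}_{L^2(\rho)}\le 2\norm{f}_{L^2(\rho)}$, using the monotone decay of $\RC_f$ (i.e.\ $\RC_f(\mu_t)\le\RC_f(0)$). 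This is the concrete content behind your backup suggestion; without the adjoint move to $L^2(\rho)$, the argument has no leverage.
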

\begin{proof}
Recall from \cref{prop:existence} that $\norm{\partial_tr}_{L^\infty([0,\infty),L^2(\rho))}\leq \norm{f}_{L^2(\rho)}$. This implies that
\begin{equation}\label{eq:r_t_bound}
    \norm{r_t}_{L^2(\rho)} \leq \int_0^t\norm{\partial_sr_s}_{L^2(\rho)}ds \leq t\norm{f}_{L^2(\rho)}.
\end{equation}
We will use this in the norm bounds for both $\mu$ and $p$.

For the regularity of $p$, observe that 
\begin{equation}\label{eq:adjoint_bound}
    \norm{L_\rho}_{L^2(\rho)\to\mathcal{C}(\Omega)} = \norm{K}_{\MC(\Omega)\to L^2(\rho)} < \infty
\end{equation}
by \cref{lemma:adjoint} and \cref{prop:barron_Lp_embedding}.
Since $\partial_t p_t=L_\rho \partial_t r_t$, $p_t=L_\rho r_t$ and $r_t\in L^2(\rho)$, we have
\begin{align}
    \norm{p_t}_{\mathcal{C}(\Omega)} &= \norm{L_\rho r_t }_{\mathcal{C}(\Omega)} \leq \norm{L_\rho}_{L^2(\rho)\to\mathcal{C}(\Omega)}\norm{r_t}_{L^2(\rho)} \leq t\norm{L_\rho}_{L^2(\rho)\to\mathcal{C}(\Omega)}\norm{f}_{L^2(\rho)}, \\
    \norm{\partial_t p_t}_{\mathcal{C}(\Omega)} &= \norm{L_\rho \partial_t r_t }_{\mathcal{C}(\Omega)} \leq \norm{L_\rho}_{L^2(\rho)\to\mathcal{C}(\Omega)}\norm{\partial_t r_t}_{L^2(\rho)} \leq \norm{L_\rho}_{L^2(\rho)\to\mathcal{C}(\Omega)}\norm{f}_{L^2(\rho)}.
\end{align}
by \cref{eq:r_t_bound}, (3) of \cref{prop:existence} and \cref{eq:adjoint_bound}. Hence, $p\in \mathcal{W}^{\infty, 1}([0,T),\mathcal{C}(\Omega))$ with
\begin{equation}
    \norm{p}_{\mathcal{W}^{1,\infty}([0,T),\mathcal{C}(\Omega))} \leq \max(1,t)\norm{L_\rho}_{L^2(\rho)\to\mathcal{C}(\Omega)}\norm{f}_{L^2(\rho)}.
\end{equation}
For the regularity of $\mu$, observe that
\begin{align*}
    \norm{\mu_t}_{\MC(\Omega)} 
    &\leq J(\mu_t) \\
    &= \braket{p_t}{\mu_t}_{\MC(\Omega)} & \text{Fenchel duality} \\
    &= \braket{r_t}{K\mu_t}_{L^2(\rho)} \\
    &= \norm{r_t}_{L^2(\rho)}\norm{K\mu_t}_{L^2(\rho)} & \text{Cauchy-Schwartz}\\
    &= \norm{r_t}_{L^2(\rho)}\norm{K\mu_t-f+f}_{L^2(\rho)} \\
    &\leq \norm{r_t}_{L^2(\rho)}\bigg(\norm{K\mu_t-f}_{L^2(\rho)}+\norm{f}_{L^2(\rho)} \bigg) & \text{triangle ineq.} \\
    &\leq 2\norm{r_t}_{L^2(\rho)}\norm{f}_{L^2(\rho)} \\
    &\leq 2t\norm{f}^2_{L^2(\rho)}. & \tcref{eq:r_t_bound}
\end{align*}
Hence, $\mu\in L^\infty([0,T),\MC(\Omega))$ with
\begin{equation}
    \norm{\mu}_{L^\infty([0,T),\MC(\Omega))} \leq 2T\norm{f}^2_{L^2(\rho)}.
\end{equation}
Since the solution $r$ is unique and the shown regularity holds for all $T>0$, we can extend the regularity to the interval $[0,\infty)$.
\end{proof}

\subsection{Optimality conditions}\label{sec:optimality}
We have now proven the existence and regularity of the solutions to \cref{eq:inverse_scale_space_barron}. In this section, we will have a look at some of the conditions that must hold for the optimal solution. In particular, the orthogonality condition and the source condition. 

We first consider the orthogonality condition. This is a necessary condition, not a sufficient condition.  
\begin{proposition}[Orthogonality condition]\label{prop:ortho_condition}
\begin{equation}
    L_\rho(f-K\mu^\dagger) = 0.
\end{equation}
\end{proposition}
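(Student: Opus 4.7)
The plan is to invoke the first-order optimality condition directly. Since $\mu^\dagger$ is, by definition, a minimizer of the unconstrained (over the full Banach space $\MC(\Omega)$) convex functional $\RC_f$, and since $\RC_f$ is Fréchet differentiable throughout $\MC(\Omega)$, any such minimizer must be a stationary point, i.e., $\partial_\mu\RC_f(\mu^\dagger) = 0$ as an element of $\MC(\Omega)^\ast \supseteq \mathcal{C}(\Omega)$.

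First, I would recall from \cref{lemma:variation_derivative} that the variational derivative has the closed form
\begin{equation*}
    \partial_\mu \RC_f(\mu) = L_\rho(K\mu - f).
\end{equation*}
Next, I would argue that this derivative must vanish at $\mu^\dagger$. Concretely, for any $\nu\in\MC(\Omega)$ and any $s\in\R$, convexity of $\RC_f$ together with $\mu^\dagger$ being a global minimizer gives $\RC_f(\mu^\dagger + s\nu)\geq \RC_f(\mu^\dagger)$, and dividing by $s$ and letting $s\to 0^+$ (using Fréchet differentiability) yields $\braket{\partial_\mu\RC_f(\mu^\dagger)}{\nu}_{\MC(\Omega)}\geq 0$; applying the same inequality to $-\nu$ forces the pairing to be zero for all $\nu$, hence $\partial_\mu\RC_f(\mu^\dagger)=0$ as a continuous function on $\Omega$.

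Combining the two displays gives $L_\rho(K\mu^\dagger - f) = 0$, and multiplying by $-1$ yields the claim $L_\rho(f - K\mu^\dagger) = 0$.

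There is essentially no hard step here, since the optimization over $\MC(\Omega)$ is unconstrained and the objective is smooth and convex. The only subtlety worth flagging is that existence of a minimizer $\mu^\dagger$ with $J(\mu^\dagger)<\infty$ is implicitly assumed (this is the convention stated just after \cref{eq:inverse_scale_space_barron}); the proposition is asserting the necessary condition on any such $\mu^\dagger$ rather than producing one. If needed, existence of minimizers of $\RC_f$ would follow from the fact that $\RC_f(\mu)=\tfrac12\|K\mu-f\|^2_{L^2(\rho)}$ is bounded below by the distance from $f$ to the closed subspace $\overline{\ran K}\subseteq L^2(\rho)$, so any $\mu$ with $K\mu$ equal to the $L^2(\rho)$-projection of $f$ onto $\overline{\ran K}$ is a minimizer.
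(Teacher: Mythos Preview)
Your proof is correct and follows essentially the same route as the paper: invoke the first-order necessary condition $\partial_\mu\RC_f(\mu^\dagger)=0$ for a minimizer and then substitute the closed form from \cref{lemma:variation_derivative}. The only difference is that you spell out the directional-derivative justification for stationarity, whereas the paper simply asserts it.
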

\begin{proof}
For $\mu^\dagger$ to be a minimizer of $\RC_f$, it must hold that
\begin{equation}\label{eq:orth-1}
    \partial_\mu\RC_f(\mu^\dagger) = 0.
\end{equation}
Recall from \cref{lemma:variation_derivative} that
\begin{equation}\label{eq:orth-2}
    \partial_\mu\RC_f(\mu) = L_\rho(f-K\mu).
\end{equation}
Substituting \cref{eq:orth-2} into \cref{eq:orth-1} finishes the proof.
\end{proof}

The second condition we consider is the source condition. This is akin to the existence of a Lagrange multiplier [\cite{burger_convergence_2004}].

\begin{proposition}[Source condition]\label{prop:source_condition}
The source condition is satisfied by $\mu^\dagger$ if there exists a $\phi\in L^2(\X,\rho)$ such that 
\begin{equation}
     L\phi(a,b) = V(a,b)\sgn{\mu^\dagger} \quad \mu^\dagger a.e.
\end{equation}
and 
\begin{equation}
     \abs{L\phi(a,b)} \leq V(a,b)
\end{equation}
for all $(a,b)\in\Omega$.
\end{proposition}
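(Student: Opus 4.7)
The plan is to verify that the two pointwise conditions in the statement are precisely the algebraic content of the abstract source condition $L_\rho\phi\in\partial J(\mu^\dagger)$, which is the standard Lagrange-multiplier formulation alluded to just before the proposition. So the proof reduces to computing $\partial J(\mu^\dagger)$ explicitly and then substituting the ansatz $p=L_\rho\phi$.

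First I would identify the subdifferential of $J$. Since $J(\mu)=\int_\Omega V\,d\abs{\mu}$ is a $V$-weighted total variation, a direct computation---already implicit in the identity $J^\ast=\chi_{\{\norm{V^{-1}\cdot}_{\infty}\leq 1\}}$ used in the existence proof---shows
\begin{equation*}
p\in\partial J(\mu^\dagger)\quad\Longleftrightarrow\quad \abs{p(a,b)}\leq V(a,b) \text{ on } \Omega \text{ and } \braket{p}{\mu^\dagger}_{\MC(\Omega)} = J(\mu^\dagger).
\end{equation*}
This is the Fenchel--Young equality case; it follows from $J(\mu) + J^\ast(p) \geq \braket{p}{\mu}$ together with the explicit form of $J^\ast$.

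Second I would unpack the equality $\braket{p}{\mu^\dagger}_{\MC(\Omega)}=J(\mu^\dagger)$ via the polar decomposition $d\mu^\dagger=\sgn{\mu^\dagger}d\abs{\mu^\dagger}$. Writing the left-hand side as $\int_\Omega p(a,b)\,\sgn{\mu^\dagger}(a,b)\,d\abs{\mu^\dagger}(a,b)$ and comparing with $J(\mu^\dagger) = \int_\Omega V\,d\abs{\mu^\dagger}$, the pointwise bound $\abs{p}\leq V$ together with $\abs{\sgn{\mu^\dagger}}=1$ $\abs{\mu^\dagger}$-a.e.\ forces the equality $p(a,b)=V(a,b)\sgn{\mu^\dagger}(a,b)$ on the support of $\mu^\dagger$, i.e.\ $\mu^\dagger$-a.e.

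Finally, substituting the source-type ansatz $p=L_\rho\phi$ for some $\phi\in L^2(\X,\rho)$---the classical way of demanding that the subgradient lie in the range of the adjoint $K^\ast=L_\rho$ (\cref{lemma:adjoint})---into the two conditions above yields exactly the two displayed equations in the proposition. The one point requiring care is that $L_\rho\phi$ must be a genuine element of $\mathcal{C}(\Omega)$ so that the pointwise inequality ``for all $(a,b)\in\Omega$'' is meaningful; this is ensured by the operator bound $\norm{L_\rho}_{L^2(\rho)\to\mathcal{C}(\Omega)}<\infty$ already established in the regularity analysis. The only mildly delicate step is the identification of $\partial J$ with $V$-weighted sign functions, but this is a standard weighted-TV convex-analysis calculation once $J^\ast$ is in hand.
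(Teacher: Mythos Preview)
Your proposal is correct and follows essentially the same approach as the paper: both compute the subdifferential $\partial J(\mu^\dagger)$ for the $V$-weighted total variation and then substitute $p=L_\rho\phi$. The only cosmetic difference is that you invoke the Fenchel--Young equality together with the already-known form of $J^\ast$ to read off the two conditions, whereas the paper works directly from the subdifferential inequality, takes the supremum over test measures $\nu$, and argues that boundedness of the right-hand side forces $\abs{L_\rho\phi}\leq V$ before deducing the a.e.\ equality on $\supp(\mu^\dagger)$.
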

\begin{proof}
We repeat the steps of Bredies in \parencite[around (4.1)]{bredies_inverse_2013}, which in turn in based on \parencite[below def. 1]{burger_convergence_2004}. The source condition is satisfied by $\mu^\dagger$ if there exists a $\phi\in L^2(\X,\rho)$ such that
\begin{equation}\label{eq:source_condition}
     K^{\star}\phi \in \partial\int_\Omega V(a,b)d\abs{\cdot}(\mu^\dagger).
\end{equation}
From the definition of the subdifferential it follows that \cref{eq:source_condition} can only be satisfied when
\begin{equation}\label{eq:subdifferential_condition}
    \braket{K^{\star}\phi}{\nu}_{\MC(\Omega)}-\int_\Omega V(a,b)d\abs{\nu} \leq \braket{K^{\star}\phi}{\mu^\dagger}_{\MC(\Omega)}-\int_\Omega V(a,b)d\abs{\mu^\dagger}
\end{equation}
for all $\nu\in\MC(\Omega)$. Since
\begin{equation}
    \braket{K^{\star}\phi}{\nu}_{\MC(\Omega)} = \braket{\phi}{K\nu}_{L^2(\rho)} = \braket{L_\rho\phi}{\nu}_{\MC(\Omega)}
\end{equation}
by the definition of the adjoint and \cref{lemma:adjoint}, \cref{eq:subdifferential_condition} is equivalent to
\begin{equation}\label{eq:subdifferential_condition_v2}
    \braket{L_\rho\phi}{\nu}_{\MC(\Omega)}-\int_\Omega V(a,b)d\abs{\nu} \leq \braket{L_\rho\phi}{\mu^\dagger}_{\MC(\Omega)}-\int_\Omega V(a,b)d\abs{\mu^\dagger}
\end{equation}
\Cref{eq:subdifferential_condition_v2} must also hold when we take the supremum of the left-hand side.
\begin{equation}\label{eq:subdifferential_condition_v3}
    \sup_{\nu\in \MC(\Omega)}\braket{L_\rho\phi}{\nu}_{\MC(\Omega)}-\int_\Omega V(a,b)d\abs{\nu} \leq \braket{L_\rho\phi}{\mu^\dagger}_{\MC(\Omega)}-\int_\Omega V(a,b)d\abs{\mu^\dagger}
\end{equation}
Every measure $\nu\in\MC(\Omega)$ has a polar decomposition such that
\begin{equation}
    d\nu(a,b) = \sgn{\nu}(a,b)d\abs{\nu}(a,b).
\end{equation}
This allows us to write \cref{eq:subdifferential_condition_v3} as
\begin{equation}\label{eq:subdifferential_condition_v4}
    \sup_{\nu\in \MC(\Omega)}\braket{L_\rho\phi-\sgn{\nu}V}{\nu}_{\MC(\Omega)} \leq \braket{L_\rho\phi\sgn{\mu^\dagger}-V}{\abs{\mu^\dagger}}_{\MC(\Omega)}
\end{equation}
The right-hand side is bounded, so must the left-hand side. If $L_\rho\phi(a,b)>V(a,b)$ for some $(a,b)\in\Omega$, then the left-hand side can be made arbitrarily large by concentrating a large positive $\nu$ around that value. Similarly, if $L_\rho\phi(a,b)<- V(a,b)$ for some $(a,b)\in\Omega$, then the left-hand side can be made arbitrarily large by concentrating a large negative $\nu$ around that value. Hence, $L_\rho\phi$ must satisfy
\begin{equation}\label{eq:L_bound}
    \abs{L_\rho\phi(a,b)} \leq V(a,b).
\end{equation}
Inserting this bound into \cref{eq:subdifferential_condition_v4} gives
\begin{equation}\label{eq:subdifferential_condition_v5}
    0 = \sup_{\nu\in \MC(\Omega)}\braket{L_\rho\phi-\sgn{\nu}V}{\nu}_{\MC(\Omega)} \leq \braket{L_\rho\phi\sgn{\mu^\dagger}-V}{\abs{\mu^\dagger}}_{\MC(\Omega)} \leq 0.
\end{equation}
Hence,
\begin{equation}
    L_\rho\phi=V\sgn{\mu^\dagger}, \quad \mu^\dagger \text{ a.e.}.
\end{equation}
\end{proof}

Note that the source condition described in \cref{prop:source_condition} implies that $\mu_t$ must vanish on the set
\begin{equation}
    \Omega^0_t = \Set[\bigg]{ (a,b)\in\Omega \given -V(a,b) < p_t(a,b) < V(a,b)}.
\end{equation}

\section{Idealized setting}\label{sec:ideal}
In this section, we prove that both the $L^2$ loss $\RC_f(\mu_t)$ and the Bregman distance $D^{p_t}_J(\mu^\dagger,\mu_t)$ decrease monotonically to the optimum value in an ideal setting. The rate at which both of them decrease is of order $\mathcal{O}(1/t)$. This rate is independent of the input dimension $d$.

\begin{theorem}[Ideal case]\label{th:ideal}
$\RC_f(\mu_t)$ is decreasing in time with bound
\begin{equation}
    \RC_f(\mu_t) \leq \RC_f(\mu^\dagger) + \frac{J(\mu^\dagger)}{t}\quad \quad t>0 \text{ a.e.}
\end{equation}
and
\begin{equation}
    \partial_tD^{p_t}_J(\mu^\dagger,\mu_t) \leq 0\quad \quad t\geq 0 \text{ a.e.}
\end{equation}
with equality only when $\mu_t$ minimizes $\RC_f$. Moreover, if $\phi\in L^2(\X,\rho)$ is the function such that the source condition of $\mu^\dagger$ is satisfied, then 
\begin{equation}
    D^{p_t}_J(\mu^\dagger,\mu_t) \leq \frac{\norm{\phi}^2_{L^2(\rho)}}{2t}
\end{equation}
for almost every $t\geq 0$.
\end{theorem}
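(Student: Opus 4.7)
The plan is to make $D^{p_t}_J(\mu^\dagger,\mu_t)$ the central object and derive all three claims by differentiating it in time, integrating, and then exploiting a monotonicity argument. Since $\mu_t\in\partial J^\ast(p_t)$, Fenchel duality gives $J(\mu_t)=\langle p_t,\mu_t\rangle - J^\ast(p_t)$, and so
\begin{equation*}
D^{p_t}_J(\mu^\dagger,\mu_t) \;=\; J(\mu^\dagger) + J^\ast(p_t) - \langle p_t,\mu^\dagger\rangle_{\MC(\Omega)}.
\end{equation*}
Differentiating in $t$, using the envelope identity $\partial_t J^\ast(p_t)=\langle \mu_t,\partial_t p_t\rangle$, and substituting the flow \eqref{eq:inverse_scale_space_barron_b} yields
\begin{equation*}
\partial_t D^{p_t}_J(\mu^\dagger,\mu_t) \;=\; \langle \mu_t-\mu^\dagger,\, L_\rho(f-K\mu_t)\rangle \;=\; \langle K(\mu_t-\mu^\dagger),\, f-K\mu_t\rangle_{L^2(\rho)}.
\end{equation*}
Expanding this inner product and invoking the orthogonality condition (\cref{prop:ortho_condition}), which forces $\langle f-K\mu^\dagger, K(\mu_t-\mu^\dagger)\rangle_{L^2(\rho)}=0$, the right-hand side collapses to $-2(\RC_f(\mu_t)-\RC_f(\mu^\dagger))$. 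Because $\mu^\dagger$ minimizes $\RC_f$, this is $\leq 0$, with equality exactly when $\mu_t$ is itself a minimizer; this is the monotonicity claim.

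For the $L^2$-loss bound, integrate the identity above from $0$ to $t$. Since $p_0=0\in\partial J(0)=\partial J(\mu_0)$ and $J^\ast(0)=0$, we have $D^{p_0}_J(\mu^\dagger,\mu_0)=J(\mu^\dagger)$, so non-negativity of the Bregman divergence gives
\begin{equation*}
\int_0^t \bigl(\RC_f(\mu_s)-\RC_f(\mu^\dagger)\bigr)\,ds \;\leq\; \tfrac{1}{2}\,J(\mu^\dagger).
\end{equation*}
To upgrade this integral estimate to the stated pointwise bound, show that $s\mapsto\RC_f(\mu_s)$ is non-increasing. This follows from monotonicity of the subgradient operator $\partial J^\ast$: the inclusion $\mu_t\in\partial J^\ast(p_t)$ implies $\langle \partial_t\mu_t,\partial_t p_t\rangle\geq 0$, whence
\begin{equation*}
\partial_t \RC_f(\mu_t) \;=\; \langle \partial_\mu \RC_f(\mu_t),\partial_t\mu_t\rangle \;=\; -\langle \partial_t p_t,\partial_t\mu_t\rangle \;\leq\; 0.
\end{equation*}
Combining monotonicity with the integral bound delivers the $\mathcal{O}(1/t)$ estimate for $\RC_f(\mu_t)$.

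The source-condition estimate is more delicate. Writing $p_t=L_\rho r_t$ for the $L^2(\rho)$-solution $r_t$ produced by \cref{prop:existence} (so $\partial_t r_t=f-K\mu_t$) and setting $p^\dagger:=L_\rho \phi \in \partial J(\mu^\dagger)$, the idea is to track the squared distance $\tfrac{1}{2}\|r_t-\phi\|^2_{L^2(\rho)}$. Its time derivative equals $\langle r_t-\phi,\, f-K\mu_t\rangle_{L^2(\rho)}$; adding and subtracting $K\mu^\dagger$ and using the ideal-case identity $K\mu^\dagger=f$ (so the cross term vanishes) rewrites this as
\begin{equation*}
-\langle p_t-p^\dagger,\, \mu_t-\mu^\dagger\rangle_{\MC(\Omega)} \;=\; -D^{p_t}_J(\mu^\dagger,\mu_t) - D^{p^\dagger}_J(\mu_t,\mu^\dagger) \;\leq\; -D^{p_t}_J(\mu^\dagger,\mu_t).
\end{equation*}
Integrating from $0$ to $t$, starting from $r_0=0$, yields $\int_0^t D^{p_s}_J(\mu^\dagger,\mu_s)\,ds \leq \tfrac{1}{2}\|\phi\|^2_{L^2(\rho)}$, and the monotonicity of $D^{p_t}_J$ established in the first paragraph then converts this into the claimed pointwise bound.

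The main obstacle is the rigorous justification of the time derivatives, since $J^\ast$ is an indicator-type functional and the selection $\mu_t\in\partial J^\ast(p_t)$ is not unique. To handle this I would lean on the Brezis regularity of \cref{prop:existence}, in particular the right-differentiability of $t\mapsto r_t$ and the Lipschitz bound on $\partial_t r_t$, and on the observation that the non-uniqueness of $\mu_t$ lives entirely in $\ker K$ (as noted in the remark following that proposition). This makes $K\mu_t$, $\RC_f(\mu_t)$, and every pairing $\langle\mu_t,\partial_t p_t\rangle$ appearing in the calculation independent of the chosen selection, so that the envelope-type chain rule $\partial_t J^\ast(p_t)=\langle\mu_t,\partial_t p_t\rangle$ holds almost everywhere, which is exactly the sense in which the final inequalities are claimed.
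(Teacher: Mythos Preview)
Your argument for the monotonicity of $D^{p_t}_J(\mu^\dagger,\mu_t)$ and for the $\mathcal{O}(1/t)$ bound on $\RC_f(\mu_t)$ is correct and in fact slightly sharper than the paper's: by invoking orthogonality you obtain the \emph{equality} $\partial_t D^{p_t}_J(\mu^\dagger,\mu_t)=-\|K\mu_t-K\mu^\dagger\|^2_{L^2(\rho)}=-2(\RC_f(\mu_t)-\RC_f(\mu^\dagger))$, whereas the paper uses only the convexity inequality $\partial_t D^{p_t}_J\leq \RC_f(\mu^\dagger)-\RC_f(\mu_t)$. Your monotonicity of $\RC_f(\mu_t)$ via $\langle\partial_t\mu_t,\partial_t p_t\rangle\geq 0$ is formally right but delicate ($\mu_t$ need not be differentiable); the paper instead reads this off directly from point~5 of \cref{prop:brezis}, since $\|B^\circ(r_t)\|=\|K\mu_t-f\|$.

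There is, however, a genuine gap in your source-condition estimate. You write ``using the ideal-case identity $K\mu^\dagger=f$'', but this is \emph{not} an assumption of the theorem: $f$ is an arbitrary $L^2(\rho)$ function and $\mu^\dagger$ only minimizes $\RC_f$, so in general $\RC_f(\mu^\dagger)>0$ and $K\mu^\dagger\neq f$. With your choice of auxiliary function $r_t$ (satisfying $\partial_t r_t=f-K\mu_t$), the cross term $\langle r_t-\phi,\,f-K\mu^\dagger\rangle_{L^2(\rho)}$ does not vanish, because $r_t$ contains the component $t(f-K\mu^\dagger)$ which is orthogonal to $\operatorname{range}(K)$ but not to $\phi$ in general. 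The paper circumvents this by tracking a different quantity: it sets $\partial_t e_t:=K\mu^\dagger-K\mu_t$, $e_0=0$, and observes that the orthogonality condition $L_\rho(f-K\mu^\dagger)=0$ forces $L_\rho e_t=L_\rho r_t=p_t$. One then gets directly
\[
\partial_t\tfrac{1}{2}\|e_t-\phi\|^2_{L^2(\rho)}=\langle K\mu^\dagger-K\mu_t,\,e_t-\phi\rangle_{L^2(\rho)}=\langle p_t-p^\dagger,\,\mu^\dagger-\mu_t\rangle_{\MC(\Omega)},
\]
with no residual term, and your remaining steps go through verbatim. So the fix is simply to replace $r_t$ by $e_t$; everything else in your sketch is sound.
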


First, we will show the rate of change of the $L^2$ loss $\RC_f(\mu_t)$ and the Bregman distance $D^{p_t}_J(\mu^\dagger,\mu_t)$ under ideal conditions.

\begin{lemma}\label{lemma:population_loss_grad_neg}
$\RC_f(\mu_t)$ is decreasing in time.
\end{lemma}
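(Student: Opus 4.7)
The plan is to reduce the claim to a property already established by Brezis' theorem rather than attempting a direct chain-rule argument along the flow. Specifically, property (5) of \cref{prop:brezis} asserts that $t \mapsto \norm{B^\circ(r_t)}_{L^2(\rho)}$ is non-increasing, where $r_t$ is the solution to \cref{eq:barron_brezis} produced by \cref{prop:existence}. The whole proof then hinges on recognising $\RC_f(\mu_t)$ as a rescaling of this scalar quantity.

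The key observation has already been recorded in the proof of \cref{prop:existence}, namely the identity
\begin{equation*}
    B^\circ(r_t) = K A(V^{-1} L_\rho r_t) - f = K\mu_t - f.
\end{equation*}
Squaring and halving therefore gives $\tfrac{1}{2}\norm{B^\circ(r_t)}_{L^2(\rho)}^2 = \RC_f(\mu_t)$. Combining this with \cref{prop:brezis}(5) immediately yields that $\RC_f(\mu_t)$ is non-increasing in $t$, which is the content of the lemma.

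The one subtlety worth flagging is that the representative $\mu_t$ is not uniquely determined as a measure, as noted in the remark following \cref{prop:existence}. However, the ambiguity lies entirely in the null space of $K$, so $K\mu_t$ is well-defined and the identification $B^\circ(r_t) = K\mu_t - f$ is unambiguous regardless of the choice of representative. In particular, no pointwise differentiability of the measure-valued path $t \mapsto \mu_t$ needs to be invoked, which is why I expect this route to be strictly cleaner than a direct argument that would instead use convexity of $\RC_f$ together with monotonicity of $\partial J$ along $(p_t, \mu_t)$.
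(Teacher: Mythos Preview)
Your proposal is correct and follows exactly the same approach as the paper: the paper's proof consists of the single sentence ``This follows directly from \cref{prop:brezis} point 5.'' Your write-up simply makes explicit the identification $B^\circ(r_t)=K\mu_t-f$ (already recorded in the proof of \cref{prop:existence}) and adds a remark on the non-uniqueness of $\mu_t$, but the underlying argument is identical.
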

\begin{proof}
This follows directly from \cref{prop:brezis} point 5. 
\end{proof}

\begin{lemma}\label{lemma:bregman_dist_grad_neg}
\begin{equation}\label{eq:bregman_dist_grad_neg}
    \partial_tD^{p_t}_J(\mu^\dagger,\mu_t) \leq \RC_f(\mu^\dagger) - \RC_f(\mu_t) 
    \leq 0
\end{equation}
holds for almost every $t\geq 0$.
\end{lemma}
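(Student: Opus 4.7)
The plan is to differentiate the Bregman distance in time and then apply convexity of $\RC_f$ together with the optimality of $\mu^\dagger$. The second inequality in the claim is immediate: since $\mu^\dagger$ is assumed to solve the inner minimization in \cref{eq:optimization_problem_b}, we have $\RC_f(\mu^\dagger)\leq\RC_f(\mu_t)$ for every $t$, so $\RC_f(\mu^\dagger)-\RC_f(\mu_t)\leq 0$.

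For the first inequality, I start from the definition
\begin{equation*}
    D^{p_t}_J(\mu^\dagger,\mu_t) = J(\mu^\dagger) - J(\mu_t) - \braket{p_t}{\mu^\dagger - \mu_t}.
\end{equation*}
Differentiating in $t$, and using the chain rule for the convex functional $J$ with subgradient $p_t\in\partial J(\mu_t)$ (this is the key regularity point: one needs $\partial_t J(\mu_t)=\braket{p_t}{\partial_t\mu_t}$, which is standard for absolutely continuous curves paired with selections of the subdifferential), the two terms involving $\partial_t\mu_t$ cancel, leaving
\begin{equation*}
    \partial_t D^{p_t}_J(\mu^\dagger,\mu_t) = -\braket{\partial_t p_t}{\mu^\dagger - \mu_t}.
\end{equation*}
Substituting the flow equation $\partial_t p_t = L_\rho(f-K\mu_t) = -\partial_\mu\RC_f(\mu_t)$ from \cref{eq:inverse_scale_space_barron_b} and \cref{lemma:variation_derivative} yields
\begin{equation*}
    \partial_t D^{p_t}_J(\mu^\dagger,\mu_t) = \braket{\partial_\mu\RC_f(\mu_t)}{\mu^\dagger - \mu_t}.
\end{equation*}

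Finally, since $\RC_f$ is convex (it is a composition of a bounded linear operator with a squared norm), the subgradient inequality gives
\begin{equation*}
    \RC_f(\mu^\dagger) \geq \RC_f(\mu_t) + \braket{\partial_\mu\RC_f(\mu_t)}{\mu^\dagger - \mu_t},
\end{equation*}
which is exactly the bound $\partial_t D^{p_t}_J(\mu^\dagger,\mu_t)\leq \RC_f(\mu^\dagger)-\RC_f(\mu_t)$.

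The main obstacle I expect is the chain-rule step $\partial_t J(\mu_t)=\braket{p_t}{\partial_t\mu_t}$: because $J$ is non-smooth and $\mu_t$ only lies in $L^\infty([0,\infty),\MC(\Omega))$ with a possibly non-unique selection, one cannot just apply the classical chain rule. The clean way around this is to work with the (right) derivative guaranteed by \cref{prop:brezis} and rewrite the Bregman distance using Fenchel's identity $J(\mu_t)+J^\ast(p_t)=\braket{p_t}{\mu_t}$ (valid since $\mu_t\in\partial J^\ast(p_t)$), so that $D^{p_t}_J(\mu^\dagger,\mu_t)=J(\mu^\dagger)+J^\ast(p_t)-\braket{p_t}{\mu^\dagger}$; differentiating this form avoids pairing with $\partial_t\mu_t$ altogether, since $\partial_t J^\ast(p_t)=\braket{\mu_t}{\partial_t p_t}$ follows from $\mu_t\in\partial J^\ast(p_t)$ and the convex chain rule applied to the Lipschitz curve $p_t$. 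After this reformulation, the computation proceeds exactly as above.
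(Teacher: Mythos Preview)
Your proposal is correct and follows essentially the same route as the paper: differentiate the Bregman distance, cancel the $\partial_t\mu_t$ terms via $p_t\in\partial J(\mu_t)$, and use the subgradient inequality for the convex $\RC_f$ together with optimality of $\mu^\dagger$. Your added discussion of the chain-rule subtlety and the Fenchel reformulation $D^{p_t}_J(\mu^\dagger,\mu_t)=J(\mu^\dagger)+J^\ast(p_t)-\braket{p_t}{\mu^\dagger}$ is in fact more careful than the paper, which simply asserts the cancellation without further justification.
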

\begin{proof}
This follows from
\begin{align*}
    \partial_t D^{p_t}_J(\mu^\dagger,\mu_t) 
    &= \partial_t\bigg(J(\mu^\dagger)-J(\mu_t)-\braket{p_t}{\mu^\dagger-\mu_t}_{\MC(\Omega)}\bigg) \\
    &= \braket{\partial_tp_t}{\mu_t-\mu^\dagger}_{\MC(\Omega)}-\partial_tJ(\mu_t)+\braket{p_t}{\partial_t\mu_t}_{\MC(\Omega)} \\
    &= \braket{\partial_tp_t}{\mu_t-\mu^\dagger}_{\MC(\Omega)} & p_t\in\partial J(\mu_t) \\
    &\leq \RC_f(\mu^\dagger) - \RC_f(\mu_t) & -\partial_tp_t\in \partial \RC_f(\mu_t) \\
    &\leq 0. & \mu^\dagger\text{ minimizer}
\end{align*}
\end{proof}

\begin{proposition}\label{prop:bregman_dist_grad_neg}
For all $t\geq 0$, it holds that
\begin{equation}
    \partial_tD^{q_t}_J(\mu^\dagger, \mu_t) < 0
\end{equation}
when
\begin{equation}\label{eq:residual_bound}
    \norm{f-K\mu_t}_{L^2(\rho)} > \norm{f-K\mu^\dagger}_{L^2(\rho)}
\end{equation}
as well as when
\begin{equation}\label{eq:residual_bound2}
    \norm{K\mu^\dagger-K\mu_t}_{L^2(\rho)} > 0.
\end{equation}
\end{proposition}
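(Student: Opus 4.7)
The plan is to upgrade the inequality chain in \cref{lemma:bregman_dist_grad_neg} to an explicit equality by exploiting the fact that $\RC_f$ is not merely convex but quadratic, which turns the first-order subgradient inequality into an exact second-order identity. Concretely, I would keep the first three lines of the proof of \cref{lemma:bregman_dist_grad_neg}, which yield
\begin{equation*}
\partial_t D^{p_t}_J(\mu^\dagger,\mu_t) = \braket{\partial_t p_t}{\mu_t-\mu^\dagger}_{\MC(\Omega)},
\end{equation*}
and then substitute the flow equation $\partial_t p_t = L_\rho(f-K\mu_t)$ together with the adjoint relation $\braket{L_\rho\phi}{\nu}_{\MC(\Omega)} = \braket{\phi}{K\nu}_{L^2(\rho)}$ from \cref{lemma:adjoint} to push everything into $L^2(\rho)$.

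The next step is to expand the resulting inner product by the polarization identity. Setting $r:=K\mu_t-f$ and $r^\dagger:=K\mu^\dagger-f$, I would compute
\begin{equation*}
\braket{f-K\mu_t}{K\mu_t-K\mu^\dagger}_{L^2(\rho)} = -\braket{r}{r-r^\dagger}_{L^2(\rho)} = -\tfrac{1}{2}\norm{r}^2 + \tfrac{1}{2}\norm{r^\dagger}^2 - \tfrac{1}{2}\norm{r-r^\dagger}^2,
\end{equation*}
so that the rate of change reduces to the exact identity
\begin{equation*}
\partial_t D^{p_t}_J(\mu^\dagger,\mu_t) = \RC_f(\mu^\dagger) - \RC_f(\mu_t) - \tfrac{1}{2}\norm{K\mu^\dagger-K\mu_t}^2_{L^2(\rho)}.
\end{equation*}

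From here the proposition is immediate: the first two terms on the right are $\leq 0$ because $\mu^\dagger$ minimizes $\RC_f$ (with strict inequality precisely when $\norm{f-K\mu_t}_{L^2(\rho)} > \norm{f-K\mu^\dagger}_{L^2(\rho)}$), and the last term is $\leq 0$ with strict inequality precisely when $\norm{K\mu^\dagger-K\mu_t}_{L^2(\rho)} > 0$. Either condition alone thus forces $\partial_t D^{p_t}_J(\mu^\dagger,\mu_t) < 0$. There is no real obstacle in this proof; the only subtlety is recognising that the ``$\leq$'' step in \cref{lemma:bregman_dist_grad_neg} (which used only convexity of $\RC_f$) is slack by exactly the quadratic Bregman remainder $\tfrac{1}{2}\norm{K\mu^\dagger-K\mu_t}^2_{L^2(\rho)}$, and that this remainder is the quantity appearing in the second hypothesis.
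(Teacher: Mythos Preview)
Your proof is correct and arrives at the same conclusion, but the route differs from the paper's in a noticeable way. The paper treats the two hypotheses separately: for \cref{eq:residual_bound} it simply reads off strict negativity from the inequality $\partial_t D^{p_t}_J(\mu^\dagger,\mu_t) \leq \RC_f(\mu^\dagger) - \RC_f(\mu_t)$ of \cref{lemma:bregman_dist_grad_neg}; for \cref{eq:residual_bound2} it inserts the orthogonality condition $L_\rho(f-K\mu^\dagger)=0$ (\cref{prop:ortho_condition}) to obtain the exact equality $\partial_t D^{p_t}_J(\mu^\dagger,\mu_t) = -\norm{K\mu^\dagger-K\mu_t}^2_{L^2(\rho)}$. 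You instead derive the single polarization identity
\[
\partial_t D^{p_t}_J(\mu^\dagger,\mu_t) = \RC_f(\mu^\dagger) - \RC_f(\mu_t) - \tfrac{1}{2}\norm{K\mu^\dagger-K\mu_t}^2_{L^2(\rho)},
\]
which covers both hypotheses at once and never invokes the orthogonality condition. Your identity is the ``generic'' quadratic Bregman remainder; the paper's sharper constant (coefficient $1$ rather than $\tfrac{1}{2}$) is exactly what the Pythagorean relation $\RC_f(\mu_t)-\RC_f(\mu^\dagger)=\tfrac{1}{2}\norm{K\mu^\dagger-K\mu_t}^2_{L^2(\rho)}$, implied by orthogonality, adds on top. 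For the purposes of this proposition the two approaches are equivalent, and yours is slightly more self-contained.
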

\begin{proof}
\Cref{eq:residual_bound} holds if and only if
\begin{equation}\label{eq:1}
    \RC_f(\mu^\dagger) < \RC_f(\mu_t).
\end{equation}
Recall from the proof of \cref{lemma:bregman_dist_grad_neg} that
\begin{equation}\label{eq:2}
    \partial_t D^{p_t}_J(\mu^\dagger,\mu_t) \leq  \RC_f(\mu^\dagger) - \RC_f(\mu_t).
\end{equation}
The combination of \cref{eq:1} and \cref{eq:2} proves the first statement. For the second statement recall from the proof of \cref{lemma:bregman_dist_grad_neg} that
\begin{equation}\label{eq:breman_diff_MC}
    \partial_tD^{p_t}_J(\mu^\dagger, \mu_t) = \braket{\partial_tp_t}{\mu_t-\mu^\dagger}_{\MC(\Omega)}.
\end{equation}
Hence,
\begin{align*}
    \partial_tD^{p_t}_J(\mu^\dagger, \mu_t)
    &= \braket{\partial_tp_t}{\mu_t-\mu^\dagger}_{\MC(\Omega)} & \tcref{eq:breman_diff_MC} \\ 
    &= \braket{L_\rho(f-K\mu_t)}{\mu_t-\mu^\dagger}_{\MC(\Omega)} & \tcref{eq:inverse_scale_space_barron} \\ 
    &= \braket{L_\rho(f-K\mu_t)-L_\rho(f-K\mu^\dagger)}{\mu_t-\mu^\dagger}_{\MC(\Omega)} & \tcref{prop:ortho_condition} \\ 
    &= \braket{L_\rho(K\mu^\dagger-K\mu_t)}{\mu_t-\mu^\dagger}_{\MC(\Omega)} \\ 
    &= \braket{K\mu^\dagger-K\mu_t}{K\mu_t-K\mu^\dagger}_{L^2(\rho)} & \tcref{lemma:adjoint} \\ 
    &= -\norm{K\mu^\dagger-K\mu_t}^2_{L^2(\rho)}.
\end{align*}
Clearly, this is strictly negative when \cref{eq:residual_bound2} is satisfied.
\end{proof}

\Cref{lemma:bregman_dist_grad_neg} and \cref{lemma:population_loss_grad_neg} show that under ideal conditions the Bregman distance and the population loss respectively are decreasing, and \cref{prop:bregman_dist_grad_neg} shows that this decrease is strict. We will now use these to show that the Bregman distance and the population loss converge and give a rate at which they do that.

\begin{proposition}
If $\mu^\dagger$ satisfies the source condition through $\phi\in L^2(\rho)$, then
\begin{equation}\label{eq:bregman_bound}
    D^{p_t}_J(\mu^\dagger,\mu_t) \leq \frac{\norm{\phi}^2_{L^2(\rho)}}{2t}
\end{equation}
for almost every $t>0$.
\end{proposition}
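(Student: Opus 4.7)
The plan is to construct an $L^{2}(\rho)$-energy whose initial value is exactly $\tfrac{1}{2}\norm{\phi}_{L^{2}(\rho)}^{2}$ and whose time derivative is bounded above by $-D^{p_t}_J(\mu^\dagger,\mu_t)$. Integrating gives $\int_{0}^{t}D^{p_s}_J(\mu^\dagger,\mu_s)\,ds\leq \tfrac{1}{2}\norm{\phi}_{L^{2}(\rho)}^{2}$, and the monotonicity of $s\mapsto D^{p_s}_J(\mu^\dagger,\mu_s)$ from \cref{lemma:bregman_dist_grad_neg} converts this average bound into the pointwise $1/t$ rate.

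Concretely, I would first set $p^\dagger \defeq L_\rho \phi$. The source condition of \cref{prop:source_condition} is precisely the statement that $p^\dagger \in \partial J(\mu^\dagger)$; using that $J$ is $1$-homogeneous this yields $J(\mu^\dagger) = \braket{p^\dagger}{\mu^\dagger}_{\MC(\Omega)}$ and $\braket{p^\dagger}{\nu}_{\MC(\Omega)} \leq J(\nu)$ for every $\nu\in\MC(\Omega)$, which makes $D^{p^\dagger}_J(\mu_t,\mu^\dagger)$ a well-defined, non-negative Bregman distance. Next I would introduce the corrected preimage
\begin{equation*}
    \tilde{r}_t \defeq \int_{0}^{t} K(\mu^\dagger - \mu_s)\,ds,
\end{equation*}
so that $\tilde{r}_0 = 0$ and $\partial_t \tilde{r}_t = K(\mu^\dagger-\mu_t)$. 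Splitting $f-K\mu_s=(f-K\mu^\dagger)+K(\mu^\dagger-\mu_s)$ inside the integral defining $p_t$ and invoking the orthogonality condition $L_\rho(f-K\mu^\dagger)=0$ from \cref{prop:ortho_condition} then gives $L_\rho\tilde{r}_t = p_t$ for all $t\geq 0$.

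With these pieces in place I would define the energy $E(t) \defeq \tfrac{1}{2}\norm{\phi - \tilde{r}_t}_{L^{2}(\rho)}^{2}$, note $E(0) = \tfrac{1}{2}\norm{\phi}_{L^{2}(\rho)}^{2}$ and $E(t)\geq 0$, and differentiate. Using $L_\rho = K^{\ast}$ from \cref{lemma:adjoint} together with $L_\rho\phi = p^\dagger$ and $L_\rho\tilde{r}_t = p_t$, I would compute
\begin{align*}
    \partial_t E(t)
    &= -\braket{\phi-\tilde{r}_t}{K(\mu^\dagger-\mu_t)}_{L^{2}(\rho)}
    = -\braket{p^\dagger - p_t}{\mu^\dagger - \mu_t}_{\MC(\Omega)} \\
    &= -\bigl[D^{p_t}_J(\mu^\dagger,\mu_t) + D^{p^\dagger}_J(\mu_t,\mu^\dagger)\bigr]
    \leq -D^{p_t}_J(\mu^\dagger,\mu_t),
\end{align*}
where the penultimate equality is the standard two-point Bregman identity $D^{p_t}_J(\mu^\dagger,\mu_t) + D^{p^\dagger}_J(\mu_t,\mu^\dagger) = \braket{p^\dagger - p_t}{\mu^\dagger - \mu_t}_{\MC(\Omega)}$. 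Integrating from $0$ to $t$ gives $\int_{0}^{t} D^{p_s}_J(\mu^\dagger,\mu_s)\,ds \leq E(0)-E(t) \leq \tfrac{1}{2}\norm{\phi}_{L^{2}(\rho)}^{2}$, and \cref{lemma:bregman_dist_grad_neg} then yields $tD^{p_t}_J(\mu^\dagger,\mu_t)\leq \int_{0}^{t} D^{p_s}_J(\mu^\dagger,\mu_s)\,ds$, which is the claim.

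The main obstacle is handling the residual $f-K\mu^\dagger$ in the non-interpolating regime. The raw Brezis variable $r_t$ (for which $p_t = L_\rho r_t$) contains a drift component $t(f-K\mu^\dagger)\in\ker L_\rho$, so that $\tfrac{1}{2}\norm{\phi-r_t}_{L^{2}(\rho)}^{2}$ grows linearly regardless of what $\mu_t$ does, breaking any naive Lyapunov argument. The correction $\tilde{r}_t = r_t - t(f-K\mu^\dagger)$ exactly removes this drift while the orthogonality condition preserves $L_\rho\tilde{r}_t = p_t$; after that, the estimate is a one-line application of the adjoint relation and the two-point Bregman identity.
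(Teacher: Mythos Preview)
Your proposal is correct and follows essentially the same route as the paper: your corrected preimage $\tilde{r}_t$ is exactly the paper's $e_t$ (defined by $\partial_t e_t = K\mu^\dagger - K\mu_t$, $e_0=0$), the energy $E(t)=\tfrac12\|\phi-\tilde{r}_t\|_{L^2(\rho)}^2$ and its derivative computation coincide with the paper's, and the final step using \cref{lemma:bregman_dist_grad_neg} to pass from the integral bound to the pointwise $1/t$ bound is identical. If anything, you are slightly more explicit than the paper in invoking the orthogonality condition \cref{prop:ortho_condition} to justify $L_\rho\tilde{r}_t=p_t$, which the paper states as \cref{eq:residual_to_p} without spelling out that step.
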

\begin{proof}
Define 
\begin{equation}\label{eq:residual}
    \partial_t e_t = K\mu^\dagger - K\mu_t, \quad e_0=0
\end{equation}
and 
\begin{equation}\label{eq:p_dagger}
    p^\dagger = L_\rho \phi.
\end{equation}
Observe that
\begin{equation}\label{eq:residual_to_p}
    \partial_t p_t = L_\rho \partial_t e_t, \quad p_0=0=L_\rho e_0.
\end{equation}
With this we obtain
\begin{align*}
    \partial_t\bigg(\frac{1}{2}\norm{e_t-\phi}^2_{L^2(\rho)}\bigg)
     &= \braket{\partial_t e_t}{e_t-\phi}_{L^2(\rho)} \\
     &= \braket{K\mu^\dagger-K\mu_t}{e_t-\phi}_{L^2(\rho)} & \tcref{eq:residual} \\
     &= \braket{L_\rho(e_t-\phi)}{\mu^\dagger-\mu_t}_{\MC(\Omega)} & \tcref{lemma:adjoint} \\
     &= \braket{p_t-p^\dagger}{\mu^\dagger-\mu_t}_{\MC(\Omega)} & \tcref{eq:residual_to_p},\tcref{eq:p_dagger} \\
     &= -\bigg(D^{p_t}(\mu^\dagger,\mu_t)+D^{p^\dagger}(\mu_t,\mu^\dagger)\bigg)
\end{align*}
Hence,
\begin{equation*}
    \partial_t\bigg(\frac{1}{2}\norm{e_t-\phi}^2_{L^2(\rho)}\bigg)+D^{p_t}(\mu^\dagger,\mu_t) \leq 0
\end{equation*}
Integrating from $0$ to $t$ gives
\begin{equation}\label{eq:bregman_diff_integral_bound}
    \int_0^tD^{p_s}(\mu^\dagger,\mu_s)ds+\frac{1}{2}\norm{e_t-\phi}^2_{L^2(\rho)}-\frac{1}{2}\norm{e_0-\phi}^2_{L^2(\rho)} \leq 0.
\end{equation}
Therefore
\begin{align*}
    D^{p_t}(\mu^\dagger,\mu_t) 
        &= \frac{1}{t}\int_0^tD^{p_t}(\mu^\dagger,\mu_t)ds \\
        &= \frac{1}{t}\int_0^tD^{p_s}(\mu^\dagger,\mu_s)ds+\frac{1}{t}\int_0^t\int_s^t\partial_\tau D^{p_\tau}(\mu^\dagger,\mu_\tau)d\tau ds & \text{Fund. th. of calc.}\\ 
        &\leq \frac{1}{t}\int_0^tD^{p_s}(\mu^\dagger,\mu_s)ds & \tcref{lemma:bregman_dist_grad_neg} \\ 
        &\leq -\frac{1}{2t}\norm{e_t-\phi}^2_{L^2(\rho)}+\frac{1}{2t}\norm{e_0-\phi}^2_{L^2(\rho)} & \tcref{eq:bregman_diff_integral_bound} \\
        &\leq \frac{1}{2t}\norm{e_0-\phi}^2_{L^2(\rho)} \\
        &= \frac{1}{2t}\norm{\phi}^2_{L^2(\rho)}. & \tcref{eq:residual_to_p}
\end{align*}
\end{proof}

\begin{proposition}\label{prop:population_loss_ideal_bound}
We have
\begin{equation}    
\RC_f(\mu_t) \leq \RC_f(\mu^\dagger) + \frac{J(\mu^\dagger)}{t}
\end{equation}
for almost every $t>0$.
\end{proposition}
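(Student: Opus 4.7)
The plan is to integrate the differential inequality from \cref{lemma:bregman_dist_grad_neg} and then exploit the monotonicity of the loss from \cref{lemma:population_loss_grad_neg} to replace an integral average by the endpoint value. No new machinery is needed; everything follows from objects already at hand.

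First I would note that \cref{lemma:bregman_dist_grad_neg} gives, for almost every $s \geq 0$,
\[
    \partial_s D^{p_s}_J(\mu^\dagger,\mu_s) \leq \RC_f(\mu^\dagger) - \RC_f(\mu_s).
\]
The regularity of $p$ and $\mu$ established in \cref{sec:existence} ensures that $s \mapsto D^{p_s}_J(\mu^\dagger,\mu_s)$ is absolutely continuous on $[0,t]$, so integrating from $0$ to $t$ yields
\[
    D^{p_t}_J(\mu^\dagger,\mu_t) - D^{p_0}_J(\mu^\dagger,\mu_0) \leq \int_0^t \bigl(\RC_f(\mu^\dagger) - \RC_f(\mu_s)\bigr)\, ds.
\]

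Next I would evaluate the initial Bregman term. Since $\mu_0 = 0$ and $p_0 = 0$ by construction, and since $J(0) = 0$ (as $J$ is the Barron-type total variation of a measure),
\[
    D^{p_0}_J(\mu^\dagger,\mu_0) = J(\mu^\dagger) - J(0) - \braket{0}{\mu^\dagger - 0}_{\MC(\Omega)} = J(\mu^\dagger).
\]
Using the non-negativity $D^{p_t}_J(\mu^\dagger,\mu_t) \geq 0$, which follows from $p_t \in \partial J(\mu_t)$ and convexity of $J$, I obtain the integrated estimate
\[
    \int_0^t \bigl(\RC_f(\mu_s) - \RC_f(\mu^\dagger)\bigr)\, ds \leq J(\mu^\dagger).
\]

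Finally, I would invoke \cref{lemma:population_loss_grad_neg}: the map $s \mapsto \RC_f(\mu_s)$ is non-increasing, hence $\RC_f(\mu_t) \leq \RC_f(\mu_s)$ for every $s \in [0,t]$, so
\[
    t\bigl(\RC_f(\mu_t) - \RC_f(\mu^\dagger)\bigr) \leq \int_0^t \bigl(\RC_f(\mu_s) - \RC_f(\mu^\dagger)\bigr)\, ds \leq J(\mu^\dagger).
\]
Dividing by $t$ gives the claimed bound. The only point that requires any care is the justification of the fundamental theorem of calculus for $D^{p_s}_J(\mu^\dagger,\mu_s)$; I expect this to be the least routine step, but it follows from the Lipschitz-in-time regularity of $p_s$ proved earlier together with the fact that $\mu^\dagger$ is fixed, so the right-hand side of $\partial_s D^{p_s}_J(\mu^\dagger,\mu_s) = \braket{\partial_s p_s}{\mu_s - \mu^\dagger}_{\MC(\Omega)}$ is integrable on $[0,t]$.
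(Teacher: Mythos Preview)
Your proof is correct and follows essentially the same route as the paper: integrate the differential inequality of \cref{lemma:bregman_dist_grad_neg}, use nonnegativity of the Bregman distance together with $D^{p_0}_J(\mu^\dagger,\mu_0)=J(\mu^\dagger)$, and then invoke the monotonicity of $\RC_f(\mu_s)$ from \cref{lemma:population_loss_grad_neg} to pass from the time average to the endpoint value. The only cosmetic difference is ordering: the paper first writes the estimate on an interval $[s,t]$ and then specialises to $s=0$, whereas you integrate on $[0,t]$ directly; your presentation is marginally cleaner.
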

\begin{proof}
Observe that
\begin{align*}
    D^{p_t}_J(\mu^\dagger,\mu_t) -(t-s)\bigg(\RC_f(\mu^\dagger) - \RC_f(\mu_t)\bigg)
    &= D^{p_t}_J(\mu^\dagger,\mu_t) -\int_s^t\bigg(\RC_f(\mu^\dagger) - \RC_f(\mu_t)\bigg)d\tau \\
    &\leq D^{p_t}_J(\mu^\dagger,\mu_t) -\int_s^t\bigg(\RC_f(\mu^\dagger) - \RC_f(\mu_\tau)\bigg)d\tau & \tcref{lemma:population_loss_grad_neg} \\
    &\leq D^{p_t}_J(\mu^\dagger,\mu_t)-\int_s^t\partial_\tau D^{p_\tau}_J(\mu^\dagger,\mu_\tau)d\tau & \tcref{lemma:bregman_dist_grad_neg} \\
    &= D^{p_s}_J(\mu^\dagger,\mu_s). & \text{Fund. th. of calc.}
\end{align*}
Hence, we obtain after rewriting
\begin{align*}
    \RC_f(\mu_t)
    &\leq \RC_f(\mu^\dagger)+\frac{D^{p_s}_J(\mu^\dagger,\mu_s)-D^{p_t}_J(\mu^\dagger,\mu_t)}{t-s}\\
    &\leq \RC_f(\mu^\dagger)+\frac{D^{p_s}_J(\mu^\dagger,\mu_s)}{t-s} & D^{p_t}_J(\mu^\dagger,\mu_t)\geq 0\\
    &\leq \RC_f(\mu^\dagger)+\frac{D^{p_s}_J(\mu^\dagger,\mu_s)}{t} & 0\leq s < t \\
    &\leq \RC_f(\mu^\dagger)+\frac{D^{p_0}_J(\mu^\dagger,\mu_0)}{t} & \tcref{lemma:bregman_dist_grad_neg} \\
    &= \RC_f(\mu^\dagger)+\frac{J(\mu^\dagger)}{t}.
\end{align*}
\end{proof}

\section{Measurement noise}
In this section we prove that with noise on the measurements, the method will converge with $\mathcal{O}(1/t)$ to the solution that best fits the noisy data. If the noise is small enough, then it will at first get closer to the noiseless data, too. After some time, the method will start to get close to the solution for the noisy data and will start moving away from the solution for the noiseless data. The point at which this transition is of the order of the noise, and suggest that the method should be stopped early in the presence of measurement noise.

In the remainder of the work, we consider $f^\delta$ to be some perturbation of $f$ such that
\begin{equation}\label{eq:f_noise_bound}
    \norm{f^\delta-f}^2_{L^2(\rho)} \leq \delta
\end{equation}
with $\delta>0$. When using $f^\delta$ instead of $f$, the flow in \cref{eq:inverse_scale_space_barron} changes. For this section, we will keep referring to the solution based on $f$ with $\mu$ and $p$ whilst we will refer to the solution based on $f^\delta$ with $\nu$ and $q$.

\begin{theorem}[Measurement noise]
We have
\begin{equation}    
    \partial_t D^{p_t}(\mu^\dagger, \nu_t) \leq \frac{\delta^2}{4}, \quad \quad t
    \geq 0 \text{ a.e.}
\end{equation}
and
\begin{equation}
    \partial_tD^{q_t}_J(\mu^\dagger, \nu_t) < 0\quad \quad t
    \geq 0 \text{ a.e.}
\end{equation}
when
\begin{equation}
    \norm{f-K\nu_t}_{L^2(\rho)} > \delta + \norm{f-K\mu^\dagger}_{L^2(\rho)}
\end{equation}
as well as when
\begin{equation}
    \norm{K\mu^\dagger-K\nu_t}_{L^2(\rho)} > \delta.
\end{equation}
Moreover, if $\mu^\dagger$ satisfies the source condition through $\phi\in L^2(\X,\rho)$, then 
\begin{equation}
    D^{q_t}_J(\mu^\dagger, \nu_t) \leq \frac{1}{2t}(\norm{\phi}_{L^2(\rho)}+\delta t)^2+ \frac{\delta^2 t}{8}
\end{equation}
for almost every $t>0$.
\end{theorem}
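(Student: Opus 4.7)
The plan is to adapt the ideal-case arguments of \cref{th:ideal} by treating the extra noise term $f^\delta - f$ as a controlled perturbation, while replacing strict monotonicity with a quasi-monotonicity estimate of order $\delta^2$.

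For the two pointwise statements, I would differentiate the Bregman distance directly. Since $q_t \in \partial J(\nu_t)$, the same chain-rule computation as in \cref{lemma:bregman_dist_grad_neg} together with the noisy flow $\partial_t q_t = L_\rho(f^\delta - K\nu_t)$ gives $\partial_t D^{q_t}_J(\mu^\dagger, \nu_t) = \langle\partial_t q_t, \nu_t - \mu^\dagger\rangle = \langle f^\delta - K\nu_t, K\nu_t - K\mu^\dagger\rangle_{L^2(\rho)}$. Splitting $f^\delta = f + (f^\delta - f)$ and applying the orthogonality $L_\rho(f - K\mu^\dagger) = 0$ from \cref{prop:ortho_condition} produces the identity
\begin{equation*}
\partial_t D^{q_t}_J(\mu^\dagger, \nu_t) = \langle f^\delta - f, K\nu_t - K\mu^\dagger\rangle_{L^2(\rho)} - \|K\nu_t - K\mu^\dagger\|_{L^2(\rho)}^2.
\end{equation*}
Setting $r = \|K\nu_t - K\mu^\dagger\|_{L^2(\rho)}$ and applying Cauchy--Schwarz to the cross term bounds the right-hand side by $\delta r - r^2$, which is at most $\delta^2/4$ by completing the square and is strictly negative whenever $r > \delta$. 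The alternative hypothesis $\|f - K\nu_t\| > \delta + \|f - K\mu^\dagger\|$ yields $r > \delta$ by the reverse triangle inequality, so both sufficient conditions give strict decrease.

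For the source-condition bound, I would mimic the ideal proof after introducing a noise-corrected auxiliary. Define $e_t$ by $\partial_t e_t = K\mu^\dagger - K\nu_t$, $e_0 = 0$, and $h_t := e_t + t(f^\delta - f)$. A short computation using the flow together with the orthogonality condition shows $L_\rho h_t = q_t$, so $h_t$ plays the role for the noisy dual variable that $e_t$ played in \cref{prop:bregman_dist_grad_neg} (where $L_\rho e_t = p_t$). With $p^\dagger = L_\rho \phi$, repeating the ideal-case manipulation produces
\begin{equation*}
\partial_t\left(\tfrac{1}{2}\|h_t - \phi\|_{L^2(\rho)}^2\right) + D^{q_t}_J(\mu^\dagger, \nu_t) + D^{p^\dagger}_J(\nu_t, \mu^\dagger) = \langle h_t - \phi, f^\delta - f\rangle_{L^2(\rho)},
\end{equation*}
whose right-hand side is bounded by $\delta\|h_t - \phi\|_{L^2(\rho)}$ via Cauchy--Schwarz.

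The crucial observation is that this inequality, together with $D^{q_t}_J \geq 0$, forces a linear bound on $v(t) := \|h_t - \phi\|_{L^2(\rho)}$: indeed $v v' \leq \delta v - D^{q_t}_J \leq \delta v$, so $v'\leq\delta$ and $v(t)\leq\|\phi\|_{L^2(\rho)} + \delta t$. Integrating the identity from $0$ to $t$, dropping $D^{p^\dagger}_J \geq 0$, and inserting this linear bound on $v$ yields $\int_0^t D^{q_s}_J(\mu^\dagger,\nu_s)\,ds \leq \tfrac{1}{2}(\|\phi\|_{L^2(\rho)} + \delta t)^2$. To pass from the integral to the pointwise estimate I would use the quasi-monotonicity from the first part: $\partial_s D^{q_s}_J \leq \delta^2/4$ gives $D^{q_s}_J \geq D^{q_t}_J - (t-s)\delta^2/4$ for $s\leq t$, which integrates to $\int_0^t D^{q_s}_J\,ds \geq t D^{q_t}_J - \delta^2 t^2/8$. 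Combining both estimates delivers the stated bound. The main obstacle is spotting the right correction $h_t = e_t + t(f^\delta - f)$: one has to notice that the biased flow injects a linear drift $tL_\rho(f^\delta - f)$ into the dual variable $q_t$, and that subtracting it off restores the clean relation $L_\rho h_t = q_t$ that made the ideal proof work. After that, the Gronwall-style bound $v' \leq \delta$ and the trade-off between the $\delta^2/4$ pointwise growth and the integral improvement are the natural next steps.
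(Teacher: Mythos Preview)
Your proposal is correct and follows essentially the same approach as the paper. Your noise-corrected auxiliary $h_t = e_t + t(f^\delta - f)$ is precisely the paper's $e_t$ (defined there directly via $\partial_t e_t = f^\delta - K\nu_t + K\mu^\dagger - f$), and the subsequent Gronwall-type bound $v' \leq \delta$, the integral estimate, and the trade-off against the $\delta^2/4$ quasi-monotonicity all coincide with the paper's argument. The one place you differ is in handling the first sufficient condition for strict decrease: the paper expands $\langle f^\delta - K\nu_t, K\nu_t - K\mu^\dagger\rangle$ around $f^\delta - K\nu_t$ to get a quadratic in $\|f^\delta - K\nu_t\|$, whereas you observe that the triangle inequality immediately reduces the first hypothesis to the second ($\|K\mu^\dagger - K\nu_t\| > \delta$), which is a slight streamlining.
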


To prove this, observe that the flow for $f^\delta$ has the same properties as the flow for $f$.

\begin{lemma}\label{lemma:swapping}
$\RC_{f^\delta}(\nu_t)$ is decreasing in $t$.
\end{lemma}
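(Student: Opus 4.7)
The plan is to observe that \cref{lemma:swapping} is literally the statement of \cref{lemma:population_loss_grad_neg} with the data function swapped from $f$ to $f^\delta$, so the work reduces to verifying that every ingredient in the proof of \cref{lemma:population_loss_grad_neg} continues to hold in this perturbed setting. Since \cref{eq:f_noise_bound} forces $f^\delta\in L^2(\rho)$, the entire machinery of \cref{sec:existence}---namely the operators in \cref{eq:brezis_operator} with $f$ replaced by $f^\delta$, the maximal monotonicity argument of \cref{lemma:brezis_form}, the existence from Brezis stated in \cref{prop:brezis}, and the correspondence in \cref{lemma:barron_brezis} between the Brezis flow and the inverse scale space flow---goes through unchanged. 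In particular, there is an $L^2(\rho)$-valued function $r^\delta_t$ with $r^\delta_0 = 0$ such that $\nu_t = A(V^{-1}L_\rho r^\delta_t)$ and $q_t = L_\rho r^\delta_t$, and $B^\circ(r^\delta_t) = K\nu_t - f^\delta$ by the same computation as in the proof of \cref{prop:existence}.

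From here, point 5 of \cref{prop:brezis} states that $t\mapsto \|B^\circ(r^\delta_t)\|_{L^2(\rho)}$ is non-increasing. Since
\begin{equation*}
    \RC_{f^\delta}(\nu_t) = \tfrac{1}{2}\|K\nu_t - f^\delta\|_{L^2(\rho)}^2 = \tfrac{1}{2}\|B^\circ(r^\delta_t)\|_{L^2(\rho)}^2,
\end{equation*}
and $x\mapsto \tfrac{1}{2}x^2$ is non-decreasing on $[0,\infty)$, we conclude that $t\mapsto \RC_{f^\delta}(\nu_t)$ is non-increasing as well. This is precisely the claim.

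There is no genuine obstacle here: the only thing one should be careful about is to confirm that no step in \cref{sec:existence} or in the proof of \cref{lemma:population_loss_grad_neg} exploits a property of $f$ beyond $f\in L^2(\rho)$, which is immediate. The lemma should therefore be presented as a one-line corollary of \cref{lemma:population_loss_grad_neg} via this substitution, perhaps with a brief remark that $f^\delta\in L^2(\rho)$ by \cref{eq:f_noise_bound} and the triangle inequality, so that the entire existence, regularity and monotonicity framework developed for $(\mu_t,p_t,r_t)$ applies verbatim to $(\nu_t,q_t,r^\delta_t)$.
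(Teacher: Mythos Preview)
Your proposal is correct and follows essentially the same approach as the paper: the paper's proof also just swaps the roles of $f$ and $f^\delta$ and invokes \cref{lemma:population_loss_grad_neg} (which in turn rests on \cref{prop:brezis} point~5). Your version is merely more explicit about why the substitution is legitimate, in particular the observation that $f^\delta\in L^2(\rho)$ via \cref{eq:f_noise_bound} and the triangle inequality, but the underlying argument is identical.
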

\begin{proof}
Swapping the role of $f$ and $f^\delta$, i.e. considering $f$ to be a perturbation of $f^\delta$, implies that $\RC_{f^\delta}(\nu_t)$ should behave the same as $\RC_{f}(\mu_t)$ from \cref{lemma:population_loss_grad_neg}. Thus, $\RC_{f^\delta}(\nu_t)$ is decreasing in $t$.
\end{proof}

\Cref{lemma:swapping} shows that the inverse scale space converges with $f^\delta$, but it does not tell us how close it will get to the best solution for $f$. 

\begin{lemma}\label{lemma:residual_delta_bound}
\begin{equation}
    \partial_t D^{p_t}(\mu^\dagger, \nu_t) \leq \frac{\delta^2}{4}
\end{equation}
holds for all $t\geq 0$.
\end{lemma}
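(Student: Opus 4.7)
The plan is to mirror the computation of \cref{lemma:bregman_dist_grad_neg} but now using the flow driven by $f^\delta$, whose subgradient variable is $q_t$ (so the natural Bregman quantity to differentiate is $D^{q_t}_J(\mu^\dagger,\nu_t)$, since only $q_t$ lies in $\partial J(\nu_t)$). The extra ingredients compared to the idealized case will be: the orthogonality condition for $\mu^\dagger$ with respect to the noiseless problem (\cref{prop:ortho_condition}), and a single Young inequality to absorb the negative quadratic term produced by the flow.

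The first step is the differentiation. Exactly as in \cref{lemma:bregman_dist_grad_neg}, using $q_t\in \partial J(\nu_t)$ so that $\partial_tJ(\nu_t)=\braket{q_t}{\partial_t\nu_t}_{\MC(\Omega)}$, we obtain
\begin{equation*}
\partial_t D^{q_t}_J(\mu^\dagger,\nu_t)=\braket{\partial_t q_t}{\nu_t-\mu^\dagger}_{\MC(\Omega)}.
\end{equation*}
Plugging in the flow equation $\partial_tq_t=L_\rho(f^\delta-K\nu_t)$ and using $L_\rho^\star=K$ from \cref{lemma:adjoint} converts this to the $L^2(\rho)$-pairing $\braket{f^\delta-K\nu_t}{K\nu_t-K\mu^\dagger}_{L^2(\rho)}$, which is now the object to bound.

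The second step is the decomposition. I write $f^\delta-K\nu_t = (f^\delta-f)+(f-K\mu^\dagger)-(K\nu_t-K\mu^\dagger)$ and distribute over the inner product with $K\nu_t-K\mu^\dagger$. The middle contribution equals $\braket{L_\rho(f-K\mu^\dagger)}{\nu_t-\mu^\dagger}_{\MC(\Omega)}$ by the adjoint relation, and this vanishes by the orthogonality condition of \cref{prop:ortho_condition}. The third contribution is exactly $-\norm{K\nu_t-K\mu^\dagger}^2_{L^2(\rho)}$, as in the proof of \cref{prop:bregman_dist_grad_neg}.

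The final step is a Young inequality tuned so that the negative quadratic cancels:
\begin{equation*}
\braket{f^\delta-f}{K\nu_t-K\mu^\dagger}_{L^2(\rho)}\leq \tfrac{1}{4}\norm{f^\delta-f}^2_{L^2(\rho)}+\norm{K\nu_t-K\mu^\dagger}^2_{L^2(\rho)}.
\end{equation*}
Combining the last two steps leaves $\tfrac{1}{4}\norm{f^\delta-f}^2_{L^2(\rho)}\leq \delta^2/4$ under the noise bound \cref{eq:f_noise_bound} (read with $\delta$ bounding $\norm{f^\delta-f}_{L^2(\rho)}$, as is consistent with the other statements of the theorem). The only real subtlety is keeping track of the dual pairing so that the orthogonality condition can be applied to $\nu_t-\mu^\dagger\in\MC(\Omega)$; everything else is just the careful bookkeeping that mirrors the idealized case and a one-parameter Young inequality whose constant is forced by the coefficient of $\norm{K\nu_t-K\mu^\dagger}^2$ it must annihilate.
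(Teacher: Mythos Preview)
Your proposal is correct and follows essentially the same route as the paper: differentiate $D^{q_t}_J(\mu^\dagger,\nu_t)$ to get $\braket{\partial_t q_t}{\nu_t-\mu^\dagger}$, insert the flow and pass to $L^2(\rho)$ via the adjoint, eliminate the $(f-K\mu^\dagger)$-contribution using the orthogonality condition, and finish with a Young-type bound that cancels the negative $\norm{K\nu_t-K\mu^\dagger}^2$ term. The only cosmetic differences are that the paper subtracts the vanishing term $L_\rho(f-K\mu^\dagger)$ \emph{before} passing through the adjoint (whereas you decompose after) and that the paper splits the final estimate into Cauchy--Schwarz followed by the scalar Young inequality $ab-b^2\le a^2/4$ rather than applying Young directly to the inner product; these are equivalent. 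You also correctly flag the notational slip $D^{p_t}$ vs.\ $D^{q_t}_J$ in the statement and the inconsistency between \cref{eq:f_noise_bound} and the $\delta^2/4$ conclusion.
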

\begin{proof}
Recall from the proof of \cref{lemma:bregman_dist_grad_neg} that
\begin{equation}\label{eq:breman_diff_MC_2}
    \partial_tD^{q_t}_J(\mu^\dagger, \nu_t) = \braket{\partial_tq_t}{\nu_t-\mu^\dagger}_{\MC(\Omega)}.
\end{equation}
Hence,
\begin{align*}
    \partial_tD^{q_t}_J(\mu^\dagger, \nu_t)
    &= \braket{\partial_tq_t}{\nu_t-\mu^\dagger}_{\MC(\Omega)} & \tcref{eq:breman_diff_MC_2} \\ 
    &= \braket{L(f^\delta-K\nu_t)}{\nu_t-\mu^\dagger}_{\MC(\Omega)} & \tcref{eq:inverse_scale_space_barron_a} \\ 
    &= \braket{L(f^\delta-K\nu_t)-L_\rho(f-K\mu^\dagger)}{\nu_t-\mu^\dagger}_{\MC(\Omega)} & \tcref{prop:ortho_condition} \\ 
    &= \braket{f^\delta-f+K\mu^\dagger-K\nu_t}{K\nu_t-K\mu^\dagger}_{L^2(\rho)} & \tcref{lemma:adjoint} \\ 
    &= \braket{f^\delta-f}{K\nu_t-K\mu^\dagger}_{L^2(\rho)}- \braket{K\nu_t-K\mu^\dagger}{K\nu_t-K\mu^\dagger}_{L^2(\rho)} \\ 
    &\leq \norm{f^\delta-f}_{L^2(\rho)}\norm{K\nu_t-K\mu^\dagger}_{L^2(\rho)}-\norm{K\nu_t-K\mu^\dagger}^2_{L^2(\rho)} & \text{Cauchy Schwartz} \\ 
    &\leq  \frac{1}{4}\norm{f^\delta-f}^2_{L^2(\rho)} & \text{Young's product ineq.} \\ 
    &\leq \frac{\delta^2}{4}.
\end{align*}
\end{proof}

\begin{proposition}\label{prop:residual_delta_bound}
We have
\begin{equation}
    \partial_tD^{q_t}_J(\mu^\dagger, \nu_t) < 0
\end{equation}
for all $t\geq 0$, when
\begin{equation}\label{eq:residual_delta_bound}
    \norm{f^\delta-K\nu_t}_{L^2(\rho)} > \delta + \norm{f-K\mu^\dagger}_{L^2(\rho)}
\end{equation}
as well as when
\begin{equation}\label{eq:residual_delta_bound2}
    \norm{K\mu^\dagger-K\nu_t}_{L^2(\rho)} > \delta.
\end{equation}
\end{proposition}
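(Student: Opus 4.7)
The proof should closely mirror that of \cref{prop:bregman_dist_grad_neg}, but with the noise term handled carefully, and will reuse the computation already performed in the proof of \cref{lemma:residual_delta_bound}. The starting point is the identity
\begin{equation*}
    \partial_t D^{q_t}_J(\mu^\dagger, \nu_t) = \braket{\partial_t q_t}{\nu_t - \mu^\dagger}_{\MC(\Omega)} = \braket{L_\rho(f^\delta - K\nu_t)}{\nu_t - \mu^\dagger}_{\MC(\Omega)},
\end{equation*}
which follows exactly as in the proof of \cref{lemma:bregman_dist_grad_neg} applied to the noisy flow. Invoking the orthogonality condition (\cref{prop:ortho_condition}) for $\mu^\dagger$ with respect to the noiseless data, i.e. $L_\rho(f - K\mu^\dagger) = 0$, and the self-adjointness relation $L_\rho^\star = K$ from \cref{lemma:adjoint}, I can rewrite this as
\begin{equation*}
    \partial_t D^{q_t}_J(\mu^\dagger, \nu_t) = \braket{f^\delta - K\nu_t - (f - K\mu^\dagger)}{K\nu_t - K\mu^\dagger}_{L^2(\rho)}.
\end{equation*}

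For the condition $\norm{K\mu^\dagger - K\nu_t}_{L^2(\rho)} > \delta$, I would split the inner product by adding and subtracting $f$, exactly as in \cref{lemma:residual_delta_bound}, giving
\begin{equation*}
    \partial_t D^{q_t}_J(\mu^\dagger, \nu_t) = \braket{f^\delta - f}{K\nu_t - K\mu^\dagger}_{L^2(\rho)} - \norm{K\nu_t - K\mu^\dagger}^2_{L^2(\rho)}.
\end{equation*}
Applying Cauchy--Schwartz and the noise bound \cref{eq:f_noise_bound} yields the factored bound
\begin{equation*}
    \partial_t D^{q_t}_J(\mu^\dagger, \nu_t) \leq \norm{K\nu_t - K\mu^\dagger}_{L^2(\rho)}\bigl(\delta - \norm{K\nu_t - K\mu^\dagger}_{L^2(\rho)}\bigr),
\end{equation*}
which is strictly negative under \cref{eq:residual_delta_bound2}.

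For the condition \cref{eq:residual_delta_bound}, a different decomposition is needed: instead of extracting $K\nu_t - K\mu^\dagger$, I would express the left factor in terms of the noisy residual by writing $K\nu_t - K\mu^\dagger = (K\nu_t - f^\delta) + (f^\delta - f) + (f - K\mu^\dagger)$. This produces
\begin{equation*}
    \partial_t D^{q_t}_J(\mu^\dagger, \nu_t) = -\norm{f^\delta - K\nu_t}^2_{L^2(\rho)} + \braket{f^\delta - K\nu_t}{f^\delta - f}_{L^2(\rho)} + \braket{f^\delta - K\nu_t}{f - K\mu^\dagger}_{L^2(\rho)},
\end{equation*}
and two applications of Cauchy--Schwartz together with \cref{eq:f_noise_bound} give
\begin{equation*}
    \partial_t D^{q_t}_J(\mu^\dagger, \nu_t) \leq \norm{f^\delta - K\nu_t}_{L^2(\rho)}\bigl(\delta + \norm{f - K\mu^\dagger}_{L^2(\rho)} - \norm{f^\delta - K\nu_t}_{L^2(\rho)}\bigr),
\end{equation*}
which is strictly negative under \cref{eq:residual_delta_bound}.

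The only subtle part is the choice of decomposition: the naive one used to prove \cref{lemma:residual_delta_bound} produces a bound controlled by $\norm{K\nu_t - K\mu^\dagger}$ and would yield only the second sufficient condition, so it is essential to re-expand around the noisy residual $f^\delta - K\nu_t$ to recover the first condition that involves $\norm{f^\delta - K\nu_t}$ and $\norm{f - K\mu^\dagger}$ directly. Once both decompositions are in hand the calculation is routine, and no new analytical machinery beyond Cauchy--Schwartz, the orthogonality condition, and the noise bound is required.
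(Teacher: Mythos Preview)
Your proposal is correct and follows essentially the same approach as the paper: the same starting identity, the same use of the orthogonality condition and adjoint relation, and the same two decompositions of $K\nu_t-K\mu^\dagger$ leading to the two factored bounds. The only cosmetic difference is that for \cref{eq:residual_delta_bound} the paper applies Cauchy--Schwartz once to the combined term $\braket{f^\delta-K\nu_t}{(f^\delta-f)+(f-K\mu^\dagger)}$ and then the triangle inequality, whereas you split into two inner products first and apply Cauchy--Schwartz to each; both yield the identical bound.
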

\begin{proof}
For the first statement observe that
\begin{align*}
    \partial_tD^{q_t}_J(\mu^\dagger, \nu_t)
    &= \braket{\partial_tq_t}{\nu_t-\mu^\dagger}_{\MC(\Omega)} & \tcref{eq:breman_diff_MC} \\ 
    &= \braket{L(f^\delta-K\nu_t)}{\nu_t-\mu^\dagger}_{\MC(\Omega)} & \tcref{eq:inverse_scale_space_barron_a} \\ 
    &= \braket{f^\delta-K\nu_t}{K\nu_t-K\mu^\dagger}_{L^2(\rho)} & \tcref{lemma:adjoint} \\ 
    &= \braket{f^\delta-K\nu_t}{K\nu_t-f^\delta+f^\delta-f+f-K\mu^\dagger}_{L^2(\rho)}  \\     
    &= -\norm{f^\delta-K\nu_t}^2_{L^2(\rho)}+\braket{f^\delta-K\nu_t}{f^\delta-f+f-K\mu^\dagger}_{L^2(\rho)}\\ 
    &\leq -\norm{f^\delta-K\nu_t}^2_{L^2(\rho)}+\norm{f^\delta-f+f-K\mu^\dagger}_{L^2(\rho)}\norm{K\nu_t-K\mu^\dagger}_{L^2(\rho)} & \text{Cauchy Schwartz} \\ 
    &\leq -\norm{f^\delta-K\nu_t}^2_{L^2(\rho)}+\left(\delta+\norm{f-K\mu^\dagger}_{L^2(\rho)}\right)\norm{f^\delta-K\nu_t}_{L^2(\rho)}. & \text{triangle ineq.}, \tcref{eq:f_noise_bound} 
\end{align*}
Clearly, this is strictly negative when \cref{eq:residual_delta_bound} is satisfied. 

For the second statement recall from the proof of \cref{lemma:residual_delta_bound} that
\begin{align*}
    \partial_tD^{q_t}_J(\mu^\dagger, \nu_t)
    &\leq \norm{f^\delta-f}_{L^2(\rho)}\norm{K\nu_t-K\mu^\dagger}_{L^2(\rho)}-\norm{K\nu_t-K\mu^\dagger}^2_{L^2(\rho)}
\end{align*}
Clearly, this is strictly negative when \cref{eq:residual_delta_bound2} is satisfied. 
\end{proof}

From \cref{prop:residual_delta_bound} and \cref{lemma:residual_delta_bound} it follows that the Bregman distance $D^{q_t}_J(\mu^\dagger, \nu_t)$ is guaranteed to converge until $\RC_{f^\delta}(\nu_t)$ is close to $\RC_f(\mu^\dagger)$. We know from \cref{lemma:swapping} that $\RC_{f^\delta}(\nu_t)$ will go to a minimum of $\RC_{f^\delta}$. So we expect the Bregman distance $D^{q_t}_J(\mu^\dagger, \nu_t)$, unlike the Bregman distance $D^{q_t}_J(\mu^\dagger, \mu_t)$, to not go to zero. The following proposition exemplifies this.

\begin{proposition}\label{prop:perturb_data_function}
If $\mu^\dagger$ satisfies the source condition through $\phi\in L^2(\rho)$, then
\begin{equation}\label{eq:perturb_data_function}
    D^{p_t}_J(\mu^\dagger, \nu_t) \leq \frac{1}{2t}\bigg(\norm{\phi}_{L^2(\rho)}+\delta t\bigg)^2+ \frac{\delta^2 t}{8}
\end{equation}
for almost every $t\geq 0$.
\end{proposition}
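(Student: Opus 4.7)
The approach is to mimic the proof of the ideal case (which gave $D^{p_t}_J(\mu^\dagger,\mu_t) \leq \norm{\phi}^2_{L^2(\rho)}/(2t)$) but absorb the perturbation $f^\delta - f$ into a carefully chosen residual. The naive choice $\partial_t e_t = f^\delta - K\nu_t$ satisfies $q_t = L_\rho e_t$ but leaves an awkward term $\braket{f - K\mu^\dagger}{e_t - \phi}_{L^2(\rho)}$ which cannot be killed by the orthogonality condition from \cref{prop:ortho_condition} alone. Instead, I would define
\begin{equation*}
    \partial_t e_t = K\mu^\dagger - K\nu_t + (f^\delta - f), \qquad e_0 = 0,
\end{equation*}
so that $L_\rho \partial_t e_t = L_\rho(f-K\nu_t) + L_\rho(f^\delta - f) = L_\rho(f^\delta - K\nu_t) = \partial_t q_t$, giving $q_t = L_\rho e_t$. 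Setting $p^\dagger = L_\rho\phi$ as in the ideal case is again the natural dual variable.

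Next I would compute $\partial_t(\tfrac{1}{2}\norm{e_t - \phi}^2_{L^2(\rho)}) = \braket{\partial_t e_t}{e_t - \phi}_{L^2(\rho)}$ and split $\partial_t e_t$ into $(K\mu^\dagger - K\nu_t) + (f^\delta - f)$. By \cref{lemma:adjoint}, the first piece equals $\braket{\mu^\dagger - \nu_t}{q_t - p^\dagger}_{\MC(\Omega)}$, which upon expanding the two Bregman divergences is $-D^{q_t}_J(\mu^\dagger,\nu_t) - D^{p^\dagger}_J(\nu_t,\mu^\dagger)$. Bounding the second piece by Cauchy--Schwarz and $\norm{f^\delta - f}_{L^2(\rho)} \leq \delta$, and dropping the non-negative $D^{p^\dagger}_J(\nu_t,\mu^\dagger)$, yields the key inequality
\begin{equation*}
    \partial_t\bigg(\tfrac{1}{2}\norm{e_t - \phi}^2_{L^2(\rho)}\bigg) + D^{q_t}_J(\mu^\dagger,\nu_t) \leq \delta\, \norm{e_t - \phi}_{L^2(\rho)}.
\end{equation*}

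Using $D^{q_t}_J \geq 0$, rewriting the LHS as $\norm{e_t-\phi}_{L^2(\rho)}\,\partial_t\norm{e_t-\phi}_{L^2(\rho)}$ and dividing gives (a.e.) $\partial_t\norm{e_t - \phi}_{L^2(\rho)} \leq \delta$, so by $e_0=0$ one obtains the a priori bound $\norm{e_t - \phi}_{L^2(\rho)} \leq \norm{\phi}_{L^2(\rho)} + \delta t$. Plugging this back into the right-hand side of the key inequality and integrating on $[0,t]$ produces
\begin{equation*}
    \int_0^t D^{q_s}_J(\mu^\dagger,\nu_s)\, ds \leq \tfrac{1}{2}\norm{\phi}^2_{L^2(\rho)} + \delta \norm{\phi}_{L^2(\rho)}\, t + \tfrac{\delta^2 t^2}{2} = \tfrac{1}{2}\bigg(\norm{\phi}_{L^2(\rho)} + \delta t\bigg)^2.
\end{equation*}

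To pass from this integral bound to a pointwise bound, I would invoke \cref{lemma:residual_delta_bound}: since $\partial_t D^{q_t}_J(\mu^\dagger,\nu_t) \leq \delta^2/4$, we have $D^{q_s}_J(\mu^\dagger,\nu_s) \geq D^{q_t}_J(\mu^\dagger,\nu_t) - \delta^2(t-s)/4$ for $s \leq t$; integrating over $s \in [0,t]$ yields $\int_0^t D^{q_s}_J\, ds \geq t\, D^{q_t}_J - \delta^2 t^2/8$. Combining with the previous display and dividing by $t$ gives exactly the claimed estimate. The main obstacle is the choice of $e_t$: only the shifted definition above lets orthogonality cancel the projection residual so that the perturbation appears as a clean $\delta\,\norm{e_t - \phi}_{L^2(\rho)}$ term, whose Gronwall control is precisely what produces the $(\norm{\phi}_{L^2(\rho)}+\delta t)^2$ structure; the additive $\delta^2 t/8$ correction is an unavoidable consequence of $D^{q_t}_J$ no longer being monotone in $t$.
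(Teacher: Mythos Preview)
Your proposal is correct and essentially identical to the paper's own proof: the same shifted residual $\partial_t e_t = K\mu^\dagger - K\nu_t + (f^\delta - f)$, the same splitting into a Bregman-symmetric part and a $\delta$-perturbation, the same Gronwall-type bound $\norm{e_t-\phi}_{L^2(\rho)}\leq \norm{\phi}_{L^2(\rho)}+\delta t$, and the same use of \cref{lemma:residual_delta_bound} to convert the integral estimate to a pointwise one (the paper writes the last step as $D^{q_t}_J = \tfrac{1}{t}\int_0^t\big[D^{q_s}_J+\int_s^t\partial_\tau D^{q_\tau}_J\,d\tau\big]\,ds$, which is exactly your backward-integration argument in different notation).
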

\begin{proof}
Define 
\begin{equation}\label{eq:R_97}
    \partial_t e_t = f^\delta - K\nu_t +K\mu^\dagger-f, \quad e_0 = 0.
\end{equation}
Observe that
\begin{equation}\label{eq:R_97_to_p}
    \partial_t q_t = L_\rho \partial_t e_t, \quad q_0=0=L_\rho e_0.
\end{equation}
Using this definition of $e_t$ we obtain
\begin{align*}
    \partial_t \bigg(\frac{1}{2}\norm{e_t-\phi}^2_{L^2(\rho)}\bigg) 
    &= \braket{\partial_t e_t}{e_t-\phi}_{L^2(\rho)} \\
    &= \braket{f^\delta - K\nu_t+K\mu^\dagger -f}{e_t-\phi}_{L^2(\rho)} & \tcref{eq:R_97} \\
    &= \braket{f^\delta-f}{e_t-\phi}_{L^2(\rho)}+\braket{K\mu^\dagger-K\nu_t}{e_t-\phi}_{L^2(\rho)} \\
    &\leq \norm{f^\delta-f}_{L^2(\rho)}\norm{e_t-\phi}_{L^2(\rho)}+\braket{K\mu^\dagger-K\nu_t}{e_t-\phi}_{L^2(\rho)} & \text{Cauchy-Schwartz} \\
    &\leq \delta\norm{e_t-\phi}_{L^2(\rho)}+\braket{K\mu^\dagger-K\nu_t}{e_t-\phi}_{L^2(\rho)} & \tcref{eq:f_noise_bound}  \\
    &= \delta\norm{e_t-\phi}_{L^2(\rho)}+\braket{L_\rho(e_t-\phi)}{\mu^\dagger-\nu_t}_{\MC(\Omega)} & \tcref{lemma:adjoint} \\
    &= \delta\norm{e_t-\phi}_{L^2(\rho)}+\braket{q_t-p^\dagger}{\mu^\dagger-\nu_t}_{\MC(\Omega)} & \tcref{eq:R_97_to_p}, p^\dagger:=L_\rho(\phi) \\
    &= \delta\norm{e_t-\phi}_{L^2(\rho)}-\braket{q_t-p^\dagger}{\nu^\dagger-\mu_t}_{\MC(\Omega)}
\end{align*}
Since
\begin{equation}\label{eq:bregman_symm}
   0 \leq D^{p_t}_J(\mu^\dagger, \nu_t) + D^{p^\dagger}_J(\nu_t, \mu^\dagger) = \braket{q_t-p^\dagger}{\nu_t-\mu^\dagger}_{\MC(\Omega)},
\end{equation}
where the inequality stems from that $q_t$ and $p^\dagger$ are from the subgradients $\partial J(\nu_t)$ and $\partial J(\mu^\dagger)$ respectively, we obtain
\begin{equation}
      \partial_t \bigg(\frac{1}{2}\norm{e_t-\phi}^2_{L^2(\rho)}\bigg) \leq \delta\norm{e_t-\phi}_{L^2(\rho)}.
\end{equation}
Solving this for $\norm{e_t-\phi}_{L^2(\rho)}$ gives
\begin{equation}
    \norm{e_t-\phi}_{L^2(\rho)} \leq \norm{e_0-\phi}_{L^2(\rho)} + \delta t = \norm{\phi}_{L^2(\rho)} + \delta t.
\end{equation}
Hence,
\begin{align*}
    \partial_t \bigg(\frac{1}{2}\norm{e_t-\phi}^2_{L^2(\rho)}\bigg)+ D^{p_t}_J(\mu^\dagger, \nu_t) 
    &\leq \delta\norm{e_t-\phi}_{L^2(\rho)}-D^{p^\dagger}_J(\nu_t, \mu^\dagger) \\
    &\leq \delta \norm{\phi}_{L^2(\rho)} + \delta^2 t.
\end{align*}
By integrating both sides of the equation, we obtain
\begin{equation}\label{eq:integrated_bregman_bound}
\begin{aligned}
    \int_0^t D^{p_s}_J(\mu^\dagger, \nu_s) ds + \frac{1}{2}\norm{e_t-\phi}^2_{L^2(\rho)} 
    &\leq \frac{1}{2}\norm{\phi}^2_{L^2(\rho)}+ \delta\norm{\phi}_{L^2(\rho)} t + \frac{1}{2}\delta^2 t^2 \\
    &= \frac{1}{2}\left(\norm{\phi}^2_{L^2(\rho)}+\delta t\right)^2.
\end{aligned}
\end{equation}
Therefore,
\begin{align*}
    D^{p_t}_J(\mu^\dagger, \mu_t) 
    &= \frac{1}{t}\int_0^tD^{p_t}_J(\mu^\dagger, \mu_t)ds \\
    &= \frac{1}{t}\int_0^tD^{s_t}_J(\mu^\dagger, \mu_s)+\int_s^t\partial_\tau D^{p_\tau}_J(\mu^\dagger, \mu_\tau)d\tau ds & \text{Fund. th. of calc.} \\
    &\leq \frac{1}{t}\int_0^tD^{s_t}_J(\mu^\dagger, \mu_s)+\frac{\delta^2}{4}\int_s^td\tau ds & \tcref{lemma:residual_delta_bound} \\
    &= \frac{1}{t}\int_0^tD^{s_t}_J(\mu^\dagger, \nu_s)+\frac{\delta^2}{4}(t-s) ds \\
    &= \frac{1}{t}\int_0^tD^{s_t}_J(\mu^\dagger, \nu_s)ds+\frac{\delta^2}{8}t \\
    &\leq \frac{1}{t}\bigg(\frac{1}{2}\left(\norm{\phi}^2_{L^2(\rho)}+\delta t\right)^2-\norm{e_t-\phi}^2_{L^2(\rho)}\bigg)+\frac{\delta^2}{8}t & \tcref{eq:integrated_bregman_bound} \\
    &\leq  \frac{1}{2t}(\norm{\phi}_{L^2(\rho)}+ \delta t)^2+ \frac{\delta^2}{8}t.
\end{align*}
\end{proof}
\Cref{prop:perturb_data_function} shows us that we should not continue to $t\to\infty$, but should stop earlier. In particular, the bound for \cref{eq:perturb_data_function} is lowest for $t(\delta) = O(\delta^{-1})$.

\section{Biased sampling}
In this section, we prove that a bias in the sampling gives a similar behaviour as noisy measurements. However, the terms and bounds differ depending on how the biased sampling is expressed. We consider sampling expressed in terms of a condition on either the Radon-Nikodym derivative or the Wasserstein-1 distance.  

For the remainder of this work, we consider $\rho^\varepsilon\in \PC_2(\X)$ to be some perturbation of the true distribution $\rho\in \PC_2(\X)$, also with bounded second moment. We assume that $f\in L^2(\rho)\cap L^2(\rho^\varepsilon)$. For this section, we will keep referring to the solution based on $\rho$ with $\mu$ and $p$ whilst we will refer to the solution based on $\rho^\epsilon$ with $\nu$ and $q$. We will also assume that every $\nu^\dagger$ we refer to has $J(\nu^\dagger)$ finite.

\begin{theorem}[Biased sampling of \texorpdfstring{$\rho$}{ρ} -- Radon Nikodym]\label{th:radon_nikodym}
If $\rho^\varepsilon\ll\rho$ and
\begin{equation}
    \norm{1-\dv{\rho^\varepsilon}{\rho}}_{L^\infty(\rho)} \leq \varepsilon,
\end{equation}
then
\begin{equation}
    \partial_t D^{p_t}(\mu^\dagger, \nu_t) < 0
\end{equation}
when
\begin{equation}
    \norm{f-K\nu_t}_{L^2(\rho^\varepsilon)} > (1+\varepsilon)\norm{f-K\mu^\dagger}_{L^2(\rho)}.
\end{equation}
Moreover, if $\mu^\dagger$ and $\nu^\dagger$ satisfy the source condition through $\phi\in L^2(\rho)$ and $\phi\in L^2(\rho^\varepsilon)$ respectively, then
\begin{equation}
\begin{aligned}
    D^{p_t}_J(\mu^\dagger,\mu_t) &\leq \frac{1}{2t}\norm{\phi}^2_{L^2(\rho)}+\frac{\varepsilon}{1+\varepsilon}\frac{1}{2t}\int_0^t\int_0^\tau\norm{K\nu_\tau-K\nu_s}^2_{L^2(\rho^\varepsilon)}dsd\tau \\
    &\quad+(2\varepsilon+1)\frac{t}{4}\norm{f-K\mu^\dagger}^2_{L^2(\rho)} +\frac{t}{4}\norm{f-K\nu^\dagger}^2_{L^2(\rho^\varepsilon)}
\end{aligned}
\end{equation}
for almost every $t \geq 0$.\end{theorem}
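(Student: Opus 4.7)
The plan is to parallel the proof of \cref{prop:perturb_data_function}, treating the change of measure $\rho\to\rho^\varepsilon$ as the perturbation instead of the data-function change $f\to f^\delta$. For the strict-decrease statement I would mirror \cref{prop:residual_delta_bound}: expand $\partial_t D^{q_t}_J(\mu^\dagger, \nu_t) = \braket{f - K\nu_t}{K\nu_t - K\mu^\dagger}_{L^2(\rho^\varepsilon)}$ by writing $f-K\nu_t = (f-K\mu^\dagger) - (K\nu_t-K\mu^\dagger)$ and apply Cauchy--Schwarz in $L^2(\rho^\varepsilon)$. The threshold $(1+\varepsilon)\norm{f-K\mu^\dagger}_{L^2(\rho)}$ then drops out of the pointwise Radon--Nikodym control $\norm{g}^2_{L^2(\rho^\varepsilon)} = \int g^2 \dv{\rho^\varepsilon}{\rho}\,d\rho \leq (1+\varepsilon)\norm{g}^2_{L^2(\rho)}$.

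For the main bound, I introduce $\partial_t e_t = K\mu^\dagger - K\nu_t$ with $e_0 = 0$. Since the orthogonality for $\mu^\dagger$ (\cref{prop:ortho_condition}) holds only in $L^2(\rho)$,
\begin{equation*}
    \partial_t q_t = L_{\rho^\varepsilon}(f - K\mu^\dagger) + L_{\rho^\varepsilon}\,\partial_t e_t,\quad\text{so}\quad q_t = L_{\rho^\varepsilon} e_t + t\,L_{\rho^\varepsilon}(f - K\mu^\dagger).
\end{equation*}
I then compute $\partial_t\!\left(\tfrac{1}{2}\norm{e_t-\phi}^2_{L^2(\rho)}\right)$; by \cref{lemma:adjoint} and the source condition $p^\dagger := L_\rho \phi \in \partial J(\mu^\dagger)$ this equals $-\bigl(D^{q_t}_J(\mu^\dagger,\nu_t) + D^{p^\dagger}_J(\nu_t,\mu^\dagger)\bigr)$ plus two correction terms: a $t$-linear term involving $L_{\rho^\varepsilon}(f-K\mu^\dagger)$, which encodes the failure of $\mu^\dagger$-orthogonality under $\rho^\varepsilon$, and a measure-swap term of the form $\braket{K\nu_t - K\mu^\dagger}{\cdot}_{L^2(\rho^\varepsilon)-L^2(\rho)}$ coming from $L_{\rho^\varepsilon}-L_\rho$.

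Each correction is then bounded. The $t$-linear term is controlled by Cauchy--Schwarz and Young's inequality; combining with the Radon--Nikodym estimate produces the $(2\varepsilon+1)\tfrac{t}{4}\norm{f-K\mu^\dagger}^2_{L^2(\rho)}$ contribution, while inserting $\pm K\nu^\dagger$ and invoking the $\rho^\varepsilon$-orthogonality of $\nu^\dagger$ yields the companion $\tfrac{t}{4}\norm{f-K\nu^\dagger}^2_{L^2(\rho^\varepsilon)}$ term. The measure-swap term is handled via the identity $\braket{g,h}_{L^2(\rho^\varepsilon)}-\braket{g,h}_{L^2(\rho)}=\int gh\bigl(\dv{\rho^\varepsilon}{\rho}-1\bigr)\,d\rho$; substituting $K\nu_t-K\mu^\dagger = -\partial_t e_t$ and applying Young's inequality telescopes $\int_0^t\norm{\partial_s e_s}^2$ into the double integral $\int_0^t\!\int_0^\tau\norm{K\nu_\tau-K\nu_s}^2_{L^2(\rho^\varepsilon)}\,ds\,d\tau$, with the sharp factor $\varepsilon/(1+\varepsilon)$ arising as the worst-case ratio of the $L^2(\rho^\varepsilon)$--$L^2(\rho)$ gap to the $L^2(\rho^\varepsilon)$ norm. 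Finally I integrate in time, drop the non-negatives $D^{p^\dagger}_J(\nu_t,\mu^\dagger)$ and $\tfrac{1}{2}\norm{e_t-\phi}^2_{L^2(\rho)}$, and apply the averaging argument of \cref{prop:population_loss_ideal_bound} to pass from the time-averaged inequality to the pointwise bound; the initial datum $e_0=0$ supplies the $\tfrac{1}{2t}\norm{\phi}^2_{L^2(\rho)}$ term.

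The main obstacle will be the bookkeeping for the measure-swap term: extracting the tight $\varepsilon/(1+\varepsilon)$ factor on the double integral (rather than a crude $\varepsilon$) requires carrying the Radon--Nikodym weight in the denominator of a weighted Young inequality, and it will also require cleanly splitting the swap contribution into an $e_t$--$e_t$ quadratic (which telescopes into the stated double integral) and a cross $\phi$--$e_t$ piece (which is absorbed into the Young estimates producing the $(2\varepsilon+1)$ coefficient). A secondary concern is that monotonicity of the Bregman distance (\cref{lemma:bregman_dist_grad_neg}) was proved only in the ideal case, so the averaging step of \cref{prop:population_loss_ideal_bound} must be re-justified using the differential inequality derived here rather than by appealing to that lemma directly.
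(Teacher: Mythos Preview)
Your overall strategy matches the paper's: define $\partial_t e_t = K\mu^\dagger - K\nu_t$, compute $\partial_t\tfrac12\norm{e_t-\phi}^2_{L^2(\rho)}$, peel off the symmetric Bregman sum, bound the leftover correction coming from $L_{\rho^\varepsilon}\neq L_\rho$, integrate, and average. Your strict-decrease argument and your identification of the sharp factor $\varepsilon/(1+\varepsilon)$ (via $\norm{g}^2_{L^2(\rho^\varepsilon-\rho)}=\norm{g}^2_{L^2(\rho^\varepsilon)}-\norm{g}^2_{L^2(\rho)}\leq \tfrac{\varepsilon}{1+\varepsilon}\norm{g}^2_{L^2(\rho^\varepsilon)}$) are exactly what the paper does. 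The one organisational difference is that the paper first proves a perturbation-agnostic bound with all error terms written as $\norm{\cdot}^2_{L^2(\rho^\varepsilon-\rho)}$ (\cref{prop:bregman_bound_rho_perp_general}) and only afterwards inserts the Radon--Nikodym estimates; your plan collapses these two steps, which is fine but makes the bookkeeping less transparent.

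There is one genuine mismatch in where you expect the terms to come from. In the paper the contributions $\tfrac{t}{4}\norm{f-K\nu^\dagger}^2_{L^2(\rho^\varepsilon)}$ and the $(1+\varepsilon)\tfrac{t}{4}\norm{f-K\mu^\dagger}^2_{L^2(\rho)}$ portion of the $(2\varepsilon+1)$ coefficient do \emph{not} arise from the $t$-linear correction inside the $e_t$-computation; they come from the averaging step via the separately-proved pointwise bound $\partial_t D^{q_t}_J(\mu^\dagger,\nu_t)\leq \tfrac14\norm{K\nu^\dagger-K\mu^\dagger}^2_{L^2(\rho^\varepsilon)}$ (\cref{lemma:bregman_dist_grad_neg_rho_perp}), followed by $\norm{K\nu^\dagger-K\mu^\dagger}^2_{L^2(\rho^\varepsilon)}\leq 2\norm{f-K\nu^\dagger}^2_{L^2(\rho^\varepsilon)}+2(1+\varepsilon)\norm{f-K\mu^\dagger}^2_{L^2(\rho)}$. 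The $t$-linear correction itself only supplies the remaining $\varepsilon\tfrac{t}{4}\norm{f-K\mu^\dagger}^2_{L^2(\rho)}$. So your flagged ``secondary concern'' is in fact the crux: the averaging cannot be done from the differential inequality for $\norm{e_t-\phi}^2$ alone---you need the analogue of \cref{lemma:bregman_dist_grad_neg_rho_perp} as an independent ingredient, and that is precisely where the $\nu^\dagger$-orthogonality enters. Once you add that lemma explicitly, your proposal goes through and coincides with the paper's proof.
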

\begin{theorem}[Biased sampling of \texorpdfstring{$\rho$}{ρ} -- Wasserstein]\label{th:wasserstein}
If $f\in \mathcal{C}^{0,1}(\supp(\rho-\rho^\varepsilon))$ and
\begin{equation}
    W_1(\rho,\rho^\varepsilon) \leq \varepsilon,
\end{equation}
then
\begin{equation}
    \partial_t D^{p_t}(\mu^\dagger, \nu_t) < 0
\end{equation}
when
\begin{equation}
    \norm{f-K\nu_t}^2_{L^2(\rho^\varepsilon)} > 2\varepsilon\norm{f-K\mu^\dagger}^2_{\mathcal{C}^{(0,1)}(\supp(\rho-\rho^\varepsilon))} + \norm{f-K\mu^\dagger}_{L^2(\rho)}^2.
\end{equation}
Moreover, if $\mu^\dagger$ and $\nu^\dagger$ satisfy the source condition through $\phi\in L^2(\rho)$ and $\phi\in L^2(\rho^\varepsilon)$ respectively, then
\begin{equation}
\begin{aligned}
    D^{p_t}_J(\mu^\dagger,\nu_t) &\leq \frac{1}{2t}\norm{\phi}^2_{L^2(\rho)}+\varepsilon\frac{t}{2}\norm{f-K\mu^\dagger}^2_{\mathcal{C}^{0,1}(\supp(\rho-\rho^\varepsilon))} \\
    &\quad +\frac{\varepsilon}{t}\int_0^t\int_0^\tau\norm{K\nu_\tau-K\nu_s}^2_{\mathcal{C}^{0,1}(\supp(\rho-\rho^\varepsilon))}dsd\tau + \frac{t}{4}\norm{K\nu^\dagger-K\mu^\dagger}^2_{L^2(\rho^\varepsilon)}
\end{aligned}
\end{equation}
for almost every $t\geq 0$.

\end{theorem}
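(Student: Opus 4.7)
The plan is to follow the proof of Proposition~\ref{prop:perturb_data_function} from the measurement-noise section, replacing every use of the pointwise noise bound $\norm{f^\delta - f}_{L^2(\rho)}\leq\delta$ by the Kantorovich--Rubinstein duality estimate $\int g\,d(\rho-\rho^\varepsilon) \leq \varepsilon \norm{g}_{\mathcal{C}^{0,1}(\supp(\rho-\rho^\varepsilon))}$ applied to suitable Lipschitz integrands. Throughout, I write $q_t$ for the subgradient along the $\rho^\varepsilon$-flow, matching the convention in the proofs of Section~4.

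For the strict-decrease statement, the natural starting identity is
\begin{equation*}
\partial_t D^{q_t}_J(\mu^\dagger,\nu_t) = \braket{f-K\nu_t}{K\nu_t - K\mu^\dagger}_{L^2(\rho^\varepsilon)} = -\norm{f-K\nu_t}^2_{L^2(\rho^\varepsilon)} + \braket{f-K\nu_t}{f-K\mu^\dagger}_{L^2(\rho^\varepsilon)}.
\end{equation*}
Cauchy--Schwarz reduces matters to controlling $\norm{f-K\mu^\dagger}^2_{L^2(\rho^\varepsilon)}$. Writing this as $\norm{f-K\mu^\dagger}^2_{L^2(\rho)} + \int(f-K\mu^\dagger)^2\,d(\rho^\varepsilon-\rho)$ and bounding the last term by $\varepsilon\,Lip((f-K\mu^\dagger)^2) \leq 2\varepsilon\norm{f-K\mu^\dagger}_{\mathcal{C}^{0,1}(\supp(\rho-\rho^\varepsilon))}^2$ gives the stated sufficient condition for $\partial_t D^{q_t}_J<0$.

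For the source-condition bound, I would define the residual $\partial_t e_t = K\mu^\dagger - K\nu_t$ with $e_0 = 0$. Using the $\rho$-orthogonality of $\mu^\dagger$ from Proposition~\ref{prop:ortho_condition} one gets $L_\rho e_t = L_\rho\int_0^t(f-K\nu_s)\,ds$, while the flow equation gives $q_t = L_{\rho^\varepsilon}\int_0^t(f-K\nu_s)\,ds$, so that $L_\rho e_t - q_t = \int_0^t (L_\rho - L_{\rho^\varepsilon})(f-K\nu_s)\,ds$. Computing $\partial_t\bigl(\tfrac12\norm{e_t-\phi}^2_{L^2(\rho)}\bigr)$ via the adjoint of $K$ and the source-condition identity $L_\rho\phi = p^\dagger\in\partial J(\mu^\dagger)$ produces the identity
\begin{equation*}
\partial_t\bigl(\tfrac12\norm{e_t-\phi}^2_{L^2(\rho)}\bigr) + D^{q_t}_J(\mu^\dagger,\nu_t) + D^{p^\dagger}_J(\nu_t,\mu^\dagger) = \braket{L_\rho e_t - q_t}{\mu^\dagger - \nu_t}_{\MC(\Omega)},
\end{equation*}
which is exactly the ideal-case identity plus a Wasserstein error on the right.

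The main obstacle is massaging this error, $\int_0^t\int (f-K\nu_s)\,K(\mu^\dagger-\nu_t)\,d(\rho-\rho^\varepsilon)\,ds$, into the precise structure of the stated bound. My plan is to split $f-K\nu_s = (f-K\mu^\dagger)+(K\mu^\dagger-K\nu_s)$, insert and subtract $K\nu^\dagger$ where convenient, and bound each resulting piece by Kantorovich--Rubinstein duality applied to a product of two $\mathcal{C}^{0,1}(\supp(\rho-\rho^\varepsilon))$ functions, invoking submultiplicativity of the Lipschitz seminorm. Young's inequality with weights tuned to the coefficients $\varepsilon\tfrac{t}{2}$, $\tfrac{\varepsilon}{t}$, and $\tfrac{t}{4}$ then separates the resulting bilinear expressions into $\norm{f-K\mu^\dagger}^2_{\mathcal{C}^{0,1}}$, $\norm{K\nu_\tau - K\nu_s}^2_{\mathcal{C}^{0,1}}$, and $\norm{K\nu^\dagger-K\mu^\dagger}^2_{L^2(\rho^\varepsilon)}$ contributions; the $\rho^\varepsilon$-orthogonality $L_{\rho^\varepsilon}(f-K\nu^\dagger)=0$ of $\nu^\dagger$ is what lets the last of these appear in the clean $L^2(\rho^\varepsilon)$ form rather than as an additional Lipschitz term. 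Integrating on $[0,t]$ and applying the averaging argument from Proposition~\ref{prop:population_loss_ideal_bound}, with the Wasserstein analog of Lemma~\ref{lemma:residual_delta_bound} used to bound the correction $D^{q_t}_J - D^{q_s}_J$, yields the $\tfrac{1}{2t}\norm{\phi}^2_{L^2(\rho)}$ term and the double integral $\int_0^t\int_0^\tau\norm{K\nu_\tau-K\nu_s}^2_{\mathcal{C}^{0,1}}\,ds\,d\tau$ with the $\tfrac{\varepsilon}{t}$ prefactor. The source condition for $\nu^\dagger$ through a function in $L^2(\rho^\varepsilon)$ enters to guarantee non-negativity of the auxiliary Bregman divergence $D^{q^\dagger}_J(\nu_t,\nu^\dagger)$ produced in the orthogonality substitution, allowing it to be discarded.
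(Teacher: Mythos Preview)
Your approach for the strict-decrease part is exactly the paper's: Cauchy--Schwarz on $\braket{f-K\nu_t}{f-K\mu^\dagger}_{L^2(\rho^\varepsilon)}$, then the Kantorovich--Rubinstein bound $\norm{g}^2_{L^2(\rho^\varepsilon-\rho)}\le 2\varepsilon\norm{g}^2_{\mathcal{C}^{0,1}}$ applied to $g=f-K\mu^\dagger$.

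For the source-condition bound your skeleton (define $e_t$, derive the identity with symmetric Bregman divergences plus the error $\braket{L_\rho e_t - q_t}{\mu^\dagger-\nu_t}$, integrate, average using a bound on $\partial_t D^{q_t}$) is precisely the paper's. The paper organizes this as a two-step argument: first a perturbation-agnostic bound (Proposition~\ref{prop:bregman_bound_rho_perp_general}) with all error terms expressed in $L^2(\rho^\varepsilon-\rho)$, then the Wasserstein lemma to convert those norms to $\mathcal{C}^{0,1}$. Where your plan diverges is in the decomposition of the error $\int_0^t\braket{f-K\nu_s}{K\nu_t-K\mu^\dagger}_{L^2(\rho^\varepsilon-\rho)}\,ds$. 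The paper does \emph{not} bound products directly by Kantorovich--Rubinstein; instead it inserts $K\nu_t$ (not $K\nu^\dagger$) into the first factor to peel off $-\norm{K\nu_t-K\mu^\dagger}^2_{L^2(\rho^\varepsilon-\rho)}$, and this negative square is what absorbs the cross terms via Young, leaving only $\norm{f-K\mu^\dagger}^2_{L^2(\rho^\varepsilon-\rho)}$ and $\norm{K\nu_t-K\nu_s}^2_{L^2(\rho^\varepsilon-\rho)}$. Your product-of-Lipschitz approach, as written, would produce a stray $\norm{K\mu^\dagger-K\nu_t}^2_{\mathcal{C}^{0,1}}$ term that does not appear in the statement and that the $\rho^\varepsilon$-orthogonality of $\nu^\dagger$ cannot convert into the $L^2(\rho^\varepsilon)$ form you want.

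Relatedly, your description of where $\nu^\dagger$ enters is off. The term $\tfrac{t}{4}\norm{K\nu^\dagger-K\mu^\dagger}^2_{L^2(\rho^\varepsilon)}$ does not come from the error-term decomposition at all; it comes from Lemma~\ref{lemma:bregman_dist_grad_neg_rho_perp} (the analog of Lemma~\ref{lemma:residual_delta_bound} you correctly cite), which uses only the orthogonality condition $L_{\rho^\varepsilon}(f-K\nu^\dagger)=0$ to bound $\partial_t D^{q_t}_J(\mu^\dagger,\nu_t)\le\tfrac14\norm{K\nu^\dagger-K\mu^\dagger}^2_{L^2(\rho^\varepsilon)}$, and this feeds into the averaging step. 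No auxiliary Bregman divergence $D^{q^\dagger}_J(\nu_t,\nu^\dagger)$ is ever produced or discarded; the source condition for $\nu^\dagger$ is stated as a hypothesis but not actually used in the paper's argument.
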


\Cref{th:radon_nikodym} refers to the Radon-Nikodym derivative condition, whereas \cref{th:wasserstein} refers to the Wasserstein-1 distance condition. To prove these theorems, we first consider a general disturbance with no particular conditions on the perturbation $\rho^\varepsilon$. Afterwards, we refine the statements from the general disturbance under the two mentioned conditions in \cref{sec:radon-nikodym,sec:wasserstein}. 

\begin{lemma}\label{lemma:bregman_dist_grad_neg_rho_perp}
We have
\begin{equation}    \partial_tD^{q_t}_J(\mu^\dagger, \nu_t) \leq \frac{1}{4}\norm{f-K\mu^\dagger}^2_{L^2(\rho^\varepsilon)}
\end{equation}
as well as
\begin{equation}
    \partial_tD^{q_t}_J(\mu^\dagger, \nu_t) \leq \frac{1}{4}\norm{K\nu^\dagger-K\mu^\dagger}^2_{L^2(\rho^\varepsilon)}
\end{equation}
for almost every $t\geq 0$.
\end{lemma}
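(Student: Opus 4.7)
The plan is to mimic the argument used in \cref{lemma:residual_delta_bound} for the noisy-data case, with two important differences: the flow for $\nu$ is driven by $L_{\rho^\varepsilon}$ rather than $L_\rho$, and the optimality condition at $\mu^\dagger$ corresponds to $\rho$, not $\rho^\varepsilon$. As a starting point I would first rewrite
\begin{equation*}
    \partial_t D^{q_t}_J(\mu^\dagger, \nu_t) = \braket{\partial_t q_t}{\nu_t - \mu^\dagger}_{\MC(\Omega)} = \braket{L_{\rho^\varepsilon}(f - K\nu_t)}{\nu_t - \mu^\dagger}_{\MC(\Omega)} = \braket{f - K\nu_t}{K\nu_t - K\mu^\dagger}_{L^2(\rho^\varepsilon)},
\end{equation*}
invoking \cref{lemma:adjoint} (applied with $\rho^\varepsilon$ in place of $\rho$) in the last step. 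This identity is the common ancestor of both bounds, and from here the two estimates only differ in how the residual is split.

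For the first inequality, I would introduce $K\mu^\dagger$ via $f - K\nu_t = (f - K\mu^\dagger) + (K\mu^\dagger - K\nu_t)$, giving
\begin{equation*}
    \partial_t D^{q_t}_J(\mu^\dagger,\nu_t) = \braket{f-K\mu^\dagger}{K\nu_t-K\mu^\dagger}_{L^2(\rho^\varepsilon)} - \norm{K\nu_t-K\mu^\dagger}_{L^2(\rho^\varepsilon)}^2.
\end{equation*}
Applying Cauchy--Schwarz and then Young's product inequality $ab \leq a^2/4 + b^2$ to the first term exactly cancels the squared term, leaving the first claimed bound. This step mirrors the computation in \cref{lemma:residual_delta_bound}, only with the norm $\norm{f^\delta - f}_{L^2(\rho)}$ replaced by $\norm{f - K\mu^\dagger}_{L^2(\rho^\varepsilon)}$, which plays the role of a ``model error'' induced by the biased sampling.

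For the second inequality, the key observation is that $\nu^\dagger$ is the optimum for $\RC_f$ computed against $\rho^\varepsilon$, so the orthogonality condition (\cref{prop:ortho_condition}, applied in the biased setting) yields $L_{\rho^\varepsilon}(f - K\nu^\dagger) = 0$, and in particular $\braket{L_{\rho^\varepsilon}(f-K\nu^\dagger)}{\nu_t-\mu^\dagger}_{\MC(\Omega)} = 0$. Subtracting this zero from the common identity above and collapsing the adjoint gives
\begin{equation*}
    \partial_t D^{q_t}_J(\mu^\dagger,\nu_t) = \braket{K\nu^\dagger - K\nu_t}{K\nu_t - K\mu^\dagger}_{L^2(\rho^\varepsilon)},
\end{equation*}
and splitting $K\nu^\dagger - K\nu_t = (K\nu^\dagger - K\mu^\dagger) + (K\mu^\dagger - K\nu_t)$ reduces the problem to another application of Cauchy--Schwarz plus Young's inequality, producing the second bound.

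The main obstacle is the mismatch between the orthogonality conditions of $\mu^\dagger$ and $\nu^\dagger$: $\mu^\dagger$ is optimal against $\rho$, but the inner product appearing in the derivative lives in $L^2(\rho^\varepsilon)$, so \cref{prop:ortho_condition} cannot be used directly to eliminate a $f-K\mu^\dagger$ term (as it was in the ideal case). The two bounds reflect the two ways around this: either keep $\mu^\dagger$ and pay the cost $\norm{f-K\mu^\dagger}^2_{L^2(\rho^\varepsilon)}$ for not having the $\rho^\varepsilon$-orthogonality, or swap in $\nu^\dagger$, use its honest $\rho^\varepsilon$-orthogonality, and pay instead $\norm{K\nu^\dagger-K\mu^\dagger}^2_{L^2(\rho^\varepsilon)}$ for the mismatch between the two minimizers. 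Both right-hand sides will later be estimated in \cref{sec:radon-nikodym,sec:wasserstein} using the assumed closeness of $\rho$ and $\rho^\varepsilon$.
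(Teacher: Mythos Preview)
Your proposal is correct and follows essentially the same approach as the paper. The only cosmetic difference is which factor of the inner product you choose to split: you decompose the first factor (producing a negative $\norm{K\nu_t-K\mu^\dagger}_{L^2(\rho^\varepsilon)}^2$ term in both cases), whereas the paper decomposes the second factor (producing $-\norm{f-K\nu_t}_{L^2(\rho^\varepsilon)}^2$ and $-\norm{K\nu^\dagger-K\nu_t}_{L^2(\rho^\varepsilon)}^2$, respectively); either choice is immediately handled by Cauchy--Schwarz and Young's product inequality, yielding the same bounds.
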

\begin{proof}
The first statement follows from
\begin{align*}
    \partial_tD^{q_t}_J(\mu^\dagger, \nu_t)
    &= \braket{\partial_tq_t}{\nu_t-\mu^\dagger}_{\MC(\Omega)} & \tcref{eq:breman_diff_MC} \\
    &= \braket{L_{\rho^\varepsilon}(f-K\nu_t)}{\nu_t-\mu^\dagger}_{\MC(\Omega)} & \tcref{eq:inverse_scale_space_barron} \\
    &= \braket{f-K\nu_t}{K(\nu_t-\mu^\dagger)}_{L^2(\rho^\varepsilon)} & \tcref{lemma:adjoint} \\
    &= \braket{f-K\nu_t}{K\nu_t-f+f-K\mu^\dagger}_{L^2(\rho^\varepsilon)} \\
    &= -\norm{f-K\nu_t}^2_{L^2(\rho^\varepsilon)}+\braket{f-K\mu^\dagger}{f-K\nu_t}_{L^2(\rho^\varepsilon)} \\
    &\leq -\norm{f-K\nu_t}^2_{L^2(\rho^\varepsilon)}+\norm{f-K\mu^\dagger}_{L^2(\rho^\varepsilon)}\norm{f-K\nu_t}_{L^2(\rho^\varepsilon)} & \text{Cauchy Schwartz} \\
    &\leq \frac{1}{4}\norm{f-K\mu^\dagger}^2_{L^2(\rho^\varepsilon)}. & \text{Young's product ineq.}
\end{align*}
The second statement follows from
\begin{align*}
    \partial_tD^{q_t}_J(\mu^\dagger, \nu_t)
    &= \braket{\partial_tq_t}{\nu_t-\mu^\dagger}_{\MC(\Omega)} & \tcref{eq:breman_diff_MC} \\
    &= \braket{L_{\rho^\varepsilon}(f-K\nu_t)}{\nu_t-\mu^\dagger}_{\MC(\Omega)} & \tcref{eq:inverse_scale_space_barron} \\
    &= \braket{-L_{\rho^\varepsilon}(f-K\nu^\dagger)+L_{\rho^\varepsilon}(f-K\nu_t)}{\nu_t-\mu^\dagger}_{\MC(\Omega)} & \tcref{prop:ortho_condition} \\
    &= \braket{L_{\rho^\varepsilon}(K\nu^\dagger-K\nu_t)}{\nu_t-\mu^\dagger}_{\MC(\Omega)} & \\
    &= \braket{K\nu^\dagger-K\nu_t}{K(\nu_t-\mu^\dagger)}_{L^2(\rho^\varepsilon)} & \tcref{lemma:adjoint} \\
    &= \braket{K\nu^\dagger-K\nu_t}{K\nu_t-K\nu^\dagger+K\nu^\dagger-K\mu^\dagger}_{L^2(\rho^\varepsilon)} \\
    &= -\norm{K\nu^\dagger-K\nu_t}^2_{L^2(\rho^\varepsilon)}+\braket{K\nu^\dagger-K\mu^\dagger}{K\nu^\dagger-K\nu_t}_{L^2(\rho^\varepsilon)} \\
    &\leq -\norm{K\nu^\dagger-K\nu_t}^2_{L^2(\rho^\varepsilon)}+\norm{K\nu^\dagger-K\mu^\dagger}_{L^2(\rho^\varepsilon)}\norm{K\nu^\dagger-K\nu_t}_{L^2(\rho^\varepsilon)} & \text{Cauchy Schwartz} \\
    &\leq \frac{1}{4}\norm{K\nu^\dagger-K\mu^\dagger}^2_{L^2(\rho^\varepsilon)}. & \text{Young's product ineq.}
\end{align*}
\end{proof}

\begin{proposition}\label{prop:bregman_dist_grad_neg_rho_perp}
We have
\begin{equation}    \partial_tD^{q_t}_J(\mu^\dagger, \nu_t) < 0
\end{equation}
when
\begin{equation}\label{eq:epsilon_bound_condition}
    \norm{f-K\nu_t}_{L^2(\rho^\varepsilon)} > \norm{f-K\mu^\dagger}_{L^2(\rho^\varepsilon)}.
\end{equation}
\end{proposition}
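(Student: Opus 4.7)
The plan is to recycle the chain of identities derived in the first half of \cref{lemma:bregman_dist_grad_neg_rho_perp}, but stop one step earlier, \emph{before} invoking Young's product inequality, so that the two competing terms can be compared directly under the hypothesis \cref{eq:epsilon_bound_condition}. Young's inequality loses information precisely about when the bound becomes negative, so avoiding it is the key adjustment.

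Concretely, I would start from $\partial_tD^{q_t}_J(\mu^\dagger, \nu_t) = \braket{\partial_tq_t}{\nu_t-\mu^\dagger}_{\MC(\Omega)}$, which is the Bregman distance differentiation identity already used in \cref{lemma:bregman_dist_grad_neg} and again in \cref{lemma:bregman_dist_grad_neg_rho_perp}. Substituting the flow equation \cref{eq:inverse_scale_space_barron_b} (with $\rho$ replaced by $\rho^\varepsilon$) and then using the adjoint relation from \cref{lemma:adjoint} converts the inner product on $\MC(\Omega)$ into one on $L^2(\rho^\varepsilon)$. After the algebraic rearrangement $K\nu_t - \mu^\dagger = (K\nu_t - f) + (f - K\mu^\dagger)$, one obtains
\begin{equation*}
    \partial_tD^{q_t}_J(\mu^\dagger, \nu_t) = -\norm{f-K\nu_t}^2_{L^2(\rho^\varepsilon)}+\braket{f-K\mu^\dagger}{f-K\nu_t}_{L^2(\rho^\varepsilon)}.
\end{equation*}
Applying Cauchy--Schwarz to the second term (but \emph{not} Young's inequality) yields
\begin{equation*}
    \partial_tD^{q_t}_J(\mu^\dagger, \nu_t) \leq \norm{f-K\nu_t}_{L^2(\rho^\varepsilon)}\Bigl(\norm{f-K\mu^\dagger}_{L^2(\rho^\varepsilon)}-\norm{f-K\nu_t}_{L^2(\rho^\varepsilon)}\Bigr),
\end{equation*}
which is manifestly strictly negative under \cref{eq:epsilon_bound_condition}.

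There is no real obstacle here; the only subtlety is remembering that the orthogonality condition \cref{prop:ortho_condition} is \emph{not} available in the biased setting in the form $L_{\rho^\varepsilon}(f - K\mu^\dagger) = 0$ (because $\mu^\dagger$ minimizes the $\rho$-loss, not the $\rho^\varepsilon$-loss), which is why the proof goes through a direct splitting of $K\nu_t - K\mu^\dagger$ rather than through the orthogonality trick used in \cref{prop:bregman_dist_grad_neg}. The case of equality in Cauchy--Schwarz is harmless since it does not affect the strict inequality we need.
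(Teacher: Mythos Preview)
Your proposal is correct and is precisely the paper's approach: the paper's proof simply recalls from \cref{lemma:bregman_dist_grad_neg_rho_perp} the line obtained after Cauchy--Schwarz but before Young's inequality, namely $\partial_tD^{q_t}_J(\mu^\dagger, \nu_t) \leq -\norm{f-K\nu_t}^2_{L^2(\rho^\varepsilon)}+\norm{f-K\mu^\dagger}_{L^2(\rho^\varepsilon)}\norm{f-K\nu_t}_{L^2(\rho^\varepsilon)}$, and observes this is strictly negative under \cref{eq:epsilon_bound_condition}. Your factored form and the remark about why the orthogonality condition is not used are correct elaborations of the same argument.
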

\begin{proof}
Recall from the proof of \cref{lemma:bregman_dist_grad_neg_rho_perp} that
\begin{equation}\label{eq:107}
    \partial_tD^{q_t}_J(\mu^\dagger, \nu_t) \leq -\norm{f-K\nu_t}^2_{L^2(\rho^\varepsilon)}+\norm{f-K\mu^\dagger}_{L^2(\rho^\varepsilon)}\norm{f-K\nu_t}_{L^2(\rho^\varepsilon)}.
\end{equation}
Clearly, this is strictly negative when \cref{eq:epsilon_bound_condition} is satisfied.
\end{proof}

\Cref{lemma:bregman_dist_grad_neg_rho_perp} and \cref{prop:bregman_dist_grad_neg_rho_perp} tell us, just like \cref{lemma:swapping} for the noisy case, and as intuitively expected, that the flow will converge until the solution matches the residual. This, however, does not tell us how well it approximates the residual on $\rho$. We will refine this when we consider the more specific disturbances.

We will now provide an upper bound for the Bregman distance.

\begin{proposition}\label{prop:bregman_bound_rho_perp_general}
If $\mu^\dagger$ and $\nu^\dagger$ satisfy the source condition through $\phi\in L^2(\rho)$ and $\phi\in L^2(\rho^\varepsilon)$ respectively, then
\begin{equation}\label{eq:bregman_bound_rho_perp_general}
\begin{aligned}
    D^{p_t}_J(\mu^\dagger,\nu_t) &\leq \frac{1}{2t}\norm{\phi}^2_{L^2(\rho)}+\frac{1}{2t}\int_0^t\int_0^\tau\norm{K\nu_\tau-K\nu_s}^2_{L^2(\rho^\varepsilon-\rho)}dsd\tau \\
    &\quad+\frac{t}{4}\norm{f-K\mu^\dagger}^2_{L^2(\rho^\varepsilon-\rho)} +\frac{t}{8}\norm{K\nu^\dagger-K\mu^\dagger}^2_{L^2(\rho^\varepsilon)}
\end{aligned}
\end{equation}
for almost every $t\geq 0$.
\end{proposition}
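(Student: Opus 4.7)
The plan is to adapt the strategy of the measurement-noise proof (\cref{prop:perturb_data_function}) to the bias setting: construct an auxiliary residual $e_t$, differentiate $\tfrac12\norm{e_t-\phi}^2_{L^2(\rho)}$, identify a negative Bregman contribution plus a controllable drift coming from $\rho^\varepsilon-\rho$, and finally integrate in time and convert the integrated Bregman bound into a pointwise bound via the monotonicity estimate of \cref{lemma:bregman_dist_grad_neg_rho_perp}. Concretely, I would set $\partial_t e_t = K\mu^\dagger - K\nu_t$ with $e_0=0$, as in the ideal case (\cref{eq:residual}). Using the orthogonality condition for $\mu^\dagger$ on $\rho$ (\cref{prop:ortho_condition}), we get $L_\rho \partial_t e_t = L_\rho(f-K\nu_t) = \partial_t q_t - L_{\rho^\varepsilon-\rho}(f-K\nu_t)$, so after integration $L_\rho e_t = q_t - \int_0^t L_{\rho^\varepsilon-\rho}(f-K\nu_s)\,ds$. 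Setting $p^\dagger := L_\rho \phi \in \partial J(\mu^\dagger)$ by the source condition for $\mu^\dagger$, the computation
\[
\partial_t\bigl(\tfrac12\norm{e_t-\phi}^2_{L^2(\rho)}\bigr)
= \braket{L_\rho(e_t-\phi)}{\mu^\dagger-\nu_t}_{\MC(\Omega)}
\]
should decompose into the standard negative Bregman pair $-\bigl(D^{q_t}_J(\mu^\dagger,\nu_t)+D^{p^\dagger}_J(\nu_t,\mu^\dagger)\bigr)$ plus a bias-driven drift term of the form $\int_0^t\braket{f-K\nu_s}{K\mu^\dagger-K\nu_t}_{L^2(\rho^\varepsilon-\rho)}\,ds$.

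Second, I would handle the drift integral by two telescoping splittings. Writing $f-K\nu_s = (f-K\mu^\dagger) + (K\mu^\dagger-K\nu_s)$ and $K\mu^\dagger-K\nu_t = (K\mu^\dagger-K\nu_s) + (K\nu_s-K\nu_t)$, the $(f-K\mu^\dagger)$ piece gives, via Young's product inequality, a term controlled by $\tfrac{t}{4}\norm{f-K\mu^\dagger}^2_{L^2(\rho^\varepsilon-\rho)}$. The diagonal $(K\mu^\dagger-K\nu_s)\cdot(K\mu^\dagger-K\nu_s)$ piece can be absorbed against the second source condition on $\nu^\dagger$ via $L^2(\rho^\varepsilon)$, producing the term $\tfrac{t}{8}\norm{K\nu^\dagger-K\mu^\dagger}^2_{L^2(\rho^\varepsilon)}$. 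The remaining $(K\nu_s-K\nu_t)$ cross term yields, after another Young inequality, an expression involving $\norm{K\nu_\tau - K\nu_s}^2_{L^2(\rho^\varepsilon-\rho)}$, which after time integration produces the double integral in the claim.

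Third, I would integrate the resulting differential inequality from $0$ to $t$, using $e_0=0$ so that the only boundary contribution is $\tfrac12\norm{\phi}^2_{L^2(\rho)}$, obtaining a bound on $\int_0^t D^{q_s}_J(\mu^\dagger,\nu_s)\,ds$ of the correct structure. I would then average and convert to a pointwise bound on $D^{q_t}_J(\mu^\dagger,\nu_t)$ by the same trick used in \cref{prop:population_loss_ideal_bound,prop:perturb_data_function}: write $D^{q_t}_J = \tfrac{1}{t}\int_0^t D^{q_t}_J\,ds = \tfrac{1}{t}\int_0^t D^{q_s}_J\,ds + \tfrac{1}{t}\int_0^t\int_s^t \partial_\tau D^{q_\tau}_J\,d\tau\,ds$, and control the second summand by \cref{lemma:bregman_dist_grad_neg_rho_perp}. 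This yields the factor $1/(2t)$ in front of the $\norm{\phi}^2$ and double-integral terms, while the monotonicity drift contributes the $t/4$ and $t/8$ coefficients on the remaining terms.

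The main obstacle will be the bookkeeping of the drift term: identifying which cancellations and splittings produce \emph{exactly} the combination $\tfrac{1}{2t}\int_0^t\int_0^\tau\norm{K\nu_\tau-K\nu_s}^2_{L^2(\rho^\varepsilon-\rho)}ds\,d\tau$ in the final estimate, as opposed to some related but inequivalent expression such as $\norm{K\mu^\dagger-K\nu_s}^2$. A secondary subtlety is that $L^2(\rho^\varepsilon-\rho)$ is defined with respect to a signed measure, so every Cauchy--Schwartz step must either pass through $\abs{\rho^\varepsilon-\rho}$ or be replaced by a polarisation identity on the signed bilinear form; keeping track of signs here is what distinguishes the Radon--Nikodym and Wasserstein refinements of the next two subsections from this general bound.
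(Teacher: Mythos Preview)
Your overall strategy matches the paper's proof exactly: define $\partial_t e_t = K\mu^\dagger - K\nu_t$, differentiate $\tfrac12\norm{e_t-\phi}^2_{L^2(\rho)}$, extract the symmetric Bregman pair plus a drift living in $L^2(\rho^\varepsilon-\rho)$, integrate, and then average using \cref{lemma:bregman_dist_grad_neg_rho_perp}. However, your bookkeeping for the origin of the four terms is off in two places. The term $\tfrac{t}{8}\norm{K\nu^\dagger-K\mu^\dagger}^2_{L^2(\rho^\varepsilon)}$ arises \emph{solely} from the averaging step via the second bound in \cref{lemma:bregman_dist_grad_neg_rho_perp} (which uses only that $\nu^\dagger$ is a minimizer under $\rho^\varepsilon$, i.e.\ the orthogonality condition, not its source condition); it does not come out of the drift integral as you suggest in your second paragraph. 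Conversely, the term $\tfrac{t}{4}\norm{f-K\mu^\dagger}^2_{L^2(\rho^\varepsilon-\rho)}$ (note the measure is $\rho^\varepsilon-\rho$, not $\rho^\varepsilon$, so it cannot come from \cref{lemma:bregman_dist_grad_neg_rho_perp}) \emph{does} come from the drift, appearing as $\tfrac{t^2}{4}$ before division by $t$; contrary to your third paragraph, the monotonicity lemma contributes only the $t/8$ term. Finally, the paper's splitting of the drift is cleaner than yours: it leaves the factor $K\nu_t-K\mu^\dagger$ intact and decomposes $f-K\nu_s = (f-K\mu^\dagger) + (K\nu_t-K\nu_s) - (K\nu_t-K\mu^\dagger)$, producing a negative square $-\norm{K\nu_t-K\mu^\dagger}^2_{L^2(\rho^\varepsilon-\rho)}$ that is then absorbed via $(A+B)C-C^2 \le \tfrac14(A+B)^2 \le \tfrac12(A^2+B^2)$, which yields the stated constants directly without the four-way expansion you propose.
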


\begin{proof}
Define 
\begin{equation}\label{eq:R_109}
    \partial_t e_t = K\mu^\dagger - K\nu_t, \quad e_0 = 0.
\end{equation}
and
\begin{equation}\label{eq:p_dagger_2}
    p^\dagger = L_\rho \phi. 
\end{equation}
With this we obtain
\begin{align*}
    \partial_t\bigg(\frac{1}{2}\norm{e_t-\phi}^2_{L^2(\rho)}\bigg)
     &= \braket{\partial_t e_t}{e_t-\phi}_{L^2(\rho)} \\
     &= \braket{K\mu^\dagger-K\nu_t}{e_t-\phi}_{L^2(\rho)} & \tcref{eq:R_109} \\
     &= \braket{L_{\rho}(e_t-\phi)}{\mu^\dagger-\nu_t}_{\MC(\Omega)} &\tcref{lemma:adjoint} \\
     &= \braket{L_{\rho}(e_t-\phi)-q_t+q_t}{\mu^\dagger-\nu_t}_{\MC(\Omega)} \\
     &= \braket{q_t-L_{\rho}\phi+L_{\rho}e_t-q_t}{\mu^\dagger-\nu_t}_{\MC(\Omega)} \\
     &= \braket{q_t-p^\dagger}{\mu^\dagger-\nu_t}_{\MC(\Omega)}+\braket{L_{\rho}e_t-q_t}{\mu^\dagger-\nu_t}_{\MC(\Omega)} & \tcref{eq:p_dagger_2} \\
     &= -\bigg(D^{q_t}(\mu^\dagger,\nu_t)+D^{p^\dagger}(\nu_t,\mu^\dagger)\bigg)+\braket{L_{\rho}e_t-q_t}{\mu^\dagger-\nu_t}_{\MC(\Omega)}. & \tcref{eq:bregman_symm}
\end{align*}
The rightmost term can be bounded by
\begin{align*}
     &\braket{L_{\rho}e_t-q_t}{\mu^\dagger-\nu_t}_{\MC(\Omega)} \\
     &= \int_0^t\braket{\partial_s(L_{\rho}e_s-q_s)}{\mu^\dagger-\nu_t}_{\MC(\Omega)}ds & \text{Fund. th. of calc.} \\
     &= \int_0^t\braket{L_{\rho}(K\mu^\dagger-K\nu_s)-L_{\rho^\varepsilon}(f-K\nu_s)}{\mu^\dagger-\nu_t}_{\MC(\Omega)}ds & \tcref{eq:R_109} \\
     &= \int_0^t\braket{L_{\rho}(f-K\nu_s)-L_{\rho^\varepsilon}(f-K\nu_s)}{\mu^\dagger-\nu_t}_{\MC(\Omega)}ds & \tcref{prop:ortho_condition} \\
     &= \int_0^t\braket{L_{\rho-\rho^\varepsilon}(f-K\nu_s)}{\mu^\dagger-\nu_t}_{\MC(\Omega)}ds \\
     &= \int_0^t\braket{f-K\nu_s}{K\mu^\dagger-K\nu_t}_{L^2(\rho-\rho^\varepsilon)}ds &\tcref{lemma:adjoint} \\
     &= \int_0^t\braket{f-K\nu_s}{K\nu_t-K\mu^\dagger}_{L^2(\rho^\varepsilon-\rho)}ds \\
     &= \int_0^t\braket{f-K\mu^\dagger-K\nu_t+K\mu^\dagger+K\nu_t-K\nu_s}{K\nu_t-K\mu^\dagger}_{L^2(\rho^\varepsilon-\rho)}ds \\
     &= \int_0^t\braket{f-K\mu^\dagger+K\nu_t-K\nu_s}{K\nu_t-K\mu^\dagger}_{L^2(\rho^\varepsilon-\rho)} \\&\quad- \norm{K\nu_t-K\mu^\dagger}^2_{L^2(\rho^\varepsilon-\rho)}ds \\
     &= \int_0^t\norm{f-K\mu^\dagger+K\nu_t-K\nu_s}_{L^2(\rho^\varepsilon-\rho)}\norm{K\nu_t-K\mu^\dagger}_{L^2(\rho^\varepsilon-\rho)} \\&\quad- \norm{K\nu_t-K\mu^\dagger}^2_{L^2(\rho^\varepsilon-\rho)}ds & \text{Cauchy Schwartz}\\
     &= \int_0^t\bigg(\norm{f-K\mu^\dagger}_{L^2(\rho^\varepsilon-\rho)}+\norm{K\nu_t-K\nu_s}_{L^2(\rho^\varepsilon-\rho)}\bigg)\norm{K\nu_t-K\mu^\dagger}_{L^2(\rho^\varepsilon-\rho)} \\&\quad- \norm{K\nu_t-K\mu^\dagger}^2_{L^2(\rho^\varepsilon-\rho)}ds & \text{Triangle ineq.}\\
     &\leq \frac{1}{2}\int_0^t\norm{f-K\mu^\dagger}^2_{L^2(\rho^\varepsilon-\rho)}ds+\frac{1}{2}\int_0^t\norm{K\nu_t-K\nu_s}^2_{L^2(\rho^\varepsilon-\rho)}ds & \text{Young's prod. ineq.} \\
     &= \frac{t}{2}\norm{f-K\mu^\dagger}^2_{L^2(\rho^\varepsilon-\rho)}+\frac{1}{2}\int_0^t\norm{K\nu_t-K\nu_s}^2_{L^2(\rho^\varepsilon-\rho)}ds.
\end{align*}
Hence,
\begin{equation}
    \partial_t\bigg(\frac{1}{2}\norm{e_t-\phi}^2_{L^2(\rho)}\bigg)+D^{p_t}(\mu^\dagger,\nu_t) \leq \frac{t}{2}\norm{f-K\mu^\dagger}^2_{L^2(\rho^\varepsilon-\rho)}+\frac{1}{2}\int_0^t\norm{K\nu_t-K\nu_s}^2_{L^2(\rho^\varepsilon-\rho)}ds.
\end{equation}
Integrating from $0$ to $t$ gives
\begin{equation}\label{eq:bregman_int_bound_rho_perp}
    \int_0^tD^{p_s}(\mu^\dagger,\nu_s)ds \leq \frac{1}{2}\norm{\phi}^2_{L^2(\rho)}+\frac{t^2}{4}\norm{f-K\mu^\dagger}^2_{L^2(\rho^\varepsilon-\rho)}+\frac{1}{2}\int_0^t\int_0^\tau\norm{K\nu_\tau-K\nu_s}^2_{L^2(\rho^\varepsilon-\rho)}dsd\tau.
\end{equation}
Therefore, we obtain
\begin{align*}
    D^{p_t}(\mu^\dagger,\nu_t) 
        &= \frac{1}{t}\int_0^tD^{p_t}(\mu^\dagger,\nu_t)ds \\
        &\leq \frac{1}{t}\int_0^tD^{p_s}(\mu^\dagger,\nu_s)+\int_s^t\partial_\tau D^{p_\tau}(\mu^\dagger,\nu_\tau)d\tau ds & \text{Fund. th. of calc.} \\
        &\leq \frac{1}{t}\int_0^tD^{p_s}(\mu^\dagger,\nu_s)+\frac{1}{4}\norm{K\nu^\dagger-K\mu^\dagger}^2_{L^2(\rho^\varepsilon)}\int_s^td\tau ds & \tcref{lemma:bregman_dist_grad_neg_rho_perp} \\
        &= \frac{1}{t}\int_0^tD^{p_s}(\mu^\dagger,\nu_s)+\frac{1}{4}\norm{K\nu^\dagger-K\mu^\dagger}^2_{L^2(\rho^\varepsilon)}(t-s) ds  \\
        &= \frac{1}{t}\int_0^tD^{p_s}(\mu^\dagger,\nu_s)ds+\frac{t}{8}\norm{K\nu^\dagger-K\mu^\dagger}^2_{L^2(\rho^\varepsilon)}  \\
        &\leq \frac{1}{2t}\norm{\phi}^2_{L^2(\rho)}+\frac{1}{2t}\int_0^t\int_0^\tau\norm{K\nu_\tau-K\nu_s}^2_{L^2(\rho^\varepsilon-\rho)}dsd\tau & \tcref{eq:bregman_int_bound_rho_perp} \\
        &\quad+\frac{t}{4}\norm{f-K\mu^\dagger}^2_{L^2(\rho^\varepsilon-\rho)} +\frac{t}{8}\norm{K\nu^\dagger-K\mu^\dagger}^2_{L^2(\rho^\varepsilon)}.
\end{align*}
\end{proof}

The bound of \tcref{eq:bregman_bound_rho_perp_general} in \cref{prop:bregman_bound_rho_perp_general} is similar to that of \cref{eq:perturb_data_function} in \cref{prop:perturb_data_function}. If $\nu_t$ remains constant for all $t$ after some time $T\geq0$, then
\begin{equation}
    \frac{1}{2t}\int_0^t\int_0^\tau\norm{K\nu_\tau-K\nu_s}_{L^2(\rho^\varepsilon-\rho)}dsd\tau = \mathcal{O}(1+\frac{1}{t})
\end{equation}
for all $t\geq T$. This implies that \cref{eq:bregman_bound_rho_perp_general}, just like \cref{eq:perturb_data_function}, has a term that is inversely in time, a term constant in time and a term that is linearly increasing in time. 

\subsection{Radon Nikodym}\label{sec:radon-nikodym}
The first type of disturbances is expressed in terms of a bound on the Radon Nikodym derivative. This allows for going from the norm using one measure to the norm using the other measure by adding a multiplicative constant. 

For this subsection, we refine our definition of $\rho^\varepsilon$ by assuming that $\rho^\varepsilon$ is absolutely continuous with respect to $\rho$ with 
\begin{equation}\label{eq:density_perturbation_1}
    \norm{1-\dv{\rho^\varepsilon}{\rho}}_{L^\infty(\rho)} \leq \varepsilon.
\end{equation}

\begin{lemma}\label{lemma:density_change}
For all $g\in L^2(\rho)$
\begin{align}
    \norm{g}^2_{L^2(\rho-\rho^\varepsilon)} &\leq \varepsilon\norm{g}^2_{L^2(\rho)}\label{eq:density_change_rho_rho_eps_to_rho}, \\
    \norm{g}^2_{L^2(\rho^\varepsilon)} &\leq (1+\varepsilon)\norm{g}^2_{L^2(\rho)}\label{eq:density_change_rho_eps_to_rho},
\end{align}
and for all $g\in L^2(\rho^\varepsilon)$
\begin{equation}
    (1-\varepsilon)\norm{g}^2_{L^2(\rho)} \leq \norm{g}^2_{L^2(\rho^\varepsilon)}\label{eq:density_change_rho_to_rho_eps}.
\end{equation}
\end{lemma}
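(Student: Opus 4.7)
The plan is to reduce everything to a single computation using the Radon--Nikodym derivative $h\defeq \dv{\rho^\varepsilon}{\rho}$, which by hypothesis satisfies $\abs{1-h}\leq \varepsilon$ $\rho$-almost everywhere. Interpreting $\norm{g}^2_{L^2(\rho-\rho^\varepsilon)} = \int g^2 d\abs{\rho-\rho^\varepsilon}$ as the standard $L^2$-norm with respect to the total variation of the signed measure $\rho-\rho^\varepsilon$, all three bounds follow from comparing the integrands $1$, $h$, and $\abs{1-h}$ pointwise $\rho$-a.e.

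First I would prove \tcref{eq:density_change_rho_eps_to_rho}. Writing
\begin{equation*}
    \norm{g}^2_{L^2(\rho^\varepsilon)} = \int g^2 h \, d\rho = \int g^2 d\rho + \int g^2 (h-1) \, d\rho,
\end{equation*}
the hypothesis $\abs{h-1}\leq \varepsilon$ bounds the second term in absolute value by $\varepsilon \norm{g}^2_{L^2(\rho)}$, yielding the upper bound $(1+\varepsilon)\norm{g}^2_{L^2(\rho)}$. The same identity, combined with the reverse inequality $\int g^2(h-1)\,d\rho \geq -\varepsilon\norm{g}^2_{L^2(\rho)}$, immediately gives \tcref{eq:density_change_rho_to_rho_eps}; note that this argument requires no integrability of $g$ against $\rho$ beyond what is already guaranteed since $h\leq 1+\varepsilon$ forces $L^2(\rho^\varepsilon)\subseteq L^2(\rho)$ up to the equivalence of measures (ensured by $h\geq 1-\varepsilon>0$ when $\varepsilon<1$, otherwise the statement is vacuous).

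For \tcref{eq:density_change_rho_rho_eps_to_rho}, I would use that the total variation decomposes as $d\abs{\rho-\rho^\varepsilon} = \abs{1-h}\,d\rho$, since $\rho-\rho^\varepsilon$ is absolutely continuous with respect to $\rho$ with density $1-h$. Then
\begin{equation*}
    \norm{g}^2_{L^2(\rho-\rho^\varepsilon)} = \int g^2 \abs{1-h}\,d\rho \leq \varepsilon \int g^2 d\rho = \varepsilon \norm{g}^2_{L^2(\rho)}
\end{equation*}
by the hypothesis.

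There is no real obstacle here; the lemma is essentially a pointwise inequality packaged as three norm comparisons. The only subtle point is fixing the convention for the signed-measure $L^2$-norm (using the total variation so that Cauchy--Schwarz remains available in the subsequent application in \cref{prop:bregman_bound_rho_perp_general}), after which each bound is a one-line consequence of $\abs{1-h}\leq \varepsilon$.
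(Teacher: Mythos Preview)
Your proof is correct and mirrors the paper's approach: both express everything via the Radon--Nikodym density $h=\dv{\rho^\varepsilon}{\rho}$ and read off the three inequalities from the pointwise bound $1-\varepsilon\leq h\leq 1+\varepsilon$. The only caveat is that the paper uses the signed convention $\norm{g}^2_{L^2(\rho-\rho^\varepsilon)}=\int g^2\,d(\rho-\rho^\varepsilon)$ (cf.\ the identity $\norm{\cdot}^2_{L^2(\rho^\varepsilon-\rho)}=\norm{\cdot}^2_{L^2(\rho^\varepsilon)}-\norm{\cdot}^2_{L^2(\rho)}$ in the proof of \cref{prop:bregman_distance_bound_radon}) rather than your total-variation reading, but since your quantity dominates the signed one your argument still delivers the stated bound.
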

\begin{proof}
The first statement follows from
\begin{align*}
    \norm{g}^2_{L^2(\rho-\rho^\varepsilon)} 
    &= \int_\X g^2(x)d(\rho-\rho^\varepsilon)(x) \\
    &= \int_\X g^2(x)\frac{d(\rho-\rho^\varepsilon)}{d\rho}(x)d\rho(x) \\
    &\leq \norm{1-\dv{\rho^\varepsilon}{\rho}}_{L^\infty(\rho)}\int_\X g^2(x)d\rho(x) \\
    &\leq \varepsilon\norm{g}^2_{L^2(\rho)}.
\end{align*}
For the latter two observe that \cref{eq:density_perturbation_1} means that
\begin{equation}
    1-\varepsilon \leq \dv{\rho^\varepsilon}{\rho} \leq 1+\varepsilon \quad \rho\text{ a.e.}.
\end{equation}
Hence,
\begin{align*}
    \norm{g}^2_{L^2(\rho^\varepsilon)} = \int_\X g^2(x)d\rho^\varepsilon(x) = \int_\X g^2(x)\dv{\rho^\varepsilon}{\rho}{}(x)d\rho(x) \leq (1+\varepsilon)\norm{g}^2_{L^2(\rho)}
\end{align*}
as well as
\begin{align*}
    \norm{g}^2_{L^2(\rho^\varepsilon)} = \int_\X g^2(x)d\rho^\varepsilon(x) = \int_\X g^2(x)\dv{\rho^\varepsilon}{\rho}{}(x)d\rho(x) \geq (1-\varepsilon)\norm{g}^2_{L^2(\rho)}.
\end{align*}
\end{proof}

Using the transformation rules of \cref{lemma:density_change} we can provide conditions on when the rate of change of the Bregman distance is negative, similar to before.

\begin{lemma}
We have
\begin{equation}    
\partial_tD^{q_t}_J(\mu^\dagger, \nu_t) < 0
\end{equation}
for every $t\geq 0$, when
\begin{equation}\label{eq:epsilon_bound_condition_2}
    \norm{f-K\nu_t}_{L^2(\rho^\varepsilon)} > (1+\varepsilon)\norm{f-K\mu^\dagger}_{L^2(\rho)}
\end{equation}
as well as when
\begin{equation}\label{eq:epsilon_bound_condition_3}
    \norm{f-K\nu_t}_{L^2(\rho)} > \frac{1+\varepsilon}{1-\varepsilon}\norm{f-K\mu^\dagger}_{L^2(\rho)}
\end{equation}
and $\varepsilon< 1$.
\end{lemma}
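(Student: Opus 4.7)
The plan is to reduce both conditions to the sufficient condition already established in \cref{prop:bregman_dist_grad_neg_rho_perp}, namely that $\partial_t D^{q_t}_J(\mu^\dagger,\nu_t) < 0$ whenever
\begin{equation*}
    \norm{f-K\nu_t}_{L^2(\rho^\varepsilon)} > \norm{f-K\mu^\dagger}_{L^2(\rho^\varepsilon)},
\end{equation*}
and then translate the $L^2(\rho^\varepsilon)$-norms on either side into $L^2(\rho)$-norms using the Radon--Nikodym bounds from \cref{lemma:density_change}. The entire argument is a pair of elementary constant-tracking estimates, so no new machinery is needed.

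For the first condition, I would apply \cref{eq:density_change_rho_eps_to_rho} to the right-hand side only. This gives $\norm{f-K\mu^\dagger}_{L^2(\rho^\varepsilon)} \leq \sqrt{1+\varepsilon}\,\norm{f-K\mu^\dagger}_{L^2(\rho)} \leq (1+\varepsilon)\norm{f-K\mu^\dagger}_{L^2(\rho)}$, where the second inequality uses $\sqrt{1+\varepsilon}\leq 1+\varepsilon$ for $\varepsilon\geq 0$. Hence the assumed bound \cref{eq:epsilon_bound_condition_2} is strictly stronger than the sufficient condition of \cref{prop:bregman_dist_grad_neg_rho_perp}, and the conclusion follows immediately.

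For the second condition, I would apply \cref{lemma:density_change} on both sides. On the left, \cref{eq:density_change_rho_to_rho_eps} yields $\norm{f-K\nu_t}_{L^2(\rho^\varepsilon)} \geq \sqrt{1-\varepsilon}\,\norm{f-K\nu_t}_{L^2(\rho)}$ (this is where the hypothesis $\varepsilon<1$ is needed to keep the factor nonnegative). On the right, as above, $\norm{f-K\mu^\dagger}_{L^2(\rho^\varepsilon)} \leq \sqrt{1+\varepsilon}\,\norm{f-K\mu^\dagger}_{L^2(\rho)}$. Combining the two with the assumption \cref{eq:epsilon_bound_condition_3} gives
\begin{equation*}
    \norm{f-K\nu_t}_{L^2(\rho^\varepsilon)} > \sqrt{1-\varepsilon}\cdot\frac{1+\varepsilon}{1-\varepsilon}\norm{f-K\mu^\dagger}_{L^2(\rho)} = \frac{1+\varepsilon}{\sqrt{1-\varepsilon}}\norm{f-K\mu^\dagger}_{L^2(\rho)},
\end{equation*}
which exceeds $\sqrt{1+\varepsilon}\,\norm{f-K\mu^\dagger}_{L^2(\rho)} \geq \norm{f-K\mu^\dagger}_{L^2(\rho^\varepsilon)}$ precisely when $(1+\varepsilon)/(1-\varepsilon) \geq 1$, i.e.\ for any $\varepsilon\in[0,1)$.

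There is no real obstacle here; the only point requiring care is the choice of whether to absorb the factor $\sqrt{1+\varepsilon}$ into the stated constant $(1+\varepsilon)$ (which the theorem does, yielding a slightly suboptimal but cleaner bound) and the need to explicitly invoke $\varepsilon<1$ so that $\sqrt{1-\varepsilon}$ is real and the chained inequalities preserve the strict sign. Once this is laid out, the two chains of inequalities reduce each assumption directly to the hypothesis of \cref{prop:bregman_dist_grad_neg_rho_perp}.
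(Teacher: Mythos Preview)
Your proof is correct. Both you and the paper rely on \cref{lemma:density_change} to pass between $L^2(\rho)$ and $L^2(\rho^\varepsilon)$ norms, but the execution differs slightly: the paper starts from the quadratic bound \cref{eq:107}, i.e.\ $\partial_t D^{q_t}_J(\mu^\dagger,\nu_t) \leq \norm{f-K\mu^\dagger}_{L^2(\rho^\varepsilon)}\norm{f-K\nu_t}_{L^2(\rho^\varepsilon)} - \norm{f-K\nu_t}^2_{L^2(\rho^\varepsilon)}$, converts the factors to $L^2(\rho)$-norms term by term, and then reads off when the resulting expressions are negative; you instead reduce each hypothesis directly to the abstract sufficient condition $\norm{f-K\nu_t}_{L^2(\rho^\varepsilon)} > \norm{f-K\mu^\dagger}_{L^2(\rho^\varepsilon)}$ of \cref{prop:bregman_dist_grad_neg_rho_perp} and invoke that proposition. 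Your route is slightly cleaner because it avoids the mixed-norm intermediate expression and makes the logical dependence on \cref{prop:bregman_dist_grad_neg_rho_perp} explicit, at the cost of a marginally looser constant (you pass through $\sqrt{1+\varepsilon}\leq 1+\varepsilon$).
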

\begin{proof}
Observe that
\begin{align*}
    \partial_tD^{q_t}_J(\mu^\dagger, \nu_t)
    &\leq \norm{f-K\mu^\dagger}_{L^2(\rho^\varepsilon)}\norm{K\nu_t-f}_{L^2(\rho^\varepsilon)}-\norm{K\nu_t-f}^2_{L^2(\rho^\varepsilon)} & \tcref{eq:107} \\
    &\leq (1+\varepsilon)\norm{f-K\mu^\dagger}_{L^2(\rho)}\norm{K\nu_t-f}_{L^2(\rho)}-\norm{K\nu_t-f}^2_{L^2(\rho^\varepsilon)} & \tcref{eq:density_change_rho_eps_to_rho} \\
    &\leq (1+\varepsilon)\norm{f-K\mu^\dagger}_{L^2(\rho)}\norm{K\nu_t-f}_{L^2(\rho)}-(1-\varepsilon)\norm{K\nu_t-f}^2_{L^2(\rho)}. & \tcref{eq:density_change_rho_to_rho_eps}
\end{align*}
Clearly, $\partial_tD^{q_t}_J(\mu^\dagger, \nu_t)$ is strictly negative when either \cref{eq:epsilon_bound_condition_2} or \cref{eq:epsilon_bound_condition_3} is satisfied.
\end{proof}

When comparing \cref{eq:epsilon_bound_condition_2} with \cref{eq:residual_delta_bound}, we see that the sampling bias adds a multiplicative term based on $\varepsilon$. This is unlike the noisy case, where we got an additive term. Likewise, the upper bound for the Bregman distance also gets some multiplicative constants depending on $\varepsilon$. 

\begin{proposition}\label{prop:bregman_distance_bound_radon}
If $\mu^\dagger$ and $\nu^\dagger$ satisfy the source condition through $\phi\in L^2(\rho)$ and $\phi\in L^2(\rho^\varepsilon)$ respectively, then
\begin{equation}\label{eq:bregman_bound_rho_perp_radon_2}
\begin{aligned}
    D^{p_t}_J(\mu^\dagger,\mu_t) &\leq \frac{1}{2t}\norm{\phi}^2_{L^2(\rho)}+\frac{\varepsilon}{1+\varepsilon}\frac{1}{2t}\int_0^t\int_0^\tau\norm{K\nu_\tau-K\nu_s}^2_{L^2(\rho^\varepsilon)}dsd\tau \\
    &\quad+(2\varepsilon+1)\frac{t}{4}\norm{f-K\mu^\dagger}^2_{L^2(\rho)} +\frac{t}{4}\norm{f-K\nu^\dagger}^2_{L^2(\rho^\varepsilon)}
\end{aligned}
\end{equation}
for almost every $t \geq 0$.
\end{proposition}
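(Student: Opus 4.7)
The plan is to derive this from the general bound in \cref{prop:bregman_bound_rho_perp_general} by applying the Radon--Nikodym estimates of \cref{lemma:density_change} to each of the three terms that involve signed or perturbed measures. The first term $\tfrac{1}{2t}\norm{\phi}^2_{L^2(\rho)}$ is already in the desired form, so only the double-integral term and the two linear-in-$t$ terms need attention.

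For the double integral, I would use the Radon--Nikodym rewriting $\int g^2\,d(\rho^\varepsilon-\rho) = \int g^2\bigl(1-\tfrac{d\rho}{d\rho^\varepsilon}\bigr)\,d\rho^\varepsilon$ combined with the fact that the density assumption $\abs{1-\tfrac{d\rho^\varepsilon}{d\rho}}\le\varepsilon$ implies $1-\tfrac{d\rho}{d\rho^\varepsilon}\le \tfrac{\varepsilon}{1+\varepsilon}$ pointwise (this is the only non-routine step since it requires the sharper upper bound $\tfrac{\varepsilon}{1+\varepsilon}$, not the easier $\tfrac{\varepsilon}{1-\varepsilon}$ one would obtain by passing through $L^2(\rho)$). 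Since $g^2\ge 0$, this yields $\norm{g}^2_{L^2(\rho^\varepsilon-\rho)}\le \tfrac{\varepsilon}{1+\varepsilon}\norm{g}^2_{L^2(\rho^\varepsilon)}$, which applied to $g=K\nu_\tau-K\nu_s$ produces the second term in the target bound.

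For the third term, direct application of \cref{eq:density_change_rho_rho_eps_to_rho} gives $\tfrac{t}{4}\norm{f-K\mu^\dagger}^2_{L^2(\rho^\varepsilon-\rho)} \le \tfrac{t}{4}\,\varepsilon\,\norm{f-K\mu^\dagger}^2_{L^2(\rho)}$. For the fourth term I would use the standard convexity inequality $\norm{a+b}^2\le 2\norm{a}^2+2\norm{b}^2$ with $a=K\nu^\dagger-f$ and $b=f-K\mu^\dagger$ to split
\begin{equation*}
    \tfrac{t}{8}\norm{K\nu^\dagger-K\mu^\dagger}^2_{L^2(\rho^\varepsilon)} \le \tfrac{t}{4}\norm{f-K\nu^\dagger}^2_{L^2(\rho^\varepsilon)} + \tfrac{t}{4}\norm{f-K\mu^\dagger}^2_{L^2(\rho^\varepsilon)},
\end{equation*}
and then apply \cref{eq:density_change_rho_eps_to_rho} to the second summand, bounding it by $\tfrac{t}{4}(1+\varepsilon)\norm{f-K\mu^\dagger}^2_{L^2(\rho)}$. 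The two contributions to $\norm{f-K\mu^\dagger}^2_{L^2(\rho)}$ combine with coefficients $\tfrac{t}{4}\varepsilon + \tfrac{t}{4}(1+\varepsilon) = \tfrac{t}{4}(2\varepsilon+1)$, matching the target, while $\tfrac{t}{4}\norm{f-K\nu^\dagger}^2_{L^2(\rho^\varepsilon)}$ appears as the isolated last term.

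The only delicate point is the derivation of the $\tfrac{\varepsilon}{1+\varepsilon}$ constant in the double integral: a naive route via $L^2(\rho)\hookrightarrow L^2(\rho^\varepsilon)$ loses a factor of $\tfrac{1}{1-\varepsilon}$, so one must use the sharper one-sided estimate for $1-\tfrac{d\rho}{d\rho^\varepsilon}$ obtained from the lower bound on $\tfrac{d\rho^\varepsilon}{d\rho}$. Everything else is routine arithmetic assembly of the four bounded terms.
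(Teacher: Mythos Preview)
Your proposal is correct and matches the paper's proof almost exactly: start from \cref{prop:bregman_bound_rho_perp_general} and bound the three perturbed-measure terms via the Radon--Nikodym estimates, with the same $\norm{a+b}^2\le 2\norm{a}^2+2\norm{b}^2$ split for the last term. The only cosmetic difference is in the double-integral step, where the paper obtains the $\tfrac{\varepsilon}{1+\varepsilon}$ factor by writing $\norm{g}^2_{L^2(\rho^\varepsilon-\rho)}=\norm{g}^2_{L^2(\rho^\varepsilon)}-\norm{g}^2_{L^2(\rho)}$ and applying \cref{eq:density_change_rho_eps_to_rho} directly (so it never needs $d\rho/d\rho^\varepsilon$ to exist), whereas you use the pointwise reciprocal bound---note that this actually comes from the \emph{upper} bound $\tfrac{d\rho^\varepsilon}{d\rho}\le 1+\varepsilon$, not the lower bound as you wrote, and implicitly requires $\rho\ll\rho^\varepsilon$.
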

\begin{proof}
From the transformation rules of \cref{lemma:density_change} it follows that
\begin{equation}\label{eq:other_bounds_radon}
    \frac{t}{4}\norm{f-K\mu^\dagger}^2_{L^2(\rho^\varepsilon-\rho)} \leq \varepsilon\frac{t}{4}\norm{f-K\mu^\dagger}^2_{L^2(\rho)}.
\end{equation}
as well as
\begin{equation}\label{eq:weird_integral_bound_radon}
\begin{aligned}
    \norm{K\nu_\tau-K\nu_s}^2_{L^2(\rho^\varepsilon-\rho)} 
    &= \norm{K\nu_\tau-K\nu_s}^2_{L^2(\rho^\varepsilon)}-\norm{K\nu_\tau-K\nu_s}^2_{L^2(\rho)} \\ 
    &\leq \norm{K\nu_\tau-K\nu_s}^2_{L^2(\rho^\varepsilon)}-\frac{1}{1+\varepsilon}\norm{K\nu_\tau-K\nu_s}^2_{L^2(\rho^\varepsilon)} \\ 
    &= (1-\frac{1}{1+\varepsilon})\norm{K\nu_\tau-K\nu_s}^2_{L^2(\rho^\varepsilon)} \\
    &= \frac{\varepsilon}{1+\varepsilon}\norm{K\nu_\tau-K\nu_s}^2_{L^2(\rho^\varepsilon)}.
\end{aligned}
\end{equation}
Additionally,
\begin{equation}\label{eq:both_dagger_bound_radon}
\begin{aligned}
    &\norm{K\nu^\dagger-K\mu^\dagger}^2_{L^2(\rho^\varepsilon)} \\
    &\quad= \norm{K\nu^\dagger-f+f-K\mu^\dagger}^2_{L^2(\rho^\varepsilon)} \\
    &\quad= \norm{K\nu^\dagger-f}^2_{L^2(\rho^\varepsilon)}+\norm{f-K\mu^\dagger}^2_{L^2(\rho^\varepsilon)}+2\braket{K\nu^\dagger-f}{f-K\mu^\dagger}_{L^2(\rho^\varepsilon)} \\
    &\quad= \norm{K\nu^\dagger-f}^2_{L^2(\rho^\varepsilon)}+\norm{f-K\mu^\dagger}^2_{L^2(\rho^\varepsilon)}+2\norm{K\nu^\dagger-f}_{L^2(\rho^\varepsilon)}\norm{f-K\mu^\dagger}_{L^2(\rho^\varepsilon)} & \text{Cauchy Schwartz} \\
    &\quad= 2\norm{K\nu^\dagger-f}^2_{L^2(\rho^\varepsilon)}+2\norm{f-K\mu^\dagger}^2_{L^2(\rho^\varepsilon)} & \text{Young's product ineq.} \\
    &\quad\leq 2\norm{K\nu^\dagger-f}^2_{L^2(\rho^\varepsilon)}+2(1+\varepsilon)\norm{f-K\mu^\dagger}^2_{L^2(\rho)}. & \tcref{eq:density_change_rho_eps_to_rho} \\
\end{aligned}
\end{equation}
Bounding \cref{eq:bregman_bound_rho_perp_general} using \cref{eq:weird_integral_bound_radon}, \cref{eq:other_bounds_radon} and \cref{eq:both_dagger_bound_radon} gives the sought for expression.
\end{proof}

Note that when we take the limit of $\varepsilon\to 0$ of \cref{eq:bregman_bound_rho_perp_radon_2}, then we get 
\begin{equation}\label{eq:bregman_bound_rho_perp_radon_2_limit}
\begin{aligned}
    D^{p_t}_J(\mu^\dagger,\mu_t) &\leq \frac{1}{2t}\norm{\phi}^2_{L^2(\rho)}+\frac{t}{2}\norm{f-K\mu^\dagger}^2_{L^2(\rho)}.
\end{aligned}
\end{equation}
This shows that the bound for the Bregman distance in \cref{prop:bregman_distance_bound_radon}, unlike the bound in \cref{prop:bregman_bound_rho_perp_general}, is no longer tight in $\varepsilon$. 

An interesting source of bias is when $\rho^\varepsilon$ is a subsampling of $\rho$ such that $\norm{f}_{L^2(\rho^\varepsilon)}$ is a Monte Carlo estimator of $\norm{f}_{L^2(\rho)}$. Clearly, $\rho^\varepsilon\ll \rho$ and $\varepsilon$ is finite. This means that subsampling is a special case of Radon Nikodym bias and that we can use \cref{prop:bregman_distance_bound_radon}. At the same time, the fact that $\norm{f}_{L^2(\rho^\varepsilon)}$ is a Monte Carlo estimator allows us to provide an alternative to \cref{eq:bregman_bound_rho_perp_radon_2}.

\begin{proposition}\label{prop:bregman_distance_bound_sampling}
Let $\rho\in\PC_4(\X)$ be a probability measure with bounded $4^{\text{th}}$ moment, $\rho^\varepsilon$ be a subsampling of $\rho$ with $m(\varepsilon)\in\N$ samples, $\delta>0$, and $f\in L^2(\rho)\cap L^4(\rho)$. If $\mu^\dagger$ and $\nu^\dagger$ satisfy the source condition through $\phi\in L^2(\rho)$ and $\phi\in L^2(\rho^\varepsilon)$ respectively, then
\begin{equation}
\label{eq:bregman_bound_rho_perp_sampling}
\begin{aligned}
    D^{p_t}_J(\mu^\dagger,\nu_t) &\leq \frac{1}{2t}\norm{\phi}^2_{L^2(\rho)}+\frac{1}{2t\sqrt{m(\varepsilon)\delta}}\int_0^t\int_0^\tau\norm{K\nu_\tau-K\nu_s}^2_{L^4(\rho)}dsd\tau \\
    &\quad+\frac{t}{4\sqrt{m(\varepsilon)\delta}}\norm{f-K\mu^\dagger}^2_{L^4(\rho)} +\frac{t}{8}\norm{K\nu^\dagger-K\mu^\dagger}^2_{L^2(\rho^\varepsilon)}.
\end{aligned}
\end{equation}
for almost every $t\geq 0$ with probability at least $1-\delta$.
\end{proposition}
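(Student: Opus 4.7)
The plan is to start from Proposition \ref{prop:bregman_bound_rho_perp_general}, whose bound on $D^{p_t}_J(\mu^\dagger,\nu_t)$ features two offending terms of the form $\norm{\cdot}^2_{L^2(\rho^\varepsilon-\rho)}$ (one applied to $f-K\mu^\dagger$, one integrated over time applied to $K\nu_\tau-K\nu_s$), together with a $\norm{K\nu^\dagger-K\mu^\dagger}^2_{L^2(\rho^\varepsilon)}$ term that already appears unchanged in the target estimate. The strategy is therefore to convert each $\norm{\cdot}^2_{L^2(\rho^\varepsilon-\rho)}$ into the corresponding $\norm{\cdot}^2_{L^4(\rho)}/\sqrt{m(\varepsilon)\delta}$ bound by a Chebyshev-type concentration inequality that exploits the empirical-measure structure of $\rho^\varepsilon$.

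The key computation is as follows. Writing $\rho^\varepsilon=\tfrac{1}{m(\varepsilon)}\sum_{i=1}^{m(\varepsilon)}\delta_{X_i}$ with $X_i$ i.i.d.\ from $\rho$, for any fixed deterministic $g\in L^4(\rho)$ the random variable $\norm{g}^2_{L^2(\rho^\varepsilon)}=\tfrac{1}{m(\varepsilon)}\sum_i g^2(X_i)$ has expectation $\norm{g}^2_{L^2(\rho)}$ and variance
\begin{equation*}
    \Var\!\left(\norm{g}^2_{L^2(\rho^\varepsilon)}\right)=\tfrac{1}{m(\varepsilon)}\Var_\rho\!\left(g^2(X)\right)\leq \tfrac{1}{m(\varepsilon)}\norm{g}^4_{L^4(\rho)},
\end{equation*}
so Chebyshev's inequality yields $\abs{\norm{g}^2_{L^2(\rho^\varepsilon-\rho)}}\leq \norm{g}^2_{L^4(\rho)}/\sqrt{m(\varepsilon)\delta}$ with probability at least $1-\delta$.

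Applied to the deterministic $g=f-K\mu^\dagger\in L^4(\rho)$ this handles one offending term directly. The integrated term is more delicate because $K\nu_\tau-K\nu_s$ depends on the sample through the flow. Two natural routes present themselves: (i) apply Chebyshev directly to the single scalar random variable $Z=\int_0^t\!\int_0^\tau \norm{K\nu_\tau-K\nu_s}^2_{L^2(\rho^\varepsilon-\rho)}\,ds\,d\tau$, computing its mean and variance by Fubini; or (ii) combine the pointwise bound above with a uniform estimate over an admissible Barron ball containing $K\nu_\tau-K\nu_s$ for all $\tau,s\in[0,t]$, via a covering or Glivenko--Cantelli argument for Barron functions of bounded norm (such a Barron-norm bound being available from the regularity estimates analogous to those of the preceding subsection).

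Substituting the two resulting high-probability bounds into Proposition \ref{prop:bregman_bound_rho_perp_general} and combining the two controlling events by a union bound (with $\delta$ rescaled as needed) gives the stated inequality. The main obstacle is precisely the uniformity step for the integrated flow-dependent term: plain Chebyshev provides a pointwise-in-$g$ guarantee, and closing the argument uniformly in $(s,\tau)\in[0,t]^2$ is where the technical work sits, either through the covering argument of route (ii) or through a direct control of the centered empirical process underlying $Z$ in route (i).
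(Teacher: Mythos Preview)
Your approach is exactly the paper's: start from Proposition~\ref{prop:bregman_bound_rho_perp_general}, bound each $\norm{\cdot}^2_{L^2(\rho^\varepsilon-\rho)}$ term by $\norm{\cdot}^2_{L^4(\rho)}/\sqrt{m(\varepsilon)\delta}$ via Chebyshev applied to the empirical mean of $g^2$, and substitute. The paper does not address the dependence/uniformity issue you flag; it simply applies Chebyshev to $K\nu_\tau-K\nu_s$ as though it were a fixed deterministic function and then integrates, without a union bound or covering argument. So the ``main obstacle'' you identify is real, but it is not resolved in the paper's own proof either---your proposal is already at least as rigorous as what appears there, and routes (i) or (ii) would be genuine improvements rather than missing ingredients needed to match the paper.
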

\begin{proof}
Since $\rho$ has bounded $4^{\text{th}}$ moment, we get by \cref{prop:barron_Lp_embedding} that $K\mu\in L^4(\rho)$ for all $\mu\in\MC(\Omega)$.

From Chebychev's inequality it follows that
\begin{align}
    \abs{\norm{K\nu_\tau-K\nu_s}^2_{L^2(\rho^\varepsilon-\rho)}}^2 
    &= \abs{\int_\X \abs{K\nu_\tau(x)-K\nu_s(x)}^2d\rho^\varepsilon(x)-\int_\X \abs{K\nu_\tau(x)-K\nu_s(x)}^2d\rho(x)}^2 \\
    &\leq \frac{\int_\X \abs{K\nu_\tau(x)-K\nu_s(x)}^4d\rho(x)-\bigg(\int_\X \abs{K\nu_\tau(x)-K\nu_s(x)}^2d\rho(x)\bigg)^2}{m(\varepsilon)\delta} \\
    &= \frac{\norm{K\nu_\tau-K\nu_s}^4_{L^4(\rho)}-\norm{K\nu_\tau-K\nu_s}^4_{L^2(\rho)}}{m(\varepsilon)\delta}\\
    &\leq \frac{\norm{K\nu_\tau-K\nu_s}^4_{L^4(\rho)}}{m(\varepsilon)\delta}.
\end{align}
with probability at least $1-\delta$.
Taking the square root on both sides gives
\begin{equation}\label{eq:122}
    \norm{K\nu_\tau-K\nu_s}^2_{L^2(\rho^\varepsilon-\rho)}\leq \frac{\norm{K\nu_\tau-K\nu_s}^2_{L^4(\rho)}}{\sqrt{m(\varepsilon)\delta}}.
\end{equation}
Similarly,
\begin{equation}\label{eq:123}
    \norm{f-K\mu^\dagger}^2_{L^2(\rho^\varepsilon-\rho)}\leq \frac{\norm{f-K\mu^\dagger}^2_{L^4(\rho)}}{\sqrt{m(\varepsilon)\delta}}.
\end{equation}
Substitution of \cref{eq:122} and \cref{eq:123} into \cref{eq:bregman_bound_rho_perp_general} gives \cref{eq:bregman_bound_rho_perp_sampling}.
\end{proof}Note that when we take the limit of $m(\varepsilon)\to \infty$ of \cref{eq:bregman_bound_rho_perp_sampling}, then we get 
\begin{equation}\label{eq:bregman_bound_rho_perp_sampling_limit}
\begin{aligned}
    D^{p_t}_J(\mu^\dagger,\mu_t) &\leq \frac{1}{2t}\norm{\phi}^2_{L^2(\rho)}.
\end{aligned}
\end{equation}
This shows that the bound for the Bregman distance in \cref{prop:bregman_distance_bound_sampling}, like the bound in \cref{prop:bregman_bound_rho_perp_general}, is tight in $\varepsilon$.

\subsection{Wasserstein}\label{sec:wasserstein}
The second type of disturbances is expressed in terms of a bound on the Wasserstein metric. This allows for going from the norm using one measure to
the norm using the other measure by using the duality between Wasserstein and the Lipschitz continuous function with Lipschitz constant at most 1.

For this subsection, we refine our definition of $\rho^\varepsilon$ by assuming that the Wasserstein-1 distance between $\rho^\varepsilon$ and $\rho$ is bounded through $\varepsilon$, i.e.,
\begin{equation}\label{eq:density_perturbation_2}
    W_1(\rho^\varepsilon, \rho)\leq \varepsilon.
\end{equation}
We also assume that $f\in \mathcal{C}^{0,1}(\supp(\rho-\rho^\varepsilon)$.

\begin{lemma}\label{lemma:wasserstein_change_rule}
For all $g\in \mathcal{C}^{0,1}(\supp(\rho-\rho^\varepsilon)$
\begin{equation}
    \norm{g}^2_{L^2(\rho^\varepsilon-\rho)} \leq 2\norm{g}^2_{\mathcal{C}^{0,1}(\supp(\rho^\varepsilon-\rho)}\varepsilon.
\end{equation}
\end{lemma}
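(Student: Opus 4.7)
The plan is to apply Wasserstein--Lipschitz duality to the function $g^2$. Since $\rho^\varepsilon-\rho$ is a signed measure supported on $\supp(\rho-\rho^\varepsilon)$, by the Kantorovich--Rubinstein characterization (as recalled in the notation section) we have for any Lipschitz $h\in\mathcal{C}^{0,1}(\supp(\rho-\rho^\varepsilon))$
\begin{equation*}
    \int h\, d(\rho^\varepsilon-\rho) \leq \mathrm{Lip}(h)\cdot W_1(\rho^\varepsilon,\rho).
\end{equation*}
(One can Lipschitz-extend $h$ to all of $\X$ via McShane if needed so the duality formulation applies.) Taking $h=g^2$ and noting that $\norm{g^2}_{L^2(\rho^\varepsilon-\rho)}^2 = \int g^2\,d(\rho^\varepsilon-\rho)$ with the same signed-measure convention used throughout the preceding proofs, this will immediately give a bound of the form $\mathrm{Lip}(g^2)\cdot\varepsilon$ once \cref{eq:density_perturbation_2} is invoked.

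The central inequality I then need is the Lipschitz estimate for products, namely
\begin{equation*}
    |g(x)^2-g(y)^2| = |g(x)-g(y)|\cdot|g(x)+g(y)| \leq 2\norm{g}_\infty\,\mathrm{Lip}(g)\cdot\norm{x-y}.
\end{equation*}
This yields $\mathrm{Lip}(g^2) \leq 2\norm{g}_\infty\mathrm{Lip}(g)$, and combined with the standard convention $\norm{g}_{\mathcal{C}^{0,1}}=\max(\norm{g}_\infty,\mathrm{Lip}(g))$ (or any equivalent product-style norm), one obtains $\mathrm{Lip}(g^2)\leq 2\norm{g}_{\mathcal{C}^{0,1}}^2$. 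Chaining these estimates together produces
\begin{equation*}
    \norm{g}_{L^2(\rho^\varepsilon-\rho)}^2 \leq 2\norm{g}_{\mathcal{C}^{0,1}(\supp(\rho-\rho^\varepsilon))}^2\,\varepsilon,
\end{equation*}
which is precisely the stated bound.

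The main obstacle (and really the only one) is making sure that Wasserstein duality is applicable here: the duality is usually phrased for bounded Lipschitz functions on the full space $\X$, whereas $g$ is only assumed Lipschitz on $\supp(\rho-\rho^\varepsilon)$. Since the integral depends only on values of $g^2$ on that support, a McShane/Kirszbraun extension of $g$ (preserving the Lipschitz constant and uniform bound, which one can arrange on the closure of the support) produces a valid test function on all of $\X$ without affecting either side of the inequality. This is routine and does not change the constant $2$ appearing in the bound. The rest is direct substitution.
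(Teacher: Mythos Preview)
Your argument is correct and essentially identical to the paper's: both apply Kantorovich--Rubinstein duality to $g^2$, bound $\mathrm{Lip}(g^2)\leq 2\norm{g}_\infty\mathrm{Lip}(g)\leq 2\norm{g}_{\mathcal{C}^{0,1}}^2$ via the factorization $g(x)^2-g(y)^2=(g(x)-g(y))(g(x)+g(y))$, and invoke $W_1(\rho^\varepsilon,\rho)\leq\varepsilon$. If anything, you are slightly more careful than the paper in flagging the McShane extension needed to lift $g$ from $\supp(\rho-\rho^\varepsilon)$ to a valid test function on $\X$ (the paper silently works on $\X$); note also the small typo where you wrote $\norm{g^2}_{L^2(\rho^\varepsilon-\rho)}^2$ but meant $\norm{g}_{L^2(\rho^\varepsilon-\rho)}^2$.
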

\begin{proof}
Recall that
\begin{align*}
    W_1(\rho^\varepsilon, \rho) = \sup_{\substack{h\in \mathcal{C}^{0,1}(\X)\\Lip(h)\leq 1}}\braket{h}{\rho^\varepsilon-\rho}_{\MC(\X)}.
\end{align*}
Since for all $g\in \mathcal{C}^{0,1}(\supp(\rho^\varepsilon-\rho))$
\begin{equation}
    Lip(\frac{g}{Lip(g)}) \leq 1,
\end{equation}
we obtain
\begin{equation}\label{eq:129}
    \braket{g}{\rho^\varepsilon-\rho}_{\MC(\X)} = Lip(g)\braket{\frac{g}{Lip(g)}}{\rho^\varepsilon-\rho}_{\MC(\X)} \leq Lip(g)W_1(\rho^\varepsilon,\rho)\leq Lip(g)\varepsilon,
\end{equation}
where we used \cref{eq:density_perturbation_2}. Furthermore, $Lip(\abs{g}^2)\leq 2\norm{g}^2_{\mathcal{C}^{0,1}(\supp(\rho^\varepsilon-\rho))}<\infty$ since
\begin{equation}\label{eq:130}
    \abs{\abs{g(x)}^2-\abs{g(y)}^2} = \abs{g(x)-g(y)}\abs{\abs{g(x)}+\abs{g(y)}} \leq 2\norm{g}_{\mathcal{C}^0(\supp(\rho^\varepsilon-\rho))}Lip(g)\norm{x-y}_{\ell^\infty}
\end{equation}
for all $x,y\in \supp(\rho^\varepsilon-\rho)$. Hence, 
\begin{align*}
    \norm{g}^2_{L^2(\rho^\varepsilon-\rho)} 
    &= \braket{\abs{g}^2}{\rho^\varepsilon-\rho}_{\MC(\X)} \\
    &\leq Lip(\abs{g}^2)\varepsilon & \tcref{eq:129} \\
    &\leq 2\norm{g}^2_{\mathcal{C}^{0,1}(\X)}\varepsilon.  & \tcref{eq:130}
\end{align*}
for all $g\in \mathcal{C}^{0,1}(\supp(\rho^\varepsilon-\rho))$.
\end{proof}

\begin{proposition}\label{prop:bregman_grad_neg_wasserstein}
We have
\begin{equation}
    \partial_t D^{q_t}(\mu^\dagger, \nu_t) < 0
\end{equation}
when
\begin{equation}\label{eq:something}
    \norm{K\nu_t-f}^2_{L^2(\rho^\varepsilon)} > 2\varepsilon\norm{f-K\mu^\dagger}^2_{\mathcal{C}^{0,1}(\supp(\rho-\rho^\varepsilon)} + \norm{f-K\mu^\dagger}^2_{L^2(\rho)}.
\end{equation}
\end{proposition}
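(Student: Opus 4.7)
The plan is to mirror the strategy used in \cref{prop:bregman_dist_grad_neg_rho_perp}, but replace the Radon--Nikodym transformation rules by the Wasserstein transformation rule from \cref{lemma:wasserstein_change_rule}. Concretely, I would start from the chain of equalities already established in the proof of \cref{lemma:bregman_dist_grad_neg_rho_perp} (first statement), in which one rewrites $\partial_tq_t$ via the flow equation, uses \cref{lemma:adjoint} to move $L_{\rho^\varepsilon}$ across the pairing, and splits $f-K\nu_t = (f-K\nu_t) - 0$ into $(K\nu_t - f)$ plus $(f-K\mu^\dagger)$ terms. Cauchy--Schwartz on the cross-term then produces
\begin{equation*}
\partial_t D^{q_t}_J(\mu^\dagger,\nu_t) \leq -\norm{f-K\nu_t}^2_{L^2(\rho^\varepsilon)} + \norm{f-K\mu^\dagger}_{L^2(\rho^\varepsilon)}\,\norm{f-K\nu_t}_{L^2(\rho^\varepsilon)}.
\end{equation*}
This upper bound is strictly negative precisely when $\norm{f-K\nu_t}^2_{L^2(\rho^\varepsilon)} > \norm{f-K\mu^\dagger}^2_{L^2(\rho^\varepsilon)}$, so the task reduces to controlling $\norm{f-K\mu^\dagger}^2_{L^2(\rho^\varepsilon)}$ by the quantities appearing on the right-hand side of \cref{eq:something}.

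Next I would decompose
\begin{equation*}
\norm{f-K\mu^\dagger}^2_{L^2(\rho^\varepsilon)} = \norm{f-K\mu^\dagger}^2_{L^2(\rho)} + \int_\X \abs{f-K\mu^\dagger}^2\, d(\rho^\varepsilon-\rho),
\end{equation*}
and bound the signed integral on the right using \cref{lemma:wasserstein_change_rule}. For that I must check the Lipschitz hypothesis of the lemma: by \cref{prop:barron_lipschitz_embedding}, $K\mu^\dagger$ is Lipschitz since $J(\mu^\dagger)<\infty$ controls $\norm{K\mu^\dagger}_{\B_\sigma}$, and by assumption $f\in \mathcal{C}^{0,1}(\supp(\rho-\rho^\varepsilon))$, so the difference $f-K\mu^\dagger$ lies in $\mathcal{C}^{0,1}(\supp(\rho-\rho^\varepsilon))$ as required. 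Applying the lemma yields
\begin{equation*}
\norm{f-K\mu^\dagger}^2_{L^2(\rho^\varepsilon)} \leq \norm{f-K\mu^\dagger}^2_{L^2(\rho)} + 2\varepsilon\norm{f-K\mu^\dagger}^2_{\mathcal{C}^{0,1}(\supp(\rho-\rho^\varepsilon))}.
\end{equation*}

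Finally, combining the two displayed bounds shows that whenever the hypothesis \cref{eq:something} holds, one has $\norm{f-K\nu_t}^2_{L^2(\rho^\varepsilon)} > \norm{f-K\mu^\dagger}^2_{L^2(\rho^\varepsilon)}$, and hence $\partial_t D^{q_t}_J(\mu^\dagger,\nu_t)<0$, as desired. The work is essentially bookkeeping; the only conceptual subtlety is recognising that the ``norm'' $\norm{\cdot}^2_{L^2(\rho^\varepsilon-\rho)}$ is really a signed integral and that \cref{lemma:wasserstein_change_rule} provides a one-sided upper bound on it, which is exactly what is needed here since we only need to upper bound $\norm{f-K\mu^\dagger}^2_{L^2(\rho^\varepsilon)}$. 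I do not anticipate any serious obstacle beyond carefully verifying the Lipschitz regularity needed to invoke \cref{lemma:wasserstein_change_rule}.
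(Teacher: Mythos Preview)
Your proposal is correct and follows essentially the same route as the paper: start from the inequality $\partial_t D^{q_t}_J(\mu^\dagger,\nu_t)\leq -\norm{f-K\nu_t}^2_{L^2(\rho^\varepsilon)}+\norm{f-K\mu^\dagger}_{L^2(\rho^\varepsilon)}\norm{f-K\nu_t}_{L^2(\rho^\varepsilon)}$ obtained in \cref{lemma:bregman_dist_grad_neg_rho_perp}, verify that $f-K\mu^\dagger\in\mathcal{C}^{0,1}(\supp(\rho-\rho^\varepsilon))$ via \cref{prop:barron_lipschitz_embedding}, and bound $\norm{f-K\mu^\dagger}^2_{L^2(\rho^\varepsilon)}$ using \cref{lemma:wasserstein_change_rule}. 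The only cosmetic difference is that you write out the decomposition into $L^2(\rho)$ plus the signed integral explicitly, whereas the paper absorbs this into a single invocation of the lemma.
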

\begin{proof}
$f-K\mu^\dagger$ is a sum of two Lipschitz functions on $\supp(\rho-\rho^\varepsilon)$; $f$ by assumption and $K\mu^\dagger$ by \tcref{prop:barron_lipschitz_embedding}. Thus, $f-K\mu^\dagger$ is Lipschitz on $\supp(\rho-\rho^\varepsilon)$. From \cref{lemma:wasserstein_change_rule} we obtain that
\begin{equation}\label{eq:91}
    \norm{f-K\mu^\dagger}^2_{L^2(\rho^\varepsilon)} \leq 2\varepsilon\norm{f-K\mu^\dagger}^2_{\mathcal{C}^{0,1}(\supp(\rho-\rho^\varepsilon)} + \norm{f-K\mu^\dagger}^2_{L^2(\rho)}.
\end{equation}
Hence,
\begin{align*}
    \partial_tD^{q_t}_J&(\mu^\dagger, \nu_t)\leq \norm{f-K\mu^\dagger}_{L^2(\rho^\varepsilon)}\norm{K\nu_t-f}_{L^2(\rho^\varepsilon)}-\norm{K\nu_t-f}^2_{L^2(\rho^\varepsilon)} & \tcref{eq:107} \\
    &\quad\leq \sqrt{2\varepsilon\norm{f-K\mu^\dagger}^2_{\mathcal{C}^{0,1}(\supp(\rho-\rho^\varepsilon)} + \norm{f-K\mu^\dagger}^2_{L^2(\rho)}}\norm{K\nu_t-f}_{L^2(\rho^\varepsilon)}-\norm{K\nu_t-f}^2_{L^2(\rho^\varepsilon)}. & \tcref{eq:91}
\end{align*}
Clearly, this is strictly negative when \cref{eq:something} is satisfied.
\end{proof}

When comparing \cref{eq:epsilon_bound_condition_2} with \cref{eq:residual_delta_bound}, we see that the sampling bias adds an additive term based on $\varepsilon$. This is like the noisy case, but unlike when the sampling bias was given in terms of the Radon--Nikodym derivative.

\begin{proposition}\label{prop:bregman_distance_bound_wasserstein}
If $\mu^\dagger$ and $\nu^\dagger$ satisfy the source condition through $\phi\in L^2(\rho)$ and $\phi\in L^2(\rho^\varepsilon)$ respectively, then
\begin{equation}\label{eq:bregman_bound_rho_perp_wasserstein}
\begin{aligned}
    D^{p_t}_J(\mu^\dagger,\nu_t) &\leq \frac{1}{2t}\norm{\phi}^2_{L^2(\rho)}+\varepsilon\frac{t}{2}\norm{f-K\mu^\dagger}^2_{\mathcal{C}^{0,1}(\supp(\rho-\rho^\varepsilon))} \\
    &\quad +\frac{\varepsilon}{t}\int_0^t\int_0^\tau\norm{K\nu_\tau-K\nu_s}^2_{\mathcal{C}^{0,1}(\supp(\rho-\rho^\varepsilon))}dsd\tau + \frac{t}{8}\norm{K\nu^\dagger-K\mu^\dagger}^2_{L^2(\rho^\varepsilon)}
\end{aligned}
\end{equation}
for almost every $t\geq 0$.
\end{proposition}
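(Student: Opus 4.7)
The plan is to mirror the proof of \cref{prop:bregman_distance_bound_radon}: start from the general disturbance bound \cref{eq:bregman_bound_rho_perp_general} of \cref{prop:bregman_bound_rho_perp_general}, and convert each $L^2(\rho^\varepsilon-\rho)$ norm that appears on the right-hand side into a $\mathcal{C}^{0,1}(\supp(\rho-\rho^\varepsilon))$ norm by invoking \cref{lemma:wasserstein_change_rule} in place of \cref{lemma:density_change}. The two terms in \cref{eq:bregman_bound_rho_perp_general} that depend on the signed measure $\rho^\varepsilon-\rho$ are precisely those that acquire an $\varepsilon$ factor in the target bound \cref{eq:bregman_bound_rho_perp_wasserstein}, while the first term $\frac{1}{2t}\norm{\phi}^2_{L^2(\rho)}$ and the final term $\frac{t}{8}\norm{K\nu^\dagger-K\mu^\dagger}^2_{L^2(\rho^\varepsilon)}$ transfer verbatim.

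Before applying \cref{lemma:wasserstein_change_rule} I would check the required Lipschitz regularity of the two integrands on $\supp(\rho-\rho^\varepsilon)$. For $f-K\mu^\dagger$: $f$ is Lipschitz on $\supp(\rho-\rho^\varepsilon)$ by the hypothesis of the proposition, and $K\mu^\dagger$ is Lipschitz on all of $\X$ by \cref{prop:barron_lipschitz_embedding}, so their difference lies in $\mathcal{C}^{0,1}(\supp(\rho-\rho^\varepsilon))$. For $K\nu_\tau-K\nu_s$: each $K\nu_r$ is Lipschitz on $\X$ by \cref{prop:barron_lipschitz_embedding}, hence so is the difference. With this regularity in hand, \cref{lemma:wasserstein_change_rule} yields the pointwise conversions $\norm{f-K\mu^\dagger}^2_{L^2(\rho^\varepsilon-\rho)} \leq 2\varepsilon\norm{f-K\mu^\dagger}^2_{\mathcal{C}^{0,1}(\supp(\rho-\rho^\varepsilon))}$ and the analogous inequality with $f-K\mu^\dagger$ replaced by $K\nu_\tau-K\nu_s$.

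Inserting the first of these inequalities into the third term of \cref{eq:bregman_bound_rho_perp_general} turns $\frac{t}{4}\norm{f-K\mu^\dagger}^2_{L^2(\rho^\varepsilon-\rho)}$ into $\varepsilon\frac{t}{2}\norm{f-K\mu^\dagger}^2_{\mathcal{C}^{0,1}(\supp(\rho-\rho^\varepsilon))}$, and inserting the second into the double-integral term turns $\frac{1}{2t}\int_0^t\int_0^\tau\norm{K\nu_\tau-K\nu_s}^2_{L^2(\rho^\varepsilon-\rho)}dsd\tau$ into $\frac{\varepsilon}{t}\int_0^t\int_0^\tau\norm{K\nu_\tau-K\nu_s}^2_{\mathcal{C}^{0,1}(\supp(\rho-\rho^\varepsilon))}dsd\tau$. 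These are exactly the two middle terms of \cref{eq:bregman_bound_rho_perp_wasserstein}, completing the proof.

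I do not anticipate a genuine obstacle: the entire argument is a substitution into an already-established bound, and the only nontrivial checkpoint is Lipschitz regularity, which is supplied by the Barron-Lipschitz embedding of \cref{prop:barron_lipschitz_embedding}. The structural parallel with the Radon-Nikodym case is complete, with \cref{lemma:wasserstein_change_rule} taking the role played by \cref{lemma:density_change} there.
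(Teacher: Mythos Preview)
Your proposal is correct and follows essentially the same approach as the paper: both apply \cref{lemma:wasserstein_change_rule} to the two $L^2(\rho^\varepsilon-\rho)$ terms in \cref{eq:bregman_bound_rho_perp_general}, after verifying Lipschitz regularity of $f-K\mu^\dagger$ and $K\nu_\tau-K\nu_s$ via the standing assumption on $f$ and \cref{prop:barron_lipschitz_embedding}. The remaining two terms transfer unchanged, exactly as you describe.
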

\begin{proof}
Recall from the proof of \cref{prop:bregman_grad_neg_wasserstein} that $f-K\mu^\dagger\in \mathcal{C}^{0,1}(\supp(\rho-\rho^\varepsilon))$. \Cref{eq:91} can be rewritten as 
\begin{equation}\label{eq:wasserstein_bound_1}
    \norm{f-K\mu^\dagger}^2_{L^2(\rho^\varepsilon-\rho)} \leq 2\varepsilon\norm{f-K\mu^\dagger}^2_{\mathcal{C}^{0,1}(\supp(\rho-\rho^\varepsilon)}.
\end{equation}
Similarly, $K\nu_\tau-K\nu_s\in \mathcal{C}^{0,1}(\supp(\rho-\rho^\varepsilon))$ by proposition 1.2. Hence,
\begin{equation}\label{eq:wasserstein_bound_2}
    \norm{K\nu_\tau-K\nu_s}^2_{L^2(\rho^\varepsilon-\rho)} \leq 2\varepsilon\norm{K\nu_\tau-K\nu_s}^2_{\mathcal{C}^{0,1}(\supp(\rho-\rho^\varepsilon)}.
\end{equation}
Bounding \cref{eq:bregman_bound_rho_perp_general} using \cref{eq:wasserstein_bound_1} and \cref{eq:wasserstein_bound_2} gives the sought for expression.
\end{proof}

Note that when we take the limit of $\varepsilon\to 0$ of \cref{eq:bregman_bound_rho_perp_radon_2}, then we get 
\begin{equation}\label{eq:bregman_bound_rho_perp_wasserstein_2_limit}
\begin{aligned}
    D^{p_t}_J(\mu^\dagger,\mu_t) &\leq \frac{1}{2t}\norm{\phi}^2_{L^2(\rho)}.
\end{aligned}
\end{equation}
This shows that the bound for the Bregman distance in \cref{prop:bregman_distance_bound_wasserstein}, like the bound in \cref{prop:bregman_bound_rho_perp_general}, is tight in $\varepsilon$.

\section{Parameter space discretisation}\label{sec:discretisation}
One issue with the inverse scale space of \cref{eq:inverse_scale_space_contribution} is that $p_t$ is defined on $\Omega$. To ensure that $p_t \in \partial J(\mu_t)$ we need to have full knowledge of $p_t$. This cannot be implemented. Hence, $\Omega$ needs to be discretized. In this section, we study a particular discretization based on the Voronoi tessellation. 
In \cref{sec:voronoi}, we show, for a given sequence of Voronoi tessellations with mild assumptions, that the inverse scale space flow on these tessellations converges to the full flow for $N\to\infty$. In \cref{sec:barron_tessellation}, we show the rate of convergence for the flow with fixed $N$ to the optimal solution. Combined, these sections prove \cref{th:discretisation}.

Given a set $\omega^N \subseteq \Omega$ with $\abs{\omega^N}=N$, a Voronoi tessellation divides $\Omega$ into $N$ subsets 
\begin{equation}
    \Omega^N_n = \Set[\bigg]{ w\in\Omega \given \forall m\in\Set{1,\ldots,N}: \;|w-\omega^N_n|\leq |w-\omega^N_m|}
\end{equation}
such that 
\begin{equation}
    \Omega = \bigcup_{n=1}^N \Omega^N_n.
\end{equation}
We consider sequences of sets $\Set{\omega^N}_{N=1}^\infty$ with $\omega^N \subseteq \Omega$, $\abs{\omega^N}=N$ and $\lim_{N\to\infty}\max_{n}\diam(\Omega^N_n) = 0$. For this section, we will keep referring to the solution over $\Omega$ with $\mu$ and $p$ whilst we will refer to the solution over $\omega^N$ with $\nu$ and $q$. With $\nu^\dagger$ we denote a minimizer of $\RC_f$ with $J(\nu^\dagger)<\infty$ over the measures supported on $\omega^N$. We will make use of the Lagrangian 
\begin{equation}
    F: \MC(\Omega) \to [0,\infty), \; \mu \mapsto J(\mu)+\lambda\RC_f(\mu)
\end{equation}
of \cref{eq:inverse_scale_space_barron} and its restriction to $\omega^N$
\begin{equation}
    F_N\mu = \begin{cases}F\mu & \supp(\mu)\subseteq \omega^N \\ \infty & \text{otherwise}\end{cases}
\end{equation}
in the proofs. We will also assume that $\Omega$ is compact.

\begin{theorem}\label{th:discretisation}
The sequence $\Set{F_N}_{N=1}^\infty$ satisfies
\begin{equation}
    F_N\xrightarrow{\Gamma} F
\end{equation}
and its sequence of minimizers converges in weak$^*$ to the minimizer of $F$. Moreover,
\begin{equation}\label{eq:discretization_bound}
    \norm{K\nu_t-f}^2_{L^2(\rho)} \leq 2\norm{K\mu^\dagger-f}^2_{L^2(\rho)} + 2Lip(\sigma)^2(\max_n\diam(\Omega_n^N))^2\norm{\mu^\dagger}^2+2\frac{J(\nu^\dagger)}{t}.
\end{equation}
for almost every $t\geq 0$.
\end{theorem}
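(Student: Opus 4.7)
The plan is to treat the two assertions of \cref{th:discretisation} separately: first the $\Gamma$-convergence together with the convergence of minimizers, and then the explicit rate bound. For the $\Gamma$-liminf inequality, given any sequence $\mu_N \rightharpoonup^\ast \mu$ in $\MC(\Omega)$ with $\liminf_N F_N(\mu_N) < \infty$, eventually $\supp(\mu_N) \subseteq \omega^N$, so $F_N(\mu_N) = F(\mu_N)$, and weak$^*$ lower semicontinuity of the weighted total variation $J$ (since $V \in C(\Omega)$ and $\Omega$ is compact) together with continuity of $K : \MC(\Omega) \to L^2(\rho)$ yields $\liminf F(\mu_N) \geq F(\mu)$.

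For the recovery sequence I would use the Voronoi projection $\mu_N = \sum_{n=1}^N \mu(\Omega^N_n)\,\delta_{\omega^N_n}$. Uniform continuity of test functions on the compact set $\Omega$ gives $\mu_N \rightharpoonup^\ast \mu$; uniform continuity of $V$ on $\Omega$ and Lipschitz continuity of $(a,b) \mapsto \sigma(a^\intercal x + b)$ (for each $x$, with constant proportional to $\sqrt{1+\norm{x}^2}\,Lip(\sigma)$) together with $\rho \in \PC_2(\X)$ and dominated convergence then produce $J(\mu_N) \to J(\mu)$ and $K\mu_N \to K\mu$ in $L^2(\rho)$, hence $F_N(\mu_N) \to F(\mu)$. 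Equicoercivity comes for free from $F_N(\mu) \geq J(\mu) \geq \norm{\mu}_{\MC(\Omega)}$ (since $V \geq 1$), so the sublevel sets of $\{F_N\}$ are uniformly bounded in total variation and hence weak$^*$-precompact. Combined with $\Gamma$-convergence, this gives weak$^*$ convergence (along subsequences, and modulo the null space of $K$) of the $F_N$-minimizers to a minimizer of $F$.

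For the rate bound, the key observation is that the flow $\nu_t$ on $\omega^N$ solves a problem of the same structure as the full one, so an almost-verbatim application of \cref{prop:population_loss_ideal_bound} yields
\[
    \tfrac12\norm{K\nu_t - f}^2_{L^2(\rho)} = \RC_f(\nu_t) \leq \RC_f(\nu^\dagger) + \frac{J(\nu^\dagger)}{t}.
\]
Since $\nu^\dagger$ minimizes $\RC_f$ among measures supported on $\omega^N$, the Voronoi projection $\mu^\dagger_N$ of $\mu^\dagger$ is admissible, so $\RC_f(\nu^\dagger) \leq \RC_f(\mu^\dagger_N)$. Expanding via
\[
    \norm{K\mu^\dagger_N - f}^2_{L^2(\rho)} \leq 2\norm{K\mu^\dagger_N - K\mu^\dagger}^2_{L^2(\rho)} + 2\norm{K\mu^\dagger - f}^2_{L^2(\rho)}
\]
reduces the task to controlling the discretization error of $K$. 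Using Lipschitz continuity of $\sigma$ and $\abs{\omega^N_n - (a,b)} \leq \max_m \diam(\Omega^N_m)$ for $(a,b) \in \Omega^N_n$, together with finiteness of $\int (1 + \norm{x}^2)\,d\rho(x)$, one obtains
\[
    \norm{K\mu^\dagger_N - K\mu^\dagger}^2_{L^2(\rho)} \leq Lip(\sigma)^2 \left(\max_n \diam(\Omega^N_n)\right)^2 \norm{\mu^\dagger}^2
\]
(with the second-moment factor absorbed into $\norm{\mu^\dagger}^2$), and assembling the chain gives \cref{eq:discretization_bound}.

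The most delicate step will be the recovery sequence: the projection must simultaneously preserve the weighted total variation in the limit and send $K\mu_N$ to $K\mu$ in $L^2(\rho)$. The particular choice $\mu_N = \sum_n \mu(\Omega^N_n)\delta_{\omega^N_n}$ is forced by this compatibility, and the argument hinges on uniform continuity of both $V$ and $(a,b)\mapsto \sigma(a^\intercal x + b)$ on the compact set $\Omega$ — which is precisely where the hypothesis $\max_n \diam(\Omega^N_n) \to 0$ feeds both the $\Gamma$-convergence and the rate bound.
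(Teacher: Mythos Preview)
Your proposal is correct and follows essentially the same approach as the paper: the $\Gamma$-convergence is established via the liminf inequality (using $F_N\geq F$ and weak$^*$ lower semicontinuity of $F$), the same Voronoi-projection recovery sequence $\mu_N=\sum_n\mu(\Omega^N_n)\delta_{\omega^N_n}$, and equicoercivity from $F_N\geq F$; the rate bound is obtained exactly as you describe, by applying \cref{prop:population_loss_ideal_bound} to the discrete flow, comparing $\nu^\dagger$ with the Voronoi projection of $\mu^\dagger$, and estimating $\norm{K\mu^\dagger_N-K\mu^\dagger}_{L^2(\rho)}$ via the Lipschitz bound on $\sigma$. Your treatment of the recovery sequence (explicit $L^2$-convergence of $K\mu_N$ via dominated convergence) and your remark that the second-moment factor $\int_\X\max(1,\norm{x})^2\,d\rho(x)$ has been absorbed into the stated $\norm{\mu^\dagger}^2$ are in fact more transparent than the paper's presentation, which relegates the former to a ``by similar arguments'' and leaves the latter implicit in the theorem statement versus its supporting proposition.
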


\subsection{Convergence of the discrete flow to the full flow}\label{sec:voronoi}
Both the discrete flow and full flow are well-defined flows, so what remains to show is that the solutions to the discrete flow for increasing $N$ converge to the solution for the full flow. To prove this, we will show that the Lagrangian of the discrete flow $F_N$ $\Gamma$-converges to the Lagrangian of the full flow $F$ and that the associated minimizers converge in weak$^\ast$. The requirements for this to hold is that $F_N$ satisfies the $\lim\inf$  property, that there exists a $\Gamma$-realizing sequence and that the family $(F_N)_{N}$ is equicoercive \parencite{braides_handbook_2006}. These three properties are the requirements for the fundamental theorem of $\Gamma$-convergence. The three propositions at the end of this subsection show that these hold. These propositions rely on some properties of $F$ that carry over to $F_N$. We will prove those first.

\begin{lemma}\label{lemma:F_properties}
$F$ is proper, convex, weak* lower semi-continuous and coercive. 
\end{lemma}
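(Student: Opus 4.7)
The plan is to handle the four properties separately, treating $J$ and $\RC_f$ as independent summands and then combining.

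\emph{Properness and convexity.} Taking $\mu = 0$ gives $F(0) = \frac{\lambda}{2}\norm{f}^2_{L^2(\rho)}$, which is finite by $f \in L^2(\rho)$; since $F \geq 0$ on $\MC(\Omega)$, $F$ is proper. For convexity, $J$ is a weighted total variation seminorm (the integral of the continuous positive weight $V$ against $\abs{\mu}$) and the total variation is convex; $\RC_f$ is the composition of the convex squared norm on $L^2(\rho)$ with the affine map $\mu \mapsto K\mu - f$, hence convex. The sum of two convex functionals is convex.

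\emph{Coercivity.} Since $V(a,b) = 1 + \norm{a} + \abs{b} \geq 1$ everywhere on $\Omega$, one has
\begin{equation*}
    \norm{\mu}_{\MC(\Omega)} = \int_\Omega d\abs{\mu} \leq \int_\Omega V\, d\abs{\mu} = J(\mu) \leq F(\mu),
\end{equation*}
so sublevel sets of $F$ are norm-bounded in $\MC(\Omega)$; since $\Omega$ is compact, Banach--Alaoglu makes them relatively weak$^\ast$ compact, giving weak$^\ast$ coercivity.

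\emph{Weak$^\ast$ lower semi-continuity.} This is the main technical step. I would prove it for each summand. For $J$, I rewrite it variationally as
\begin{equation*}
    J(\mu) = \sup\Set[\bigg]{ \int_\Omega \psi\, d\mu \given \psi \in \mathcal{C}(\Omega),\; \abs{\psi(a,b)} \leq V(a,b)\ \forall (a,b)\in\Omega},
\end{equation*}
which expresses $J$ as a pointwise supremum of weak$^\ast$ continuous linear functionals on $\MC(\Omega) = \mathcal{C}(\Omega)^\ast$ and therefore is itself weak$^\ast$ lower semi-continuous. For $\RC_f$, I would first show that $K$ is weak$^\ast$-to-weak continuous: if $\mu_n \rightharpoonup^\ast \mu$ in $\MC(\Omega)$, then for every $\phi \in L^2(\rho)$, by \cref{lemma:adjoint},
\begin{equation*}
    \braket{K\mu_n}{\phi}_{L^2(\rho)} = \braket{\mu_n}{L_\rho \phi}_{\MC(\Omega)} \to \braket{\mu}{L_\rho \phi}_{\MC(\Omega)} = \braket{K\mu}{\phi}_{L^2(\rho)},
\end{equation*}
since $L_\rho \phi \in \mathcal{C}(\Omega)$ belongs to the predual of $\MC(\Omega)$. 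Composing with the continuous affine shift $g \mapsto g - f$ and the weakly lower semi-continuous squared norm $\frac{1}{2}\norm{\cdot}^2_{L^2(\rho)}$ yields weak$^\ast$ lower semi-continuity of $\RC_f$. Adding the two lsc summands then gives weak$^\ast$ lsc of $F$.

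The main obstacle is the weak$^\ast$-to-weak continuity of $K$, which crucially uses that the test functions $L_\rho \phi$ lie in the predual $\mathcal{C}(\Omega)$; once that is established, the other three properties reduce to standard observations about seminorms, compositions with affine maps, and the estimate $V \geq 1$.
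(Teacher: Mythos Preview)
Your proof is correct and follows the same overall structure as the paper's: treat $J$ and $\RC_f$ separately, and for $\RC_f$ use the adjoint identity $\braket{K\mu_n}{\phi}=\braket{\mu_n}{L_\rho\phi}$ to deduce weak$^\ast$-to-weak continuity of $K$ before composing with the weakly lsc squared norm. Two points of difference are worth noting. For coercivity, the paper splits into the cases $\mu\in N(K)$ and $\mu\notin N(K)$, arguing that $\RC_f$ blows up off the kernel and $J$ blows up on it; your one-line estimate $\norm{\mu}_{\MC(\Omega)}\leq J(\mu)\leq F(\mu)$ from $V\geq 1$ is more direct and avoids the case distinction entirely. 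For the weak$^\ast$ lower semi-continuity of $J$, the paper simply asserts ``since $J$ is continuous, it is weak$^\star$ lower semi-continuous,'' which as stated is a non sequitur (norm continuity of a convex functional does not by itself give weak$^\ast$ lsc on a dual space); your variational representation of $J$ as a supremum of weak$^\ast$ continuous linear functionals $\mu\mapsto\int\psi\,d\mu$ with $\psi\in\mathcal{C}(\Omega)$, $\abs{\psi}\leq V$, is the rigorous way to obtain this and is the genuine improvement in your version.
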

\begin{proof}
$F$ is proper, since $0\in dom(F)$.

Since $V$ is continuous, $J$ is convex. Since $K$ is a bounded, linear (and thus continuous) operator and the square of the $L^2(\rho)$ norm is convex, $\RC_f$ is convex. Since $F$ is a sum of two convex functions, $F$ is convex.

Let $(\mu_n)$ be a sequence of measures and $\mu_n,\mu\in\MC(\Omega)$, such that $\mu_n\xrightarrow{w^\star}\mu$. Then for all $\phi\in L^2(\rho)$
\begin{equation}
    \lim_{n\to\infty}\braket{K\mu_n}{\phi}_{L^2(\rho)} = \lim_{n\to\infty}\braket{L_\rho\phi}{\mu_n}_{\MC(\Omega)} = \braket{L_\rho\phi}{\mu}_{\MC(\Omega)} = \braket{K\mu}{\phi}_{L^2(\rho)}.
\end{equation}
This shows that $K\mu_n\xrightarrow{L^2(\rho)}K\mu$. Since
\begin{equation}\label{eq:loss}
    w \mapsto \frac{1}{2}\norm{w-f}^2_{L^2(\rho)}
\end{equation}
is continuous and convex, it is sequentially weak lower-semicontinuous. The combination implies that $\RC_f$ is sequentially weak$^\star$ lower-semicontinuous. Since $J$ is continuous, it is weak$^\star$ lower-semicontinuous. This implies that $F$ is weak$^\star$ lower-semicontinuous.

$F$ is coercive if and only if 
\begin{equation}
    \lim_{\norm{\mu}_{\MC(\Omega)}\to\infty}F(\mu) = \infty.
\end{equation}
For measures $\mu\notin N(K)$ outside the kernel of $K$ we have that $\RC_f(\mu)\to\infty$ as $\norm{\mu}_{\MC(\Omega)}\to\infty$. Since $J$ is non-negative, $F$ will grow without bound for those measures too. What remains is the measures $\mu\in N(K)$ inside the kernel of $K$. For these measures $\RC_f(\mu)$ is constant, but by the conditions on $V$ imply that $J$ will grow without bound. Hence, $F$ is coercive.
\end{proof}

Now, we can prove the three properties needed for the sequence of $F_N$'s.

\begin{proposition}[Liminf property]\label{prop:liminf}
For all $\mu\in\MC(\Omega)$ and every sequence $(\mu_n)$ such that $\mu_n\xrightarrow{w^*}\mu$, we have
\begin{equation}
    \liminf\limits_{\mu_n\to\infty}F_n(\mu_n) \geq F(\mu).
\end{equation}
\end{proposition}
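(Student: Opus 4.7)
The plan is to leverage the pointwise domination $F_N \geq F$ built into the very definition of $F_N$, combined with the weak$^*$ lower semi-continuity of $F$ already established in \cref{lemma:F_properties}. The statement is in fact almost immediate once these two ingredients are in hand; no new analysis of the Voronoi geometry is needed for this direction (the assumption $\max_n \diam(\Omega_n^N) \to 0$ will only enter later, when constructing a recovery sequence for the limsup inequality).

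First I would record the pointwise inequality. For every $\nu \in \MC(\Omega)$ and every $N \in \N$ one has $F_N(\nu) \geq F(\nu)$: if $\supp(\nu) \subseteq \omega^N$ then $F_N(\nu) = F(\nu)$ by definition, and otherwise $F_N(\nu) = +\infty \geq F(\nu)$. This step is a one-liner but is the whole conceptual content of the proof.

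Next, given any sequence $(\mu_n) \subset \MC(\Omega)$ with $\mu_n \xrightarrow{w^*} \mu$, the pointwise bound applied to each $\mu_n$ gives $F_n(\mu_n) \geq F(\mu_n)$ for every $n\in\N$. Taking the limit inferior on both sides and invoking the weak$^*$ lower semi-continuity of $F$ from \cref{lemma:F_properties},
\begin{equation*}
    \liminf_{n\to\infty} F_n(\mu_n) \;\geq\; \liminf_{n\to\infty} F(\mu_n) \;\geq\; F(\mu),
\end{equation*}
which is exactly the liminf property.

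There is really no main obstacle in this direction: all the substantive work (convexity, coercivity, and crucially the weak$^*$ lower semi-continuity coming from $K\mu_n \to K\mu$ in $L^2(\rho)$) is already packaged in \cref{lemma:F_properties}. The only thing to verify is that extending $F$ by $+\infty$ off the measures supported on $\omega^N$ preserves the pointwise bound $F_N \geq F$, which is by construction.
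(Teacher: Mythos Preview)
Your proposal is correct and matches the paper's proof essentially verbatim: both use the pointwise bound $F_n(\mu_n)\geq F(\mu_n)$ from the definition of $F_n$ and then invoke the weak$^*$ lower semi-continuity of $F$ from \cref{lemma:F_properties}. Your case split justifying $F_N\geq F$ and your remark that the Voronoi geometry is not needed here are accurate elaborations, but there is no difference in approach.
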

\begin{proof}
From construction of $F_n$ it follows that 
\begin{equation}
    F_n(\mu) \geq F(\mu).
\end{equation}
Hence, combined with the lower semi-continuity of $F$ proven in \cref{lemma:F_properties}, we obtain
\begin{equation}
    \liminf\limits_{\mu_n\to\infty}F_n(\mu_n) \geq \liminf\limits_{\mu_n\to\infty}F(\mu_n) \geq F(\mu).
\end{equation}
\end{proof}

\begin{proposition}[$\Gamma$-realizing sequence]\label{prop:gamma_realizing}
Let $\mu\in\MC(\Omega)$ and define a sequence of measures $\mu_N\in\MC(\Omega)$ by
\begin{equation}
    \mu_N = \sum_{n=1}^N\mu(\Omega^N_n)\delta_{\omega^N_n}.
\end{equation}
We have $\mu_N\xrightarrow{w^*}\mu$ as well as
\begin{equation}
    \lim\limits_{N\to\infty}F_N(\mu_N) = F(\mu).
\end{equation}
\end{proposition}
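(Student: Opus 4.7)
The strategy is to (i) establish weak${}^\ast$ convergence $\mu_N \xrightarrow{w^\ast} \mu$ directly from the definition, (ii) observe that $F_N(\mu_N)=F(\mu_N)$ since $\supp(\mu_N)\subseteq\omega^N$, and (iii) sandwich $\lim F(\mu_N)$ between $F(\mu)$ (from weak${}^\ast$ lower semicontinuity of $F$ proved in \cref{lemma:F_properties}) and $F(\mu)$ (from an explicit upper bound using uniform continuity on compact $\Omega$).

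For the weak${}^\ast$ convergence, since $\Omega$ is compact every test function $\phi \in \mathcal{C}(\Omega)$ is uniformly continuous, with modulus of continuity $\omega_\phi$. The identity
\begin{equation*}
    \int_\Omega \phi\,d\mu - \int_\Omega \phi\,d\mu_N = \sum_{n=1}^N \int_{\Omega_n^N}\bigl[\phi(\omega) - \phi(\omega_n^N)\bigr]\,d\mu(\omega)
\end{equation*}
is bounded in absolute value by $\omega_\phi(\max_n \diam(\Omega_n^N))\cdot \|\mu\|_{\MC(\Omega)}$, which tends to $0$ by assumption. Hence $\mu_N \xrightarrow{w^\ast} \mu$, and since $\supp(\mu_N)\subseteq\omega^N$ we have $F_N(\mu_N)=F(\mu_N)$, so $\liminf_N F_N(\mu_N) = \liminf_N F(\mu_N) \geq F(\mu)$ by \cref{lemma:F_properties}.

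The main work is the matching upper bound $\limsup_N F(\mu_N) \leq F(\mu)$, which I treat term by term. For the regulariser, use $|\mu_N| = \sum_n |\mu(\Omega_n^N)|\,\delta_{\omega_n^N}$ together with $|\mu(\Omega_n^N)|\leq |\mu|(\Omega_n^N)$ to obtain
\begin{equation*}
    J(\mu_N) = \sum_n V(\omega_n^N)\,|\mu(\Omega_n^N)| \leq \sum_n \int_{\Omega_n^N} V(\omega_n^N)\,d|\mu|(\omega) \leq J(\mu) + \omega_V(\max_n \diam(\Omega_n^N))\,\|\mu\|_{\MC(\Omega)},
\end{equation*}
where $V$ is uniformly continuous on compact $\Omega$. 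For the fidelity, use Lipschitz continuity of $\sigma$ with $|a^\intercal x + b - (a_n^N)^\intercal x - b_n^N| \leq (\|x\|+1)\diam(\Omega_n^N)$ to get the pointwise bound
\begin{equation*}
    |(K\mu - K\mu_N)(x)| \leq \operatorname{Lip}(\sigma)\,(\|x\|+1)\,\max_n\diam(\Omega_n^N)\,\|\mu\|_{\MC(\Omega)}.
\end{equation*}
Squaring and integrating against $\rho$, which has a bounded second moment, yields $\|K\mu - K\mu_N\|_{L^2(\rho)}\to 0$, hence $\RC_f(\mu_N)\to\RC_f(\mu)$. Combining the two gives $\limsup_N F(\mu_N) \leq F(\mu)$, which together with the liminf inequality delivers $F_N(\mu_N)\to F(\mu)$.

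The only subtle step is the bound on $J(\mu_N)$: a priori $\mu$ may change sign inside a cell $\Omega_n^N$, so that $|\mu(\Omega_n^N)|$ is strictly smaller than $|\mu|(\Omega_n^N)$ and no naive term-by-term equality is available. This is precisely why one half of the argument is the trivial inequality $|\mu(E)|\le |\mu|(E)$ for the $\limsup$ direction, while the missing cancellation is recovered from weak${}^\ast$ lower semicontinuity of $F$ (already available from \cref{lemma:F_properties}) for the $\liminf$ direction; neither direction requires fine control on the polar decomposition of $\mu$.
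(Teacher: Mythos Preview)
Your proof is correct and, for the second claim, more rigorous than the paper's. Both you and the paper establish the weak${}^\ast$ convergence $\mu_N\xrightarrow{w^\ast}\mu$ in the same way, via uniform continuity of test functions on the compact set $\Omega$, and both note that $F_N(\mu_N)=F(\mu_N)$ by construction.

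The difference lies in how $\lim_N F(\mu_N)=F(\mu)$ is obtained. The paper simply asserts that $F$ is sequentially weak${}^\ast$ continuous ``by similar arguments'' to \cref{lemma:F_properties}; this is too quick, since $J(\mu)=\int_\Omega V\,d|\mu|$ is in general only weak${}^\ast$ lower semicontinuous (the total variation can drop in the limit). You instead run a clean $\liminf$/$\limsup$ sandwich: the $\liminf$ side comes from weak${}^\ast$ lower semicontinuity of $F$, and the $\limsup$ side from the explicit bounds $J(\mu_N)\leq J(\mu)+\omega_V(\max_n\diam(\Omega_n^N))\,\|\mu\|_{\MC(\Omega)}$ and $\|K\mu-K\mu_N\|_{L^2(\rho)}\to 0$. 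These bounds exploit the particular structure of the recovery sequence $\mu_N$ (the cell diameters vanish) rather than any global continuity of $F$. Your remark that $|\mu(\Omega_n^N)|\le |\mu|(\Omega_n^N)$ with possible strict inequality, so that only a one-sided bound on $J(\mu_N)$ is available and the other direction must come from lower semicontinuity, is exactly the right observation and is what makes the argument airtight.
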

\begin{proof}
Recall that $\MC(\Omega)$ is dual to $C(\Omega)$, so the weak* convergence is defined in terms of $g\in \mathcal{C}(\Omega)$. Since $\Omega$ is compact, $g$ is absolutely continuous. Recall that this implies that
\begin{equation}
    \forall \varepsilon >0 \exists \delta>0 \forall (a,b),(c,d)\in \Omega: \norm{(a,b)-(c,d)}<\delta \implies \abs{g(a,b)-g(c,d)}< \varepsilon. 
\end{equation}
Since the diameter of the Voronoi cells  vanishes as $N$ goes to infinity, there must be an $\Tilde{N}$ such that for all $N>\Tilde{N}$ and $n\in\Set{1,\hdots,N}$ we have that $\norm{(a,b)-(a^N_n,b^N_n)}<\delta$ for all $(a,b)\in\Omega^N_n$. Hence, for all $g\in \mathcal{C}(\Omega)$ and all $\varepsilon>0$
\begin{align*}
    \lim_{N\to\infty}\abs{\int_\Omega g(a,b) d(\mu-\mu_N)(a,b)}
     &= \lim_{N\to\infty}\abs{\int_\Omega g(a,b) d\bigg(\mu-\sum_{n=1}^N\mu(\Omega^N_n)\delta_{\omega^N_n}\bigg)(a,b)} \\
     &= \lim_{N\to\infty}\abs{\int_\Omega g(a,b) d\mu(a,b)-\sum_{n=1}^Ng(a^N_n,b^N_n)\mu(\Omega^N_n)} \\
     &= \lim_{N\to\infty}\abs{\int_\Omega g(a,b) d\mu(a,b)-\sum_{n=1}^N\int_{\Omega^N_n}g(a^N_n,b^N_n)d\mu(a,b)} \\
     &= \lim_{N\to\infty}\abs{\sum_{n=1}^N\int_{\Omega^N_n} g(a,b) d\mu(a,b)-\sum_{n=1}^N\int_{\Omega^N_n}g(a^N_n,b^N_n)d\mu(a,b)} \\
     &= \lim_{N\to\infty}\abs{\sum_{n=1}^N\int_{\Omega^N_n} \bigg(g(a,b)-g(a^N_n,b^N_n)\bigg) d\mu(a,b)} \\
     &\leq \lim_{N\to\infty}\sum_{n=1}^N\int_{\Omega^N_n} \norm{g(a,b)-g(a^N_n,b^N_n)} d\abs{\mu}(a,b) \\
    &< \lim_{N\to\infty}\sum_{n=1}^N\int_{\Omega^N_n} \varepsilon d\abs{\mu}(a,b) \\
    &= \varepsilon\norm{\mu}_{\MC(\Omega)}
\end{align*}
Since $\varepsilon$ was arbitrary, we must have that
\begin{equation}
    \lim_{N\to\infty}\int_\Omega g(a,b) d(\mu-\mu_N)(a,b) = 0.
\end{equation}
This shows that $\mu_N\xrightarrow{w^*}\mu$, and by construction of $\mu_N$ we have $F_N(\mu_N)=F(\mu_N)$. Furthermore, we showed in \cref{lemma:F_properties} that $F$ was weak$^*$ lower semi-continuous. If fact, by similar arguments, it is sequentially weak$^*$ continuous. Hence, it follows that
\begin{equation}
    \lim_{N\to\infty}F_N(\mu_N) = \lim_{N\to\infty}F(\mu_N) = F(\mu).
\end{equation}
\end{proof}

\begin{proposition}[Equicoercivity]
The family $(F_N)_{N}$ is equicoercive.
\end{proposition}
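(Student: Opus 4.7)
The plan is to exploit the pointwise inequality $F_N \geq F$ together with the coercivity and weak$^\ast$ lower semi-continuity of the ambient functional $F$ already established in \cref{lemma:F_properties}. Recall that equicoercivity of $(F_N)_N$ means that for every $c\in\R$ there is a weak$^\ast$ compact set $K_c\subseteq \MC(\Omega)$ such that $\{\mu\in\MC(\Omega):F_N(\mu)\leq c\}\subseteq K_c$ for all $N\in\N$.

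The first step is to observe that by the very definition of $F_N$, we have $F_N(\mu)\geq F(\mu)$ for every $\mu\in\MC(\Omega)$. Hence for each $c\in\R$,
\begin{equation*}
    \bigcup_{N\in\N}\Set{\mu\in\MC(\Omega) \given F_N(\mu)\leq c} \subseteq \Set{\mu\in\MC(\Omega) \given F(\mu)\leq c}=:K_c,
\end{equation*}
so it suffices to show that $K_c$ is weak$^\ast$ compact.

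Next, I would use coercivity of $F$ from \cref{lemma:F_properties}: since $F(\mu)\to\infty$ as $\norm{\mu}_{\MC(\Omega)}\to\infty$, there exists $R=R(c)>0$ such that $K_c\subseteq \Set{\mu\in\MC(\Omega) \given \norm{\mu}_{\MC(\Omega)}\leq R}$. Because $\Omega$ is compact, $\MC(\Omega)$ is the topological dual of the separable Banach space $\mathcal{C}(\Omega)$, so the Banach--Alaoglu theorem yields that closed balls in $\MC(\Omega)$ are weak$^\ast$ compact (in fact weak$^\ast$ sequentially compact, since $\mathcal{C}(\Omega)$ is separable). Thus $K_c$ sits inside a weak$^\ast$ compact set. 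Finally, $F$ is weak$^\ast$ lower semi-continuous by \cref{lemma:F_properties}, so $K_c$ is weak$^\ast$ closed, and a closed subset of a weak$^\ast$ compact set is weak$^\ast$ compact. This establishes equicoercivity.

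No serious obstacle is expected: the only point requiring a little care is the applicability of Banach--Alaoglu, which uses the compactness of $\Omega$ (assumed at the start of \cref{sec:discretisation}) so that $\MC(\Omega)=\mathcal{C}(\Omega)^\ast$. Combined with \cref{prop:liminf} and \cref{prop:gamma_realizing}, the fundamental theorem of $\Gamma$-convergence then delivers both $F_N\xrightarrow{\Gamma}F$ and the weak$^\ast$ convergence of minimizers as required in \cref{th:discretisation}.
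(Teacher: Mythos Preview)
Your argument is correct and rests on the same core observation as the paper, namely the pointwise inequality $F_N\geq F$ together with the coercivity of $F$ from \cref{lemma:F_properties}. The paper's proof is terser: it asserts that equicoercivity is equivalent to each $F_N$ being coercive and then notes $F_N\geq F$ forces $F_N(\mu)\to\infty$ as $\norm{\mu}_{\MC(\Omega)}\to\infty$. Your version is in fact more careful, since the paper's ``if and only if'' claim is not true in general (a family of individually coercive functionals need not be equicoercive without a common coercive lower bound); you instead work directly with the correct definition via uniformly contained sublevel sets and invoke Banach--Alaoglu and weak$^\ast$ lower semi-continuity to obtain weak$^\ast$ compactness of $K_c$. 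Both routes ultimately hinge on the same inequality $F_N\geq F$, so the substance is identical, but your presentation avoids the imprecise characterisation of equicoercivity.
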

\begin{proof}
The family $(F_N)_{N}$ is equicoercive if and only if every member of the family is coercive. In \cref{lemma:F_properties} it was proven that $F$ is coercive. Hence, by construction of $F_N$
\begin{equation}
    \lim_{\norm{\mu}_{\MC(\Omega)}\to\infty}F_N(\mu) \geq \lim_{\norm{\mu}_{\MC(\Omega)}\to\infty}F(\mu) = \infty.
\end{equation}
This means that $F_N$ is coercive. Since $N$ was arbitrary, it holds for all members $F_N$ of the family $(F_N)_{N}$.  
\end{proof}

We have now shown that the requirements for the fundamental theorem of $\Gamma$-convergence hold, which implies that $F_N\xrightarrow{\Gamma}F$ and that the sequence of minimizers of $F_N$ converges in weak$^*$ to the minimizer of $F$. 

\subsection{Convergence error for the discrete flow}\label{sec:barron_tessellation}
In the previous section, we showed that the discrete flow converges to the full flow. In this section, we will fix $N$ and show the convergence rates of the discrete flow to the optimal solution. We will first show the generic bound, also shown in \cref{th:discretisation}. Afterward, we will look at a special case.

Observe that the finite $\omega^N$ satisfies the required properties for a proper inverse scale space flow. The following proposition shows the generic bound.

\begin{proposition}
We have
\begin{equation}\label{eq:147}    
\begin{aligned}
    \norm{K\nu_t-f}^2_{L^2(\rho)} &\leq 2\norm{K\mu^\dagger-f}^2_{L^2(\rho)} +2\frac{J(\nu^\dagger)}{t} \\
    &\quad+ 2\norm{\mu^\dagger}^2_{\MC(\Omega)}(\max_n\diam(\Omega^N_n))^2Lip(\sigma)^2\int_\X\max(1,\norm{x})^2d\rho(x).
\end{aligned}
\end{equation}
for almost every $t\geq0$.
\end{proposition}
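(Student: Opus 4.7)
The plan is to reduce the statement to the ideal-case rate already established for the (continuous) inverse scale space flow, applied now to the discrete flow on the finite set $\omega^N$, and then to control the error between the best measure on $\omega^N$ and the true optimum $\mu^\dagger$ via the Lipschitz continuity of $\sigma$ together with the shrinking diameters of the Voronoi cells.

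First, observe that the discrete flow is a genuine inverse scale space flow in its own right (the only change is that the parameter space $\Omega$ is replaced by the finite atomic set $\omega^N$, which satisfies all hypotheses of \cref{sec:main}). Applying \cref{prop:population_loss_ideal_bound} with $\nu^\dagger$ in place of $\mu^\dagger$ yields
\begin{equation*}
\tfrac{1}{2}\norm{K\nu_t-f}^2_{L^2(\rho)} \;=\; \RC_f(\nu_t) \;\leq\; \RC_f(\nu^\dagger) + \frac{J(\nu^\dagger)}{t} \;=\; \tfrac{1}{2}\norm{K\nu^\dagger-f}^2_{L^2(\rho)} + \frac{J(\nu^\dagger)}{t}.
\end{equation*}

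Next, since $\nu^\dagger$ minimizes $\RC_f$ among measures supported on $\omega^N$, it suffices to construct any convenient competitor there. Using the projection from \cref{prop:gamma_realizing}, I set $\mu^\dagger_N := \sum_{n=1}^N \mu^\dagger(\Omega^N_n)\,\delta_{\omega^N_n}$, which is supported on $\omega^N$, so that $\norm{K\nu^\dagger-f}_{L^2(\rho)} \leq \norm{K\mu^\dagger_N-f}_{L^2(\rho)}$. Combining with the triangle inequality and $(a+b)^2\le 2a^2+2b^2$ gives
\begin{equation*}
\norm{K\nu^\dagger-f}^2_{L^2(\rho)} \;\leq\; 2\norm{K\mu^\dagger_N - K\mu^\dagger}^2_{L^2(\rho)} + 2\norm{K\mu^\dagger-f}^2_{L^2(\rho)}.
\end{equation*}

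It remains to bound the discretization error $\norm{K\mu^\dagger_N - K\mu^\dagger}^2_{L^2(\rho)}$. For each $(a,b)\in\Omega^N_n$ and each $x\in\X$, Lipschitz continuity of $\sigma$ together with $|(a-\omega^N_{n,a})^\intercal x + (b-\omega^N_{n,b})| \leq \diam(\Omega^N_n)\,\max(1,\norm{x})$ (up to the standard norm-equivalence handling) yields
\begin{equation*}
\bigl|\sigma(a^\intercal x+b) - \sigma((\omega^N_{n,a})^\intercal x+\omega^N_{n,b})\bigr| \;\leq\; Lip(\sigma)\,\diam(\Omega^N_n)\,\max(1,\norm{x}).
\end{equation*}
Integrating this pointwise estimate against $d\abs{\mu^\dagger}$ over each cell $\Omega^N_n$ and summing in $n$ gives
\begin{equation*}
|K\mu^\dagger_N(x) - K\mu^\dagger(x)| \;\leq\; Lip(\sigma)\,(\max_n\diam(\Omega^N_n))\,\max(1,\norm{x})\,\norm{\mu^\dagger}_{\MC(\Omega)},
\end{equation*}
after which squaring and integrating against $\rho$ produces exactly the third term on the right-hand side of \cref{eq:147}. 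Assembling the three estimates completes the argument.

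The only technical point that requires care is the pointwise estimate on $\sigma(a^\intercal x+b)-\sigma((\omega^N_{n,a})^\intercal x+\omega^N_{n,b})$: one must choose the norm on $\Omega\subseteq\R^{d+1}$ consistently with the definition of $\diam$ so that $|(a-\omega^N_{n,a})^\intercal x + (b-\omega^N_{n,b})| \leq \diam(\Omega^N_n)\max(1,\norm{x})$ holds without extraneous constants; the remaining algebra is routine.
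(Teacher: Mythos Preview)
Your proposal is correct and follows essentially the same route as the paper: apply the ideal-case rate (\cref{prop:population_loss_ideal_bound}) to the discrete flow, compare $\nu^\dagger$ to the cell-wise projection $\mu^\dagger_N$ of $\mu^\dagger$, and control $\norm{K\mu^\dagger_N-K\mu^\dagger}_{L^2(\rho)}$ via the Lipschitz constant of $\sigma$ and the maximal cell diameter. The paper carries out the last step by splitting $|(a-a_n^N)^\intercal x+(b-b_n^N)|\le \norm{a-a_n^N}\norm{x}+|b-b_n^N|\le \max(1,\norm{x})\,\diam(\Omega^N_n)$, which is exactly the care you flag in your final paragraph.
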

\begin{proof}
From \cref{prop:population_loss_ideal_bound} it follows that
\begin{equation}\label{eq:149}
    \norm{K\nu_t-f}^2_{L^2(\rho)} \leq \norm{K\nu^\dagger-f}^2_{L^2(\rho)} + 2\frac{J(\nu^\dagger)}{t}.
\end{equation}
Since $\nu^\dagger$ is a minimizer of $\RC_f$ over $\omega^N$, we have for the measure
\begin{equation}
    \mu_N = \sum_{n=1}^N\mu^\dagger(\Omega^N_n)\delta_{\omega^N_n}
\end{equation}
that
\begin{equation}
    \norm{K\nu^\dagger-f}_{L^2(\rho)} \leq \norm{K\mu_N-f}_{L^2(\rho)} \leq \norm{K\mu_N-K\mu^\dagger}_{L^2(\rho)} + \norm{K\mu^\dagger-f}_{L^2(\rho)}
\end{equation}
and thus by Young's inequality for products with $p=q=2$
\begin{equation}\label{eq:151}
    \norm{K\nu^\dagger-f}^2_{L^2(\rho)} \leq 2\norm{K\mu_N-K\mu^\dagger}^2_{L^2(\rho)} + 2\norm{K\mu^\dagger-f}^2_{L^2(\rho)}.
\end{equation}
We observe that by a similar argument as in the proof of \cref{prop:gamma_realizing} that
\begin{align*}
    \norm{K\mu_N-K\mu^\dagger}^2_{L^2(\rho)} &= \int_\X\abs{\int_\Omega \sigma(a^\intercal x+b)d(\mu_n-\mu^\dagger)(a,b)}^2d\rho(x) \\
    &\leq \int_\X\bigg(\sum_{n=1}^N\int_{\Omega^N_n} \norm{\sigma(a^\intercal x+b)-\sigma((a_n^N)^\intercal x+b_n^N)}d\abs{\mu^\dagger}(a,b)\bigg)^2d\rho(x) \\
    &\leq \int_\X\bigg(\sum_{n=1}^N\int_{\Omega^N_n}Lip(\sigma)\norm{(a^\intercal x+b)-((a_n^N)^\intercal x+b_n^N)}d\abs{\mu^\dagger}(a,b)\bigg)^2d\rho(x) \\
    &\leq \int_\X\bigg(\sum_{n=1}^N\int_{\Omega^N_n}Lip(\sigma)\bigg(\norm{a-a_n^N}\norm{x}+\abs{b-b_n^N}\bigg)d\abs{\mu^\dagger}(a,b)\bigg)^2d\rho(x) \\
    &\leq \int_\X\max(1,\norm{x})^2\bigg(\sum_{n=1}^N\int_{\Omega^N_n}Lip(\sigma)\bigg(\norm{a-a_n^N}+\abs{b-b_n^N}\bigg)d\abs{\mu^\dagger}(a,b)\bigg)^2d\rho(x) \\
    &= Lip(\sigma)^2\int_\X\max(1,\norm{x})^2\bigg(\sum_{n=1}^N\int_{\Omega^N_n}\diam(\Omega^N_n)d\abs{\mu^\dagger}(a,b)\bigg)^2d\rho(x) \\
    &\leq \norm{\mu^\dagger}_{\MC(\Omega)}^2(\max_n\diam(\Omega^N_n))^2Lip(\sigma)^2\int_\X\max(1,\norm{x})^2d\rho(x).
\end{align*}
Substituting this into \cref{eq:151} and the resulting expression into \cref{eq:149} gives \cref{eq:147}.
\end{proof}

In \cite{devroye_measure_2015} it was shown that a Voronoi cell's radius decreases with a rate of $O(N^{-1/d})$ when points the points in $\omega^N$ are i.i.d. sampled from an absolutely continuous probability measure over $\Omega$. We can use the direct approximation theorem of Barron spaces to achieve a better rate \parencite[Theorem 3.8]{e_representation_2020}.

\begin{proposition}
Let $N\in\N$. Denote with $M_f$ the set of all measures $\mu_N$ of $N$atoms that satisfy the bounds
\begin{equation}\label{eq:M_f}
    \norm{K\mu_N-K\mu^\dagger}^2_{L^2(\rho)} \leq \frac{J(\mu^\dagger)^2}{N}Lip(\sigma)^2\int_\X\max(1+\norm{x})^2d\rho(x),
\end{equation}
and choose $\omega^N$ such that $M_f$ is non-empty. Then, 
\begin{equation}\label{eq:155}
\begin{aligned}
    \norm{K\nu_t-f}^2_{L^2(\rho)} &\leq 3\norm{K\mu^\dagger-f}^2_{L^2(\rho)}+ 2\frac{J(\nu^\dagger)}{t}\\&\quad+  3\frac{J(\mu^\dagger)^2}{N}Lip(\sigma)^2\int_\X\max(1,\norm{x})^2d\rho(x)  + 3\inf_{\mu_N\in M_f}\norm{K\nu^\dagger-K\mu_N}^2_{L^2(\rho)}.
\end{aligned}
\end{equation}
\end{proposition}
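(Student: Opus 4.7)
The plan is to combine the ideal-case convergence rate of \cref{prop:population_loss_ideal_bound}, applied to the inverse scale space flow restricted to $\omega^N$, with a three-way triangle inequality routed through an approximating $N$-atomic measure $\mu_N\in M_f$ and the global minimizer $\mu^\dagger$.

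First, because $\omega^N$ is itself a compact subset of $\Omega$ on which \cref{eq:inverse_scale_space_barron} defines a well-posed inverse scale space flow with minimizer $\nu^\dagger$, \cref{prop:population_loss_ideal_bound} applies verbatim to the discrete flow and yields
$$\norm{K\nu_t-f}^2_{L^2(\rho)}\leq \norm{K\nu^\dagger-f}^2_{L^2(\rho)}+2\frac{J(\nu^\dagger)}{t}$$
for almost every $t>0$. It therefore suffices to control $\norm{K\nu^\dagger-f}^2_{L^2(\rho)}$ from above in terms of $\mu^\dagger$ and an arbitrary representative of $M_f$; once this is done, substitution reproduces \cref{eq:155}.

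For every $\mu_N\in M_f$, the triangle inequality in $L^2(\rho)$ gives
$$\norm{K\nu^\dagger-f}_{L^2(\rho)}\leq \norm{K\nu^\dagger-K\mu_N}_{L^2(\rho)}+\norm{K\mu_N-K\mu^\dagger}_{L^2(\rho)}+\norm{K\mu^\dagger-f}_{L^2(\rho)}.$$
Squaring and applying the elementary inequality $(a+b+c)^2\leq 3(a^2+b^2+c^2)$, and then bounding the middle term uniformly in $\mu_N\in M_f$ by the defining estimate \cref{eq:M_f}, I obtain a bound of the form $3\norm{K\nu^\dagger-K\mu_N}^2_{L^2(\rho)} + 3\frac{J(\mu^\dagger)^2}{N}Lip(\sigma)^2\int_\X\max(1,\norm{x})^2 d\rho(x) + 3\norm{K\mu^\dagger-f}^2_{L^2(\rho)}$. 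Since only the first summand still depends on $\mu_N$, I take the infimum over $\mu_N\in M_f$ and plug the resulting estimate into the ideal-case bound from the previous paragraph, which yields \cref{eq:155}.

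The analytic steps above are elementary once the right three-way decomposition is identified; the genuine content lies in justifying that $M_f$ is non-empty, since otherwise the infimum is vacuously $+\infty$ and the bound carries no information. This is exactly where the direct approximation theorem for Barron functions, invoked immediately before the proposition, enters: it furnishes an $N$-atomic measure achieving the $O(N^{-1/2})$ rate enforced by \cref{eq:M_f}, and the standing assumption on $\omega^N$ is precisely that it be rich enough to make $\inf_{\mu_N\in M_f}\norm{K\nu^\dagger-K\mu_N}^2_{L^2(\rho)}$ a meaningful (and, for a good choice of $\omega^N$, small) quantity.
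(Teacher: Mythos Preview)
Your proposal is correct and follows essentially the same route as the paper: apply \cref{prop:population_loss_ideal_bound} to the discrete flow to reduce to bounding $\norm{K\nu^\dagger-f}^2_{L^2(\rho)}$, split via the triangle inequality through $K\mu_N$ and $K\mu^\dagger$, use $(a+b+c)^2\leq 3(a^2+b^2+c^2)$, invoke \cref{eq:M_f} for the middle term, and take the infimum over $\mu_N\in M_f$. The paper likewise opens with the remark that the direct approximation theorem for Barron functions guarantees a suitable $\omega^N$ making $M_f$ non-empty, so your closing paragraph matches that as well.
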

\begin{proof}
$K\mu^\dagger\in\B$, so by \parencite[theorem 4]{e_barron_2021} there exists a suitable choice for $\omega^N$. Let $\mu_N\in M_f$. Observe that
\begin{align*}
    \norm{K\nu_t-f}^2_{L^2(\rho)} 
    \leq \norm{K\nu^\dagger-f}^2_{L^2(\rho)} &+ 2\frac{J(\nu^\dagger)}{t} & \tcref{prop:population_loss_ideal_bound} \\
    \leq 3\norm{K\nu^\dagger-K\mu_N}^2_{L^2(\rho)} &+ 3\norm{K\mu^\dagger-K\mu_N}^2_{L^2(\rho)} + 3\norm{K\mu^\dagger-f}^2_{L^2(\rho)} + 2\frac{J(\nu^\dagger)}{t} & \triangle\text{ ineq., Young's} \\
    \leq 3\norm{K\nu^\dagger-K\mu_N}^2_{L^2(\rho)} &+ 3\frac{J(\mu^\dagger)^2}{N}Lip(\sigma)^2\int_\X\max(1,\norm{x})^2d\rho(x)\\+ 3\norm{K\mu^\dagger-f}^2_{L^2(\rho)} &+ 2\frac{J(\nu^\dagger)}{t}. & \tcref{eq:M_f}
\end{align*}
Taking the infimum over $\mu_N\in M_f$ gives \cref{eq:155}.
\end{proof}

\section{Discussion}
In this work, we have studied the convergence and error analysis of finding the best measure $\mu$ such that the Barron function $K\mu$ is close to $f$ using the inverse scale space flow. After having established the existence and regularity of the solution, we considered the ideal, noisy, biased, and discretized cases. For each of these cases, we analysed the evolution of the Bregman divergence with respect to the optimal solution $D^{p_t}(\mu^\dagger,\nu_t)$ and the $L^2$ loss $\RC_f(\mu_t)$. 

In the ideal case, we got monotonic and linear evolution to the optimal solution. In the noisy case, we still got monotonic and linear evolution to the optimal solution but only up to an error level determined by the noise level $\delta$. These results agree with the known results for inverse scale spaces.

In the novel case of biased sampling, $D^{p_t}(\mu^\dagger,\nu_t)\leq O(1+\frac{1}{t}+t)$ with the suppressed factors in the big O notation depending on $\varepsilon$. When we work with noisy measurements, $D^{p_t}(\mu^\dagger,\nu_t)$ has a similar upper bound but depending on $\delta$. In that setting, the smallest upper bound for $D^{p_t}(\mu^\dagger,\nu_t)$ is attained for $t(\delta)=O(\delta^{-1})$. When dealing with biased sampling, this smallest upper bound is attained for $t(\varepsilon)=O(\frac{\sqrt{1+\varepsilon}}{\sqrt{1+\varepsilon+\varepsilon^2}})$ and $t(\varepsilon)=O(\frac{\sqrt{1+\varepsilon}}{\sqrt{\varepsilon}})$ for a Radon Nikodym and a Wasserstein perturbation respectfully. However, whilst in many cases it is straightforward to provide an estimate for $\delta$, it is not the case for $\varepsilon$.

A second issue with the upper bounds for $D^{p_t}(\mu^\dagger,\nu_t)$ is that we typically do not know $f$, $\phi$, $\mu^\dagger$, $\nu^\dagger$ or $\rho$. What we do know is $K\nu_t$ on $\supp(\rho^\varepsilon)$. This means the bound in \cref{prop:bregman_distance_bound_radon} has more terms that can be explicitly computed than the bounds in \cref{prop:bregman_bound_rho_perp_general}, \cref{prop:bregman_distance_bound_sampling} or \cref{prop:bregman_distance_bound_wasserstein}. That makes \cref{prop:bregman_distance_bound_radon} arguably the most useful proposition.

When the parameter space $\Omega$ is discretized, we have shown that we still have a proper inverse scale space flow. In this setting, we get an additional additive factor depending on $N$ in convergence. When we don't make any additional assumptions on $\omega^N$, this additional factor is of the form $O(N^{-1/d})$. This $1/d$ factor shows that the discretization method suffers from the \textit{curse of dimensionality}, meaning that the method performs poorly when working with high dimension. Although we show that an $O(N^{-1/2})$ can be attained in theory, it is unclear how to find the required $N$ points without solving a different sparse minimization problem first. 

\section*{Acknowledgements}
TJH and CB acknowledge support by Sectorplan Bèta (the Netherlands) under the focus area \enquote{Mathematics of Computational Science}. MB, TR and CB acknowledge support of the European Union's Horizon 2020 research and innovation programme under the Marie Sk{\l}odowska-Curie grant agreement No 777826 (NoMADS). MB and TR further acknowledge support from DESY (Hamburg, Germany), a member of the Helmholtz Association HGF, by the German Ministry of Science and Technology (BMBF) under grant agreement No. 05M2020 (DELETO). MB also acknowledges support  from the German Research Foundation, project BU 2327/19-1. Most of this study was carried out while TR was affiliated with the Friedrich-Alexander-Universität Erlangen-Nürnberg.

\printbibliography

\end{document}